\newtheorem{theorem}{\bf Theorem}
\newtheorem{lemma}[theorem]{\bf Lemma}
\newtheorem{definition}[theorem]{\bf Definition}
\newtheorem{remark}[theorem]{Remark}
\newtheorem{corollary}[theorem]{Corollary}
\begin{document}

\title{A Homotopy Invariant Based on Convex Dissection Topology and a Distance Optimal Path Planning Algorithm}

\author{Jinyuan Liu, Minglei Fu, Andong Liu, Wenan Zhang,~\IEEEmembership{Member,~IEEE,} and Bo Chen,~\IEEEmembership{Member,~IEEE}
  \thanks{Manuscript received Month xx, 2xxx; revised Month xx, xxxx; accepted Month x, xxxx. This work was supported by the National Key Research and Development Program of China under Grant No. 2022YFE0121700.}
  \thanks{Jinyuan Liu, Minglei Fu, Andong Liu, Wenan Zhang, and Bo Chen are with the College of Information Engineering, Zhejiang University of Technology, Hangzhou, 310023, China.}
  \thanks{Minglei Fu is the corresponding author, and the phone: +86-571-85292552; fax: 86-571-85292552; e-mail: fuml@zjut.edu.cn.}}
\markboth{Journal of \LaTeX\ Class Files,~Vol.~14, No.~8, August~2021}%
{Shell \MakeLowercase{\textit{et al.}}: A Sample Article Using IEEEtran.cls for IEEE Journals}

\IEEEpubid{0000--0000/00\$00.00~\copyright~2021 IEEE}

\maketitle

\begin{abstract}
	The concept of path homotopy has received widely attention in the field of path planning in recent years. In this article, a homotopy invariant based on convex dissection for a two-dimensional bounded Euclidean space is developed, which can efficiently encode all homotopy path classes between any two points. Thereafter, the optimal path planning task consists of two steps: (i) search for the homotopy path class that may contain the optimal path, and (ii) obtain the shortest homotopy path in this class. Furthermore, an optimal path planning algorithm called CDT-RRT* (Rapidly-exploring Random Tree Star based on Convex Division Topology) is proposed. We designed an efficient sampling formula for CDT-RRT*, which gives it a tendency to actively explore unknown homotopy classes, and incorporated the principles of the Elastic Band algorithm to obtain the shortest path in each class. Through a series of experiments, it was determined that the performance of the proposed algorithm is comparable with state-of-the-art path planning algorithms. Hence, the application significance of the developed homotopy invariant in the field of path planning was verified. 
\end{abstract}


\begin{IEEEkeywords}
  Path Planning, Topology, Path Homotopy, Optimal Distance, Mobile Robots
\end{IEEEkeywords}

\section{Introduction}
\IEEEPARstart{P}{ath} planning, in robotics  \cite{tzafestas2018mobile} and autonomous driving \cite{li2015real}, aims to determine a continuous trajectory for a robot in an obstacle-free space, connecting the starting and target positions \cite{zhang2018multilevel,bopardikar2015multiobjective}. Path planning algorithms have various applications, including, service robot navigation \cite{navya2021analysis}, automatic inspection systems \cite{jianning2021beetle}, industrial automation \cite{lopez2017predictive}, autonomous vehicle systems \cite{berntorp2017path}, and robotic surgery \cite{zhao2022surgical}. Graph-search and sampling-based methods are two popular techniques for path planning in robotics. Graph-search algorithms, such as Dijkstra's algorithm \cite{dijkstra1959note} and A* \cite{hart1968formal}, solve the path planning problems applying graph theories on a discretised state space. These algorithms depend on the space discretisation and the computational cost grows exponentially as the problem scale increases. Sampling-based algorithms, such as Rapidly-exploring Random Tree (RRT) \cite{lavalle2001randomized} and Probabilistic RoadMap (PRM) \cite{kavraki1996probabilistic}, avoid the discretisation problems of graph-search techniques by randomly sampling the continuous state space. Furthermore, optimal variants, such as RRT* and PRM* \cite{karaman2011sampling}, are probabilistically complete and asymptotically optimal; hence, the probability to determine a solution approaches 100\% as the number of iterations approaches infinity. However, one of the drawbacks is that they suffer slow convergence to the optimal solution. Research on efficient and stable methods to determine optimal paths has attracted considerable attention in the field of path planning, and calculating the shortest path to the target position is essential for devices with limited energy \cite{muralidharan2019path,saito2017optimal}.\IEEEpubidadjcol

To reduce the computation time and memory usage of RRT*, the authors of RRT* suggested a pruning algorithm named Branch-and-Bound \cite{karaman2011anytime}, which determines the optimal route by periodically removing nodes with a cost higher than the optimal route cost. This method can effectively reduce the computational resource occupation of RRT*; however, it cannot improve the convergence speed of the solution to the optimal solution. The Quick RRT* (Q-RRT*) algorithm in \cite{jeong2019quick} accelerates the convergence of the RRT* algorithm by avoiding small turns through triangular inequalities in a Euclidean space. However, the optimisation effect of Q-RRT* on a path near obstacles is not evident.

Non-uniform sampling of the state space increases the convergence speed. Based on deductions, the sampling probability of some key state spaces can be increased, thus increasing the probability that the optimal route will be sampled. For example, Informed-RRT* considers that the optimal path has a higher probability between the starting point $x_{init}$ and ending point $x_{goal}$, thus the sampling probability increases within the hyperellipse, with $x_{init}$ and $x_{goal}$ as focal points, to improve the convergence speed \cite{gammell2014informed,gammell2018informed}. The authors of Informed-RRT* also combine Informed-RRT* with algorithms such as A* \cite{hart1968formal} and Fast Marching Tree* (FMT*) \cite{janson2015fast}. The resulting algorithm, Batched Informed Tree (BIT*), uses heuristics for all aspects of path cost to prioritise the search for high-quality paths and focus the search on improvements \cite{gammell2020batch}. Such methods guarantee probabilistic completeness and asymptotic optimality; however, they do not guarantee that the search acceleration for the optimal path will be efficient for all cases.

Graph-based path planning methods have been effective in solving road connectivity and fast planning problems. Kloetzer \textit{et al}. \cite{kloetzer2015optimizing} use the connection relationship of cells to determine a collision-free path that the robot can run. The map is decomposed into a set of simple cells, and the adjacency relationships among the cells are computed. Li Z \textit{et al}. \cite{li2020new} used a concave polygon convex decomposition and artificial bee colony algorithm, which has a high computational efficiency in specifying the optimal path. Path planning methods based on generalised Voronoi diagrams (GVD) have been widely studied recently \cite{chi2021generalized,wang2020optimal,huang2021path}. The premise of these methods is to use GVD to create heuristic topological maps, to guide the sampling process of RRT and further improve its path planning efficiency. However, these heuristic paths are not inevitably optimal and the GVD initialisation time for complex maps is extensive.

In the abovementioned studies, the planning method for graph-search typically increases the complexity of the graph to determine the optimal path. For the sampling-based planning method, the level of path optimality is proportional to the number of states sampled. However, these two methods significantly increase the time cost and memory usage of the algorithm. Therefore, in this article, we consider the following two questions for distance-optimal path planning: (i) Is it possible to effectively distinguish some similar paths (path homotopy), thus avoiding unproductive time used to determine the same class of paths? (ii) Is there some feature of the path class where the global optimal path is located to avoid wasting time on irrelevant path classes?

Hence, we have developed a study of homotopic path class encoders in two-dimensional (2D) bounded spaces. The contributions of our work can be summarized as follows:
\begin{itemize}
  \item{A topological encoder based on convex dissection for a 2D bounded Euclidean space is proposed, which can efficiently encode all homotopy path classes between any two points.}
  \item{A novel fast distance optimal path planning methods is developed for agents in 2D.}
  \item{A rigorous proof is provided for the theorems mentioned in the article, which is a guide to the application of the homotopy path class encoder based on convex dissection topology in path planning.}
\end{itemize}

The remainder of this article is organised as follows. The problem formulation and related mathematical definitions are presented in Section II. Section III describes the method of constructing, and the relevant properties, of a homomorphic topological graph. Section IV explains the details of our algorithms. The experimental results are presented and analysed in Section V. Section VI concludes this article.
\section{Problem Formulation}
\subsection{Mathematical Definitions}
\begin{definition}[Path]
  \label{defPath}
  Let $X$ be a continuous space, and the path in $X$ is a continuous function $f:I\to X$ from the interval $I=[0,1]$ to $X$. All paths in $X$ are denoted as $P(X)$. All paths starting at $x_0$ and ending at $x_1$ are denoted as $P(X;x_0,x_1)$. All loops with $x$ as the base point are denoted as $\Omega (X,x)$.
\end{definition}
\begin{definition}[Path Homotopy]
  \label{defPathH}
  Let $f,g\in P(X;x_0,x_1)$. If there is a continuous map $F:I\times I\to X$ such that $F(t,0)=f(t)$, $F(t,1)=g(t)$, $F(0,\tau )=x_0$, $F(1,\tau )=x_1$, then we say $f$ and $g$ are path homotopic, denoted as $f \backsimeq _p g$. The equivalence class of a path $f$ under this relation is called the homotopy class of $f$, often denoted $[f]$.
\end{definition}
\begin{definition}[Path Time Homotopy]
  \label{defPathTH}
  Let $f,g\in P(X)$, and $f$, $g$ are said to be path time homotopy if there is a continuous map $\varphi :I\to I$ and $\dot{\varphi}\geqslant 0$ such that $f(t)=g(\varphi(t))$, denoted as $f \cong _p g$.
\end{definition}
\begin{definition}[Product of Paths]
  \label{defProPath}
  For $f\in P(X;x_0,x_1)$,  $g\in P(X;x_1,x_2)$, define the product $*$ of $f$ and $g$ as,
  \begin{equation}
    \label{eq1}
    {f * g(t)} =
    \begin{cases}
      f(2 t),   & t \in[0,0.5),  \\
      g(2 t-1), & t \in[0.5,1] .
    \end{cases}
  \end{equation}
  Let $h(t)=f * g(t)$ and $h \in P(X; x_0, x_2)\subset P(X)$, denoted as $f*g\cong _p h$. The set $P(X)$ and the $*$ operation from a Groupoid $\left(P(X), *\right)$. The $*$ operation has the following properties:
  \begin{enumerate}
    \item{Associativity:  If $f*g$ and $g*h$ are meaningful, then $(f * g) * h \cong f *(g * h)$.}
    \item{Left and Right identity element: Given $x\in X$, let $e_x:I\to x$. If $f\in P(X;x_0,x_1)$, then
                \begin{equation}
                  \label{eq2}
                  e_{x_0} * f \cong_p f \text { and } f * e_{x_1} \cong_p f.
                \end{equation}
          }
    \item{Inverse element: Given $f\in P(X;x_0,x_1)$, we define the inverse of $f$ as $\bar{f}(t)=f(1-t)$, then
                \begin{equation}
                  \label{eq3}
                  f * \bar{f} \simeq_p e_{x_0} \text{ and } \bar{f} * f \simeq_p e_{x_1}.
                \end{equation}
          }
  \end{enumerate}
\end{definition}
For convenience of expression, the text will use $l^{x_1}_{x_0}$ to denote the line path from point $x_0$ to $x_1$, i.e., $l^{x_1}_{x_0}=x_1t+x_0(1-t)$.

\subsection{Path Planning Problem Formulation}
This study aims to develop a method to determine the distance optimal path of a mobile robot in a 2D bounded space. The problem description is defined as follows:

Let $X\subset\mathbb{R} ^2$ be the state space, the obstacle space and free space are denoted as $X_{obs}$ and $X_{free}=X/X_{obs}$, respectively. The path planning algorithm aims to determine a feasible path $f(t)$ such that
\begin{equation}
  \label{eq4}
  f\in P(X_{free};x_{init},x_{goal}),
\end{equation}
where $x_{init}$ denotes the initial state and $x_{goal}$ denotes the goal state. Let $P_f=P(X_{free};x_{init},x_{goal})$. The optimal path\footnote{In this study, $f^\circledast$ represents the global optimal path with the same start and end points as $f$, and $f^*$ represents the local optimal path in $[f]$.} planning problem can be defined as
\begin{equation}
  \label{eq5}
  f^\circledast = \arg \min _{f \in P_{f}} S(f),
\end{equation}
where $S:P(X)\to \mathbb{R}_{\geqslant 0}$ is a function of the length of the path, which is defined as\footnote{As the derivative of a path may not necessarily exist, defining the length function of a path as (6) is more rigorous than defining it as $S(f) = \int\|\dot{f}\|\,dt$.}
\begin{equation}
  \label{eq6}
  S(f)=\lim _{n\to  N^+}\sum_{t=0}^{n}\|f(t+n^{-1})-f(t)\|,
\end{equation}
where $\|\cdot\|$ is the 2-Norm in Euclidean space and $N^+$ is a sufficiently large positive integer.
\section{Homotopy Path Class Encoder}
In this section, we first present the method of performing a convex division of $X_{free}$ and constructing a topological graph that is homomorphic to $X_{free}$. Thereafter, the analysis of the characteristics of the topological graph is provided, and an encoder of the homotopy path class is constructed based on this topological graph.
\subsection{Polygon Fitting and Convex Division of Free Space}
\begin{figure}[!t]
  \centering
  \subfloat[]{\includegraphics[width=1.0in]{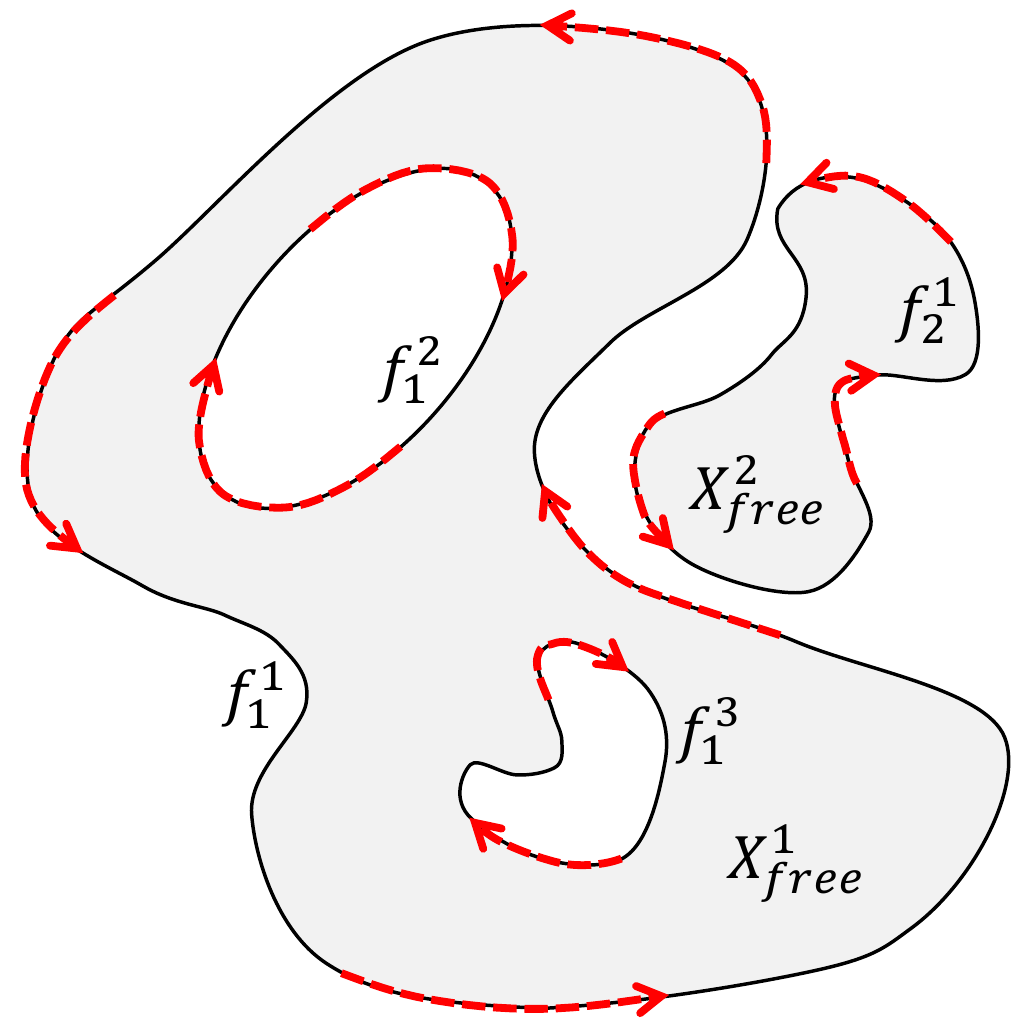}
    \label{fig_1_1}}
  \hfil
  \subfloat[]{\includegraphics[width=1.0in]{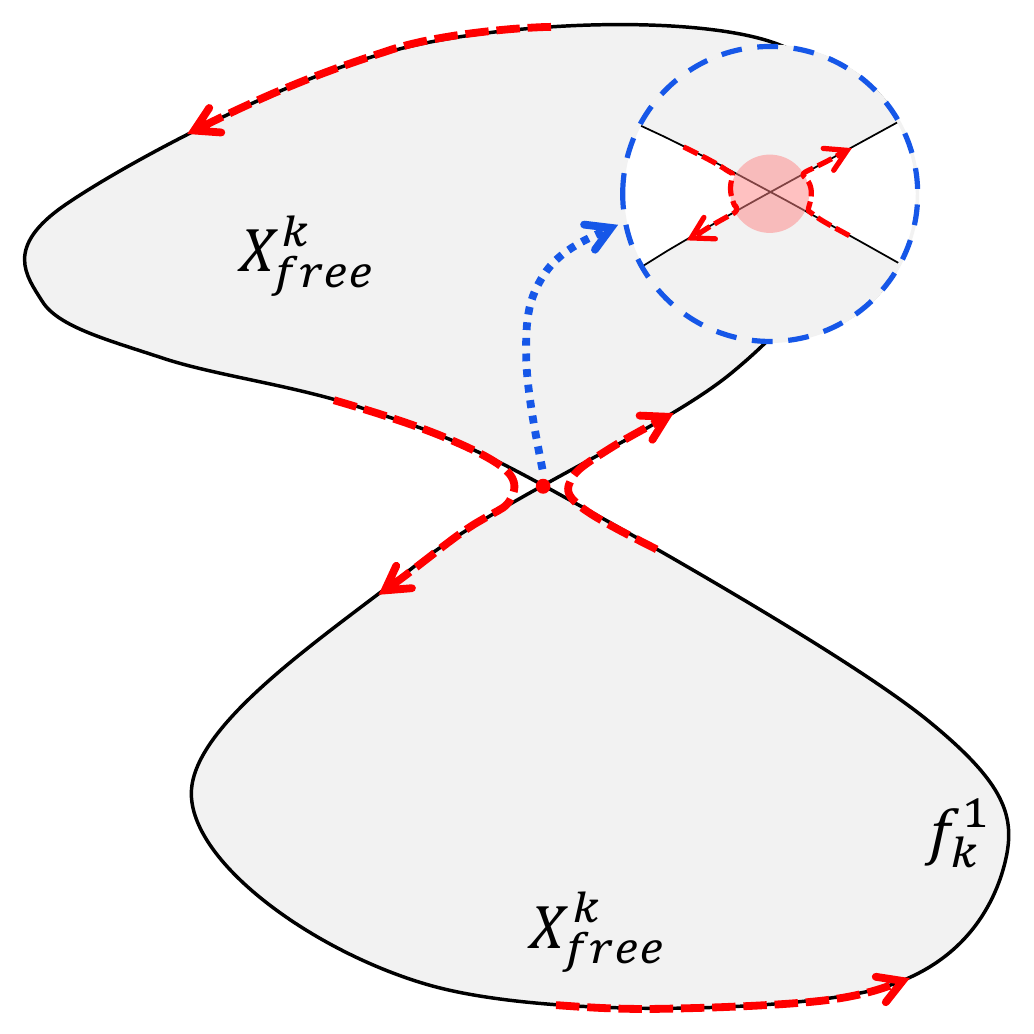}
    \label{fig_1_2}}
  \hfil
  \subfloat[]{\includegraphics[width=1.0in]{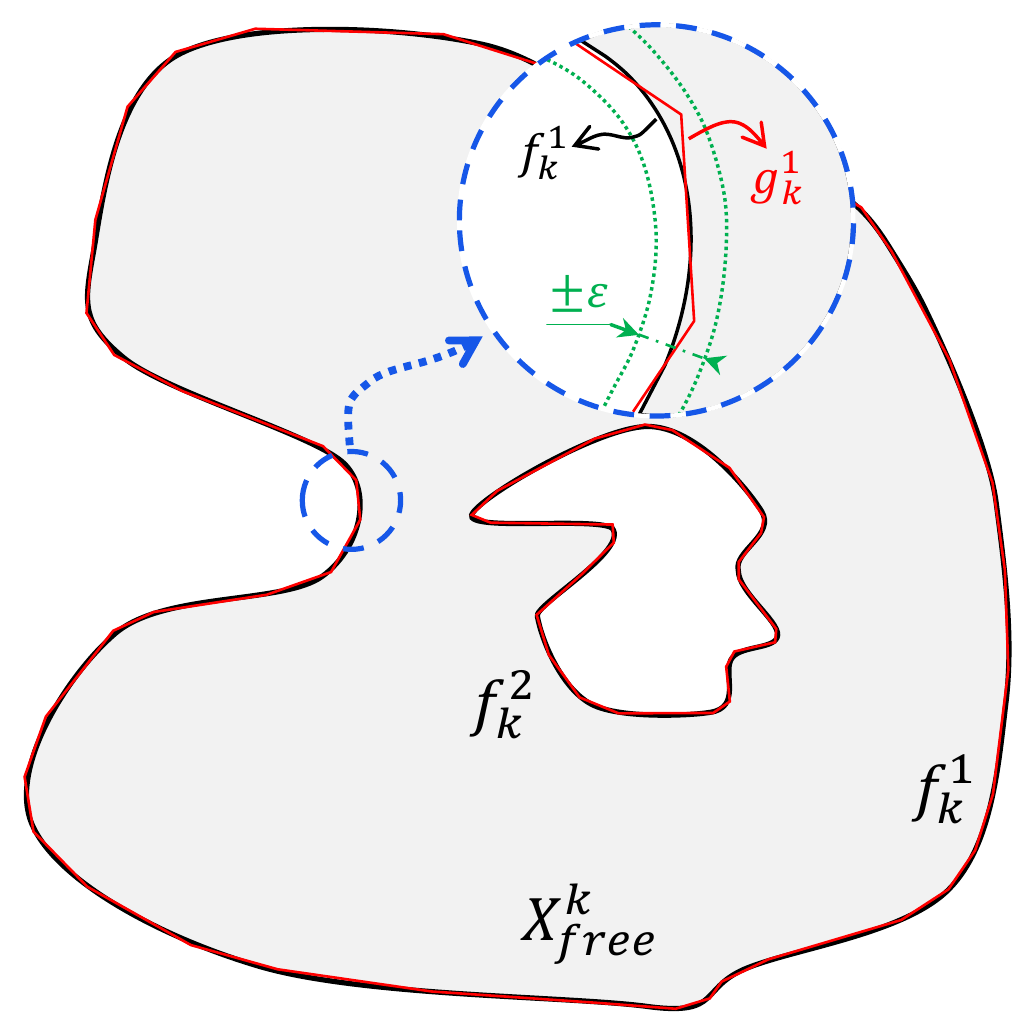}
    \label{fig_1_3}}
  \caption{Illustration of Free Space Fitting. (a) 2D bounded free space can be decomposed into a number of connected branches ($X^1_{free}$, $X^2_{free}$), which can generally be enclosed by a number of disjoint loops ($f^1_1$, $f^2_1$, $f^3_1$, $f^1_2$). (b) The neighborhood of the special point is merged into pointconnected component such that $X_{free}^k$ becomes non pointconnected component. (c) Use polygons ($g_k^1$, $g_k^2$) to fit the boundaries of connected branches, and then the simple polygons composed of $g_k^1$, $g_k^2$ realize the fitting of connected branches.}
  \label{fig_1}
\end{figure}
$X_{free}$ can be expressed as the union of multiple connected components as follows:
\begin{equation}
  \label{eq7}
  X_{free}=\bigcup_{n} X_{free }^{n},
\end{equation}
where $X_{free }^{n}$ is a connected component, and the various connected components are not connected to each other, as shown in Fig.~\ref{fig_1} (a). In a 2D space, the boundary of components that are disconnected can be represented by $K$ loops $f^k_n$ as follows:
\begin{equation}
  \label{eq8}
  f^k_n\in \Omega (X^n_{free}), \forall k\in \mathbb{N} ^K_1,
\end{equation}
\begin{equation}
  \label{eq9}
  f^{k_1}_n(t_1)\neq f^{k_2}_n(t_2), \forall k_1,k_2\in \mathbb{N} ^K_1, t_1,t_2\in [0,1),
\end{equation}
where $\mathbb{N} ^K_1$ represents a natural number from $1$ to $K$.

For pointconnected component, we can merge the domain of special points in $\mathbb{R} ^2$ into component; resulting in disconnected component points, as shown in Fig.~\ref{fig_1} (b). For any boundary $f^{k}_n(t)$ in $X_{free }^{n}$, there exists a point sequence $\{x_0,x_1,\cdots, x_M\}\subset \mathbb{R} ^2$, such that
\begin{equation}
  \label{eq10}
  g^k_n \cong_{p} \prod_{m=1}^M l^{x_m}_{x_{m-1}},
\end{equation}
and $\forall \varepsilon >0$ have
\begin{equation}
  \label{eq11}
  \sup (\| g^k_n(t)-f^k_n(t) \| ) < \varepsilon ,
\end{equation}
\begin{figure}[!t]
  \centering
  \subfloat[]{\includegraphics[width=1.5in]{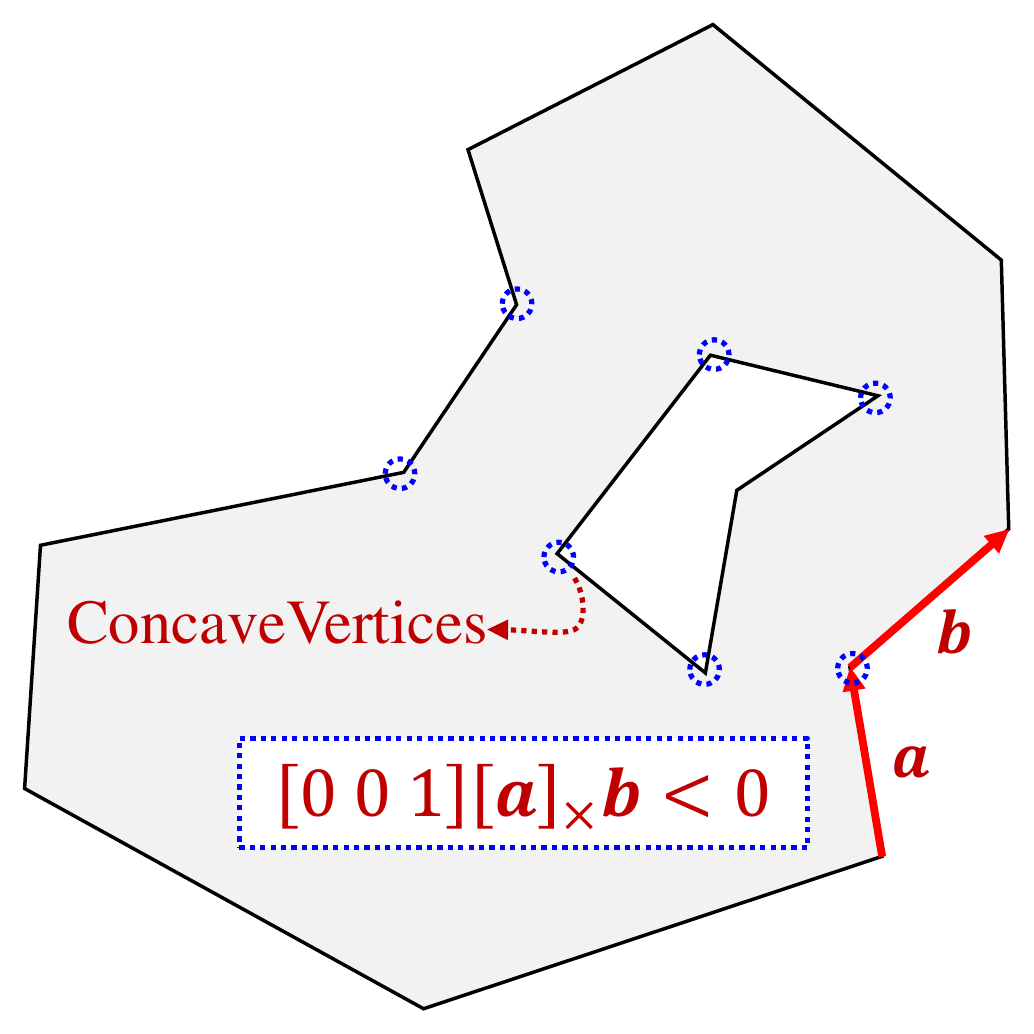}
    \label{fig_2_1}}
  \hfil
  \subfloat[]{\includegraphics[width=1.5in]{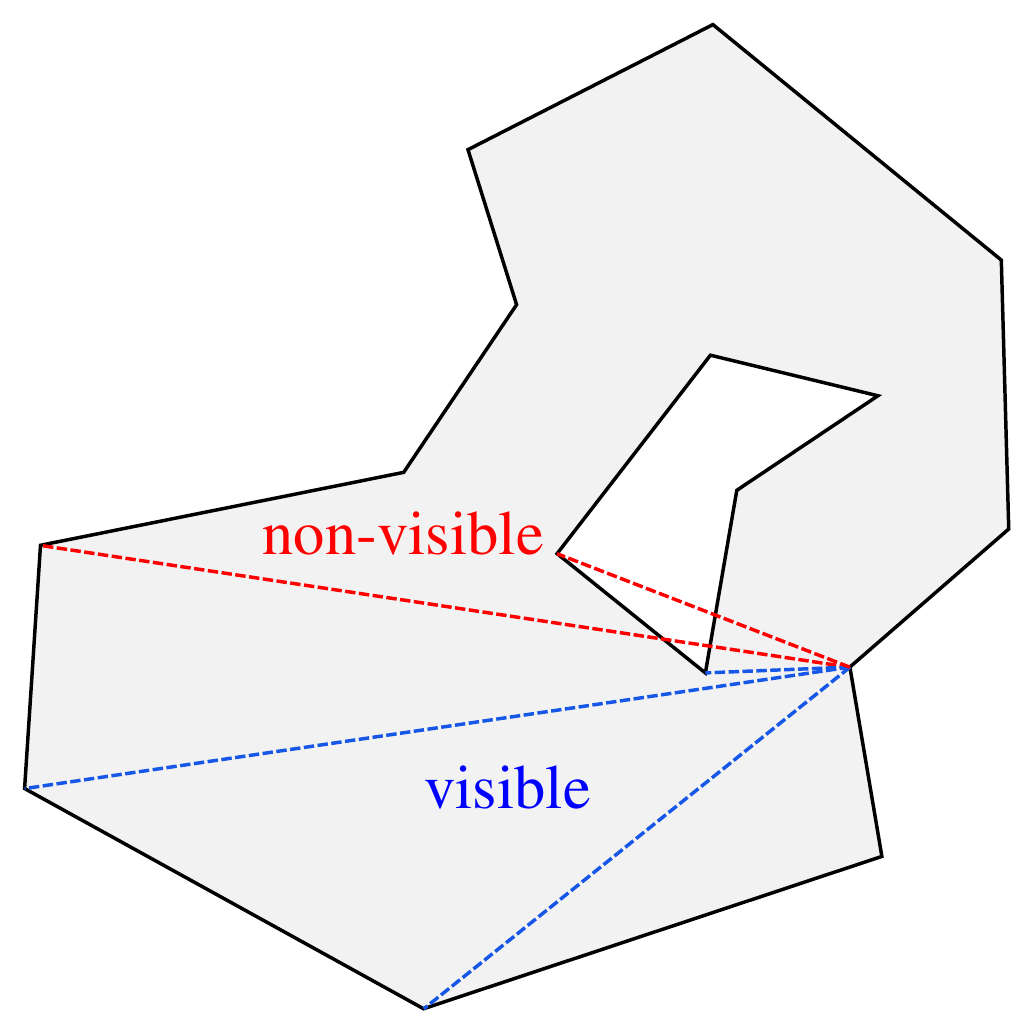}
    \label{fig_2_2}}
  \\
  \subfloat[]{\includegraphics[width=1.5in]{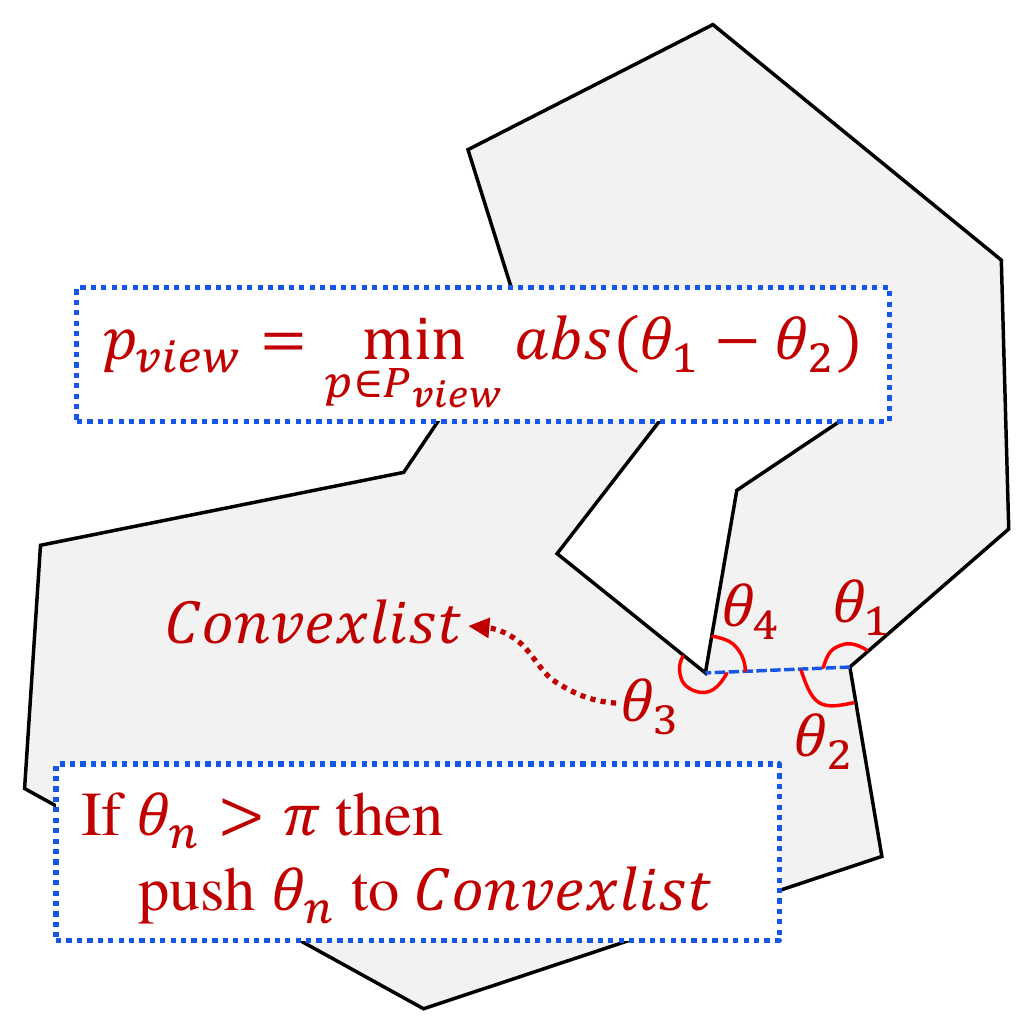}
    \label{fig_2_3}}
  \hfil
  \subfloat[]{\includegraphics[width=1.5in]{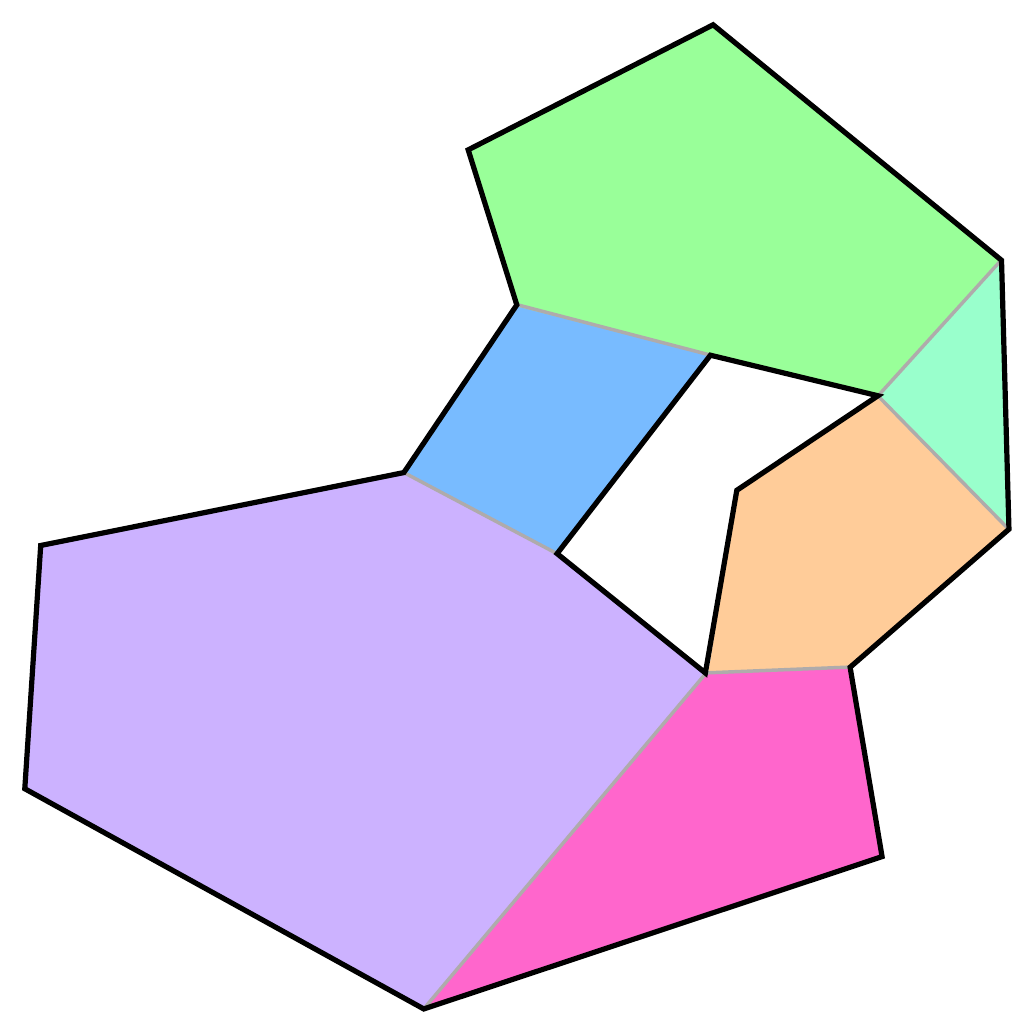}
    \label{fig_2_4}}
  \caption{Illustration of a simple polygon convex division. (a) Define the boundary direction of a simple polygon by the right-hand rule. A point where the cross product of two sides is less than 0 is a concave point. (b) Two points are not visible if the line connecting them intersects other lines. (c) When the weight is divided, the algorithm will select a relatively balanced visible point to divide the concave vertices. If the operation creates a new concave corner, it needs to be pushed into Concavelist. (d) Decomposition of a simple polygon into the union of multiple convex polygons.}
  \label{fig_2}
\end{figure}
Therefore, each connected component can be fitted by a simple polygon\footnote{A simple polygon is a polygon whose boundary does not intersect itself. Its boundary can be partitioned into two parts, an interior and an exterior.}. Then, a convex division of the simple polygon corresponding to $X^n_{free}$ is implemented, as shown in Algorithm~\ref{alg:alg1}. We firstly find all the concave vertices in the $SimplePolygon$, and push these concave vertices into $Concavelist$ (line 2). Thereafter, the iteration is commenced with the condition that  $Concavelist$ is not empty. At each iteration, a concave vertex $a_{con}$ is popped from the end of $Concavelist$ (line 4). Thereafter, the set of all viewable points of $A_{view}$ is obtained (line 5) \cite{navya2021analysis}.\footnote{According to the two ears theorem, a simple polygon always has at least one viewable points at each of its concave vertices.} Consequently, the algorithm uses WeightCut to select a vertex in $A_{view}$ to segment $a_{con}$, and obtain a cutline $l_{cut}$ and set $A_{cnew}$, here $A_{cnew}$ contains the newly generated concave vertices (line 6). The strategy we use for WeightCut is to make the two angles of the cut as equal as possible.\footnote{It is important to note that the WeightCut approach is used to select a suitable viewable point in $A_{view}$ for partitioning a concave vertex. Other strategies can be adopted for WeightCut, such as selecting the viewable points closest to the concave vertex or the viewable points with the maximum angle. However, the choice of strategy for selecting the viewable points does not affect our subsequent research.} Thereafter, $l_{cut}$ is added to the polygon (line 7). The newly generated concave vertex is pushed to the end of $Concavelist$ (line 8-9). Finally, Merging the edges in the polygons and the cutlines to generate the $polygonlist$ (line 10). Using this algorithm, depicted in Fig.~\ref{fig_2}, a convex division for each $X^n_{free}$ was achieved. Compared with the Voronoi Graph and Triangulation methods, this graph division method has the following characteristics:
\begin{enumerate}
  \item{$X_{free}$ is divided into the union of multiple convex polygons.}
  \item{No new vertices are created after the division, and the endpoints of the cutline remain the vertices of the original polygon.}
  \item{Paths from a convex polygon can only pass through its cutlines into other convex polygons.}
  \item{For any two points $x_0$, $x_1$ in a convex polygon $X_{con}$ can be connected by a line path $l^{x_1}_{x_0}$, and $l^{x_1}_{x_0} \in P(X_{con})$.}
\end{enumerate}
\begin{algorithm}[H]
  \caption{Convex Division of The Simple Polygon.}\label{alg:alg1}
  \begin{algorithmic}[1]
    \STATE {\textbf{Input: }}$SimplePolygon$
    \STATE $Concavelist \gets $FindConcave($SimplePolygon$)
    \STATE \textbf{while} $Concavelist\neq \varnothing $ \textbf{do}
    \STATE \hspace{0.5cm}$a_{con} \gets $PopEnd($Concavelist$)
    \STATE \hspace{0.5cm}$A_{view} \gets $ViewablePoint($a_{con}$)
    \STATE \hspace{0.5cm}$l_{cut},A_{cnew} \gets $ WeightCut($a_{con},A_{view}$)
    \STATE \hspace{0.5cm}AddLineToPolygon($l_{cut}$)
    \STATE \hspace{0.5cm}\textbf{if}  $A_{cnew}\neq \varnothing $ \textbf{then}
    \STATE \hspace{1.0cm}Push $A_{cnew}$ to the end of $Concavelist$
    \STATE Merging edges and cutlines get $polygonlist$.
    \STATE \textbf{return} $polygonlist$
  \end{algorithmic}
  \label{alg1}
\end{algorithm}
\subsection{Topology Graph and Encoder construction}
In this subsection, we present the construction of a topological graph using the divided polygons, discuss its relevant properties, and propose a homotopy path class encoder.

According to convex division, we can consider each convex polygon as a node and the cutting line as the line connecting these nodes. The construction of the topological graph is completed by connecting these nodes and lines. We denote the space of this topological graph as $X_{topo}$. The relevant mathematical representation of a path in $X_{topo}$ is defined by analogy to Euclidean space as follows:
\begin{figure}[!t]
  \centering
  \subfloat[]{\includegraphics[width=1.5in]{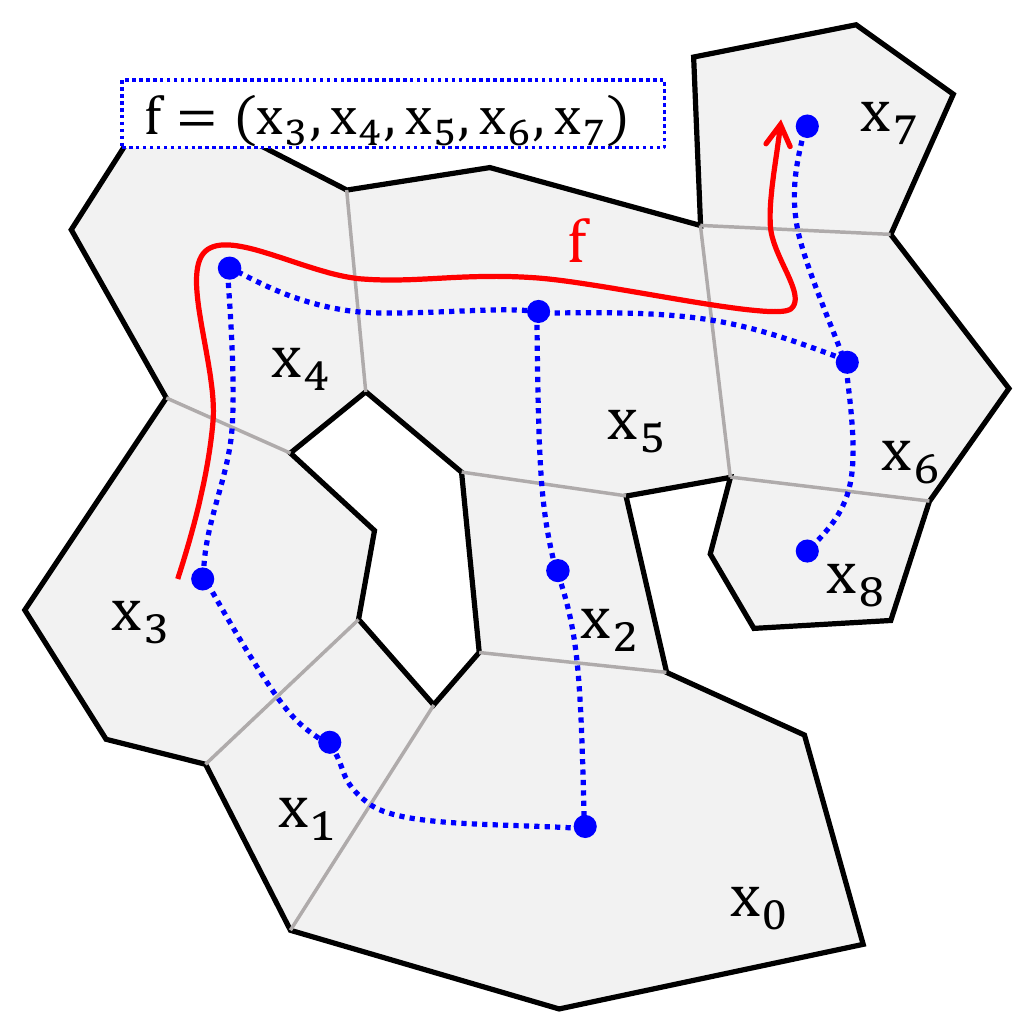}
    \label{fig_3_1}}
  \hfil
  \subfloat[]{\includegraphics[width=1.5in]{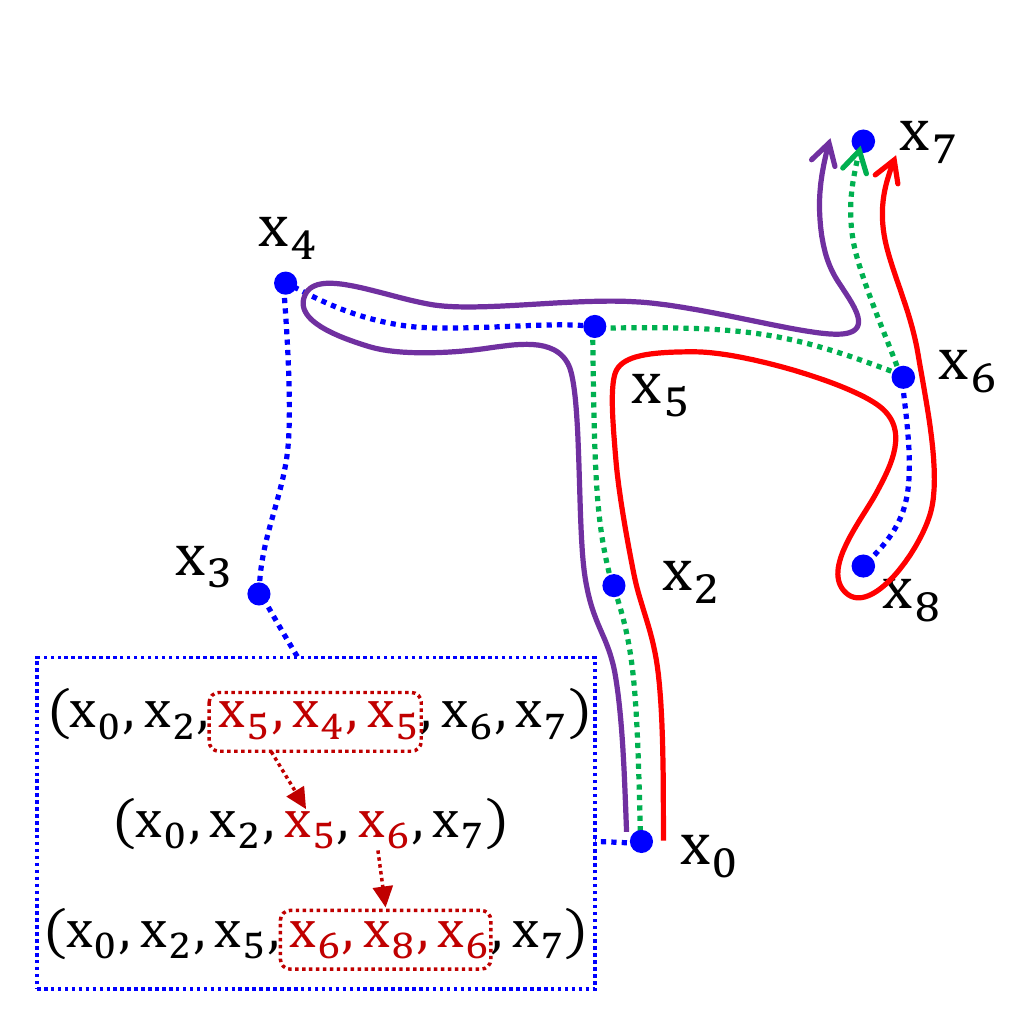}
    \label{fig_3_2}}
  \\
  \subfloat[]{\includegraphics[width=1.5in]{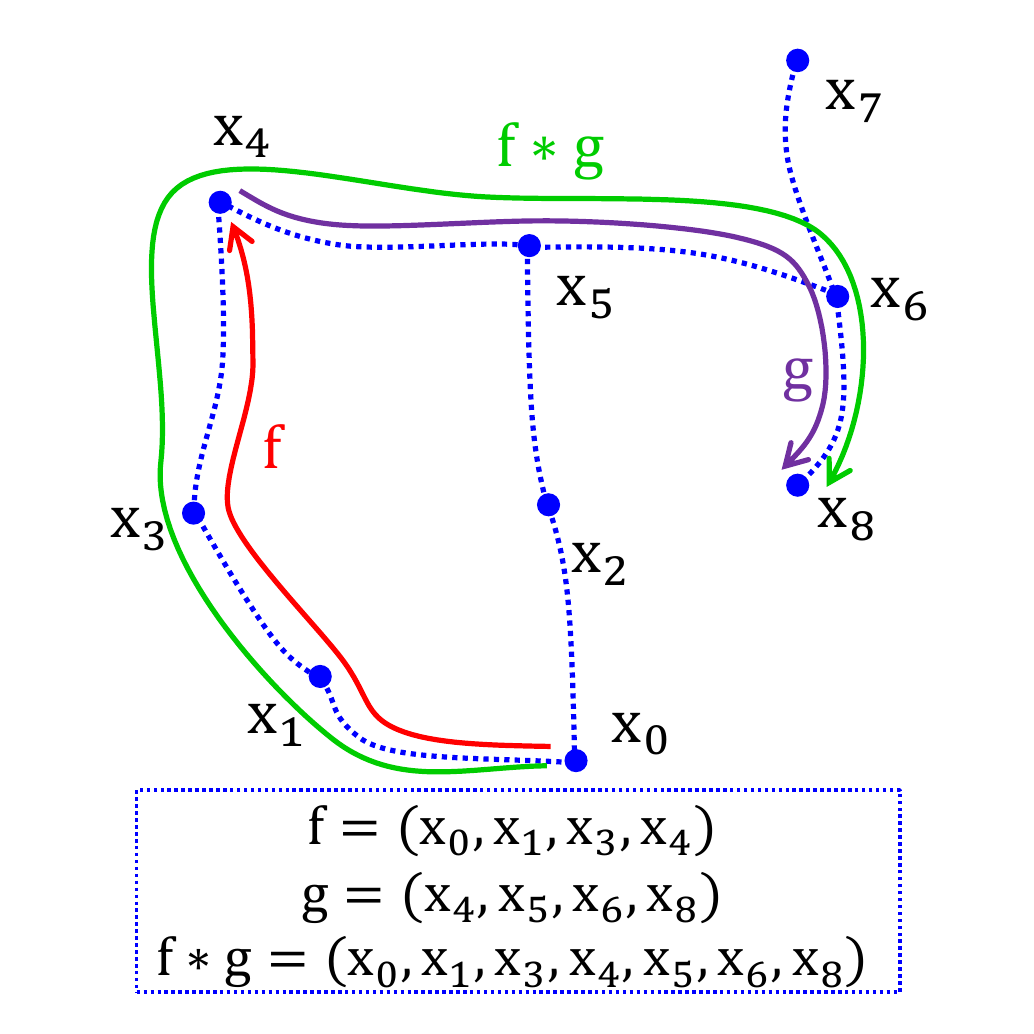}
    \label{fig_3_3}}
  \hfil
  \subfloat[]{\includegraphics[width=1.5in]{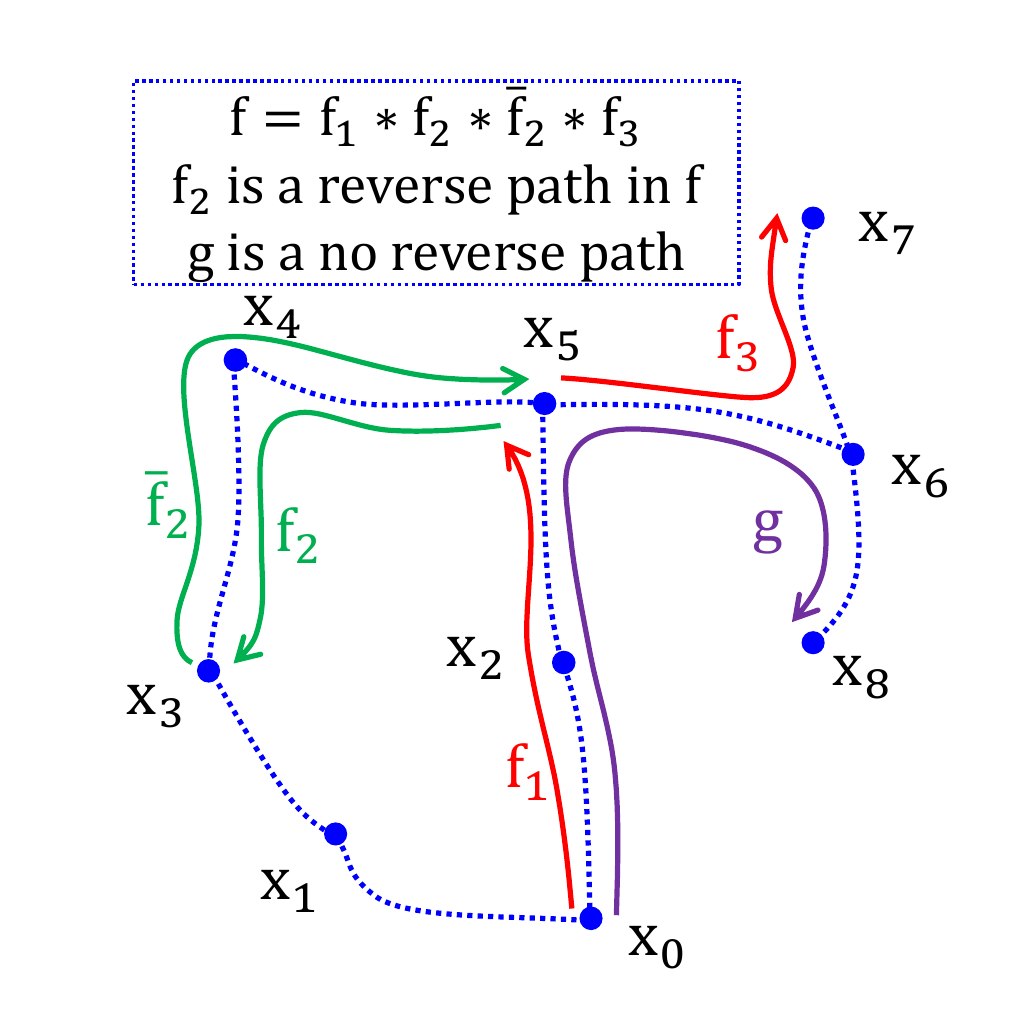}
    \label{fig_3_4}}
  \caption{Illustration of constructing a topology graph based on convex division. (a) The nodes and edges in the topology graph correspond to convex polygons and cutlines, respectively, and the paths in the topology graph are defined as sequences of adjacent nodes. (b) The purple path and the red path can build a path sequence that conforms to definition 6 so they are homotopy. (c) Product of Paths in $X_{topo}$, the green path is equivalent to concatenating the tail of the red path with the head of the purple path. (d) For path $f$ the green part is the rollback path it contains, and the purple path $g$ is no rollback path.}
  \label{fig_3}
\end{figure}
\begin{definition}[Path in $X_{topo}$]
  \label{defTPPath}
  The path $\mathrm{f}$ in $X_{topo}$ is a finite sequence. $\mathrm{f} \subset X_{topo}$, and $\forall t\in \mathbb{N} ^{T_\mathrm{f}}_2$, $\mathrm{f}(t)$, $\mathrm{f}(t-1)$ are adjacent. The $\mathrm{f}(t)$ represents the $t-\mathrm{th}$ element of the sequence $\mathrm{f}$, the $T_\mathrm{f}$ represents the length of the sequence $\mathrm{f}$. For convenience in notation, we use $\mathrm{f}(-t)$ to denote the $t-\mathrm{th}$ element from the end of the sequence $\mathrm{f}$, i.e., $\mathrm{f}(-t) = \mathrm{f}(T_\mathrm{f}-t+1)$. We denote the set of all paths in $X_{topo}$ as $P(X_{topo})$. All paths starting at $\mathrm{x}_0$ and ending at $\mathrm{x}_1$ are denoted as $P(X_{topo};\mathrm{x}_0,\mathrm{x}_1)$.
\end{definition}
\begin{definition}[Path Homotopy in $X_{topo}$]
  \label{defTPPathH}
  \ \newline \indent
  Let $\mathrm{f}, \mathrm{g}\in P(X_{topo};\mathrm{x}_0,\mathrm{x}_1)$. If there exists a finite sequence $(\mathrm{h_1, h_2,\cdots, h_{\mathit{n}}})\subset P(X_{topo};\mathrm{x}_0,\mathrm{x}_1)$, such that $\mathrm{h_1}=\mathrm{f}$, $\mathrm{h_{\mathit{n}}}=\mathrm{g}$, and for each sequence $\mathrm{h_{\mathit{k}}}$, one of the following two operations is applied on the basis of the previous sequence $\mathrm{h_{\mathit{k-1}}}$:
  \begin{enumerate}
    \item{Extension: An element $\mathrm{x}$ is replaced in the sequence $\mathrm{h_{\mathit{k-1}}}$ with $(\mathrm{x, x})$ or $(\mathrm{x, y, x})$, where $\mathrm{y}$ is a node in $X_{topo}$;}
    \item{Contraction: A fragment of the form $(\mathrm{x, x})$ or $(\mathrm{x, y, x})$ is replaced in the sequence $\mathrm{h_{\mathit{k-1}}}$ with $\mathrm{x}$.}
  \end{enumerate}
  This is path homotopy in $X_{topo}$, denoted as $\mathrm{f} \backsimeq _p \mathrm{g}$. The equivalence class of a path $\mathrm{f}$ under this relation is called the homotopy class of $\mathrm{f}$, often denoted by $[\mathrm{f}]$. An illustration of this definition is shown in $\mathrm{Fig.~3(b)}$.
\end{definition}
\begin{remark}
  \label{remakeMapEqu}
  Path homotopy in $X_{topo}$ is an equivalence relation.
\end{remark}
\begin{proof}
  \ \newline
  Reflexivity: For any path $\mathrm{f}$ in $X_{topo}$, there is a singleton sequence $(\mathrm{f})$ such that $\mathrm{f}$ is congruent with itself.
  \ \newline
  Symmetry: If there exists a path sequence such that $\mathrm{f} \simeq_p \mathrm{g}$, then the inverse of this path sequence can be such that $\mathrm{g} \simeq_p \mathrm{f}$.
  \ \newline
  Transitivity: If $\mathrm{f} \simeq_p \mathrm{g}$ and $\mathrm{g} \simeq_p \mathrm{h}$, then splicing the corresponding path sequences of both can obtain a sequence that makes $\mathrm{f} \simeq_p \mathrm{h}$.
\end{proof}
\begin{definition}[Path Time Homotopy in $X_{topo}$]
  \label{defTPPathTH}
  \ \newline \indent
  Let $\mathrm{f}, \mathrm{g}\in P(X_{topo};\mathrm{x}_0,\mathrm{x}_1)$. If there is a map $\varphi :\mathbb{N}_1 \to \mathbb{N}_1 $ and $\forall t\in \mathbb{N}_1$, $\varphi(t)\leqslant \varphi(t+1)$ such that $\forall t\in \mathbb{N} ^{T_\mathrm{f}}_1$, $\mathrm{f}(T)=\mathrm{g}(\varphi(t))$ or $\forall t\in \mathbb{N} ^{T_\mathrm{g}}_1$, $\mathrm{g}(T)=\mathrm{f}(\varphi(t))$, is said to be path time homotopic, denoted as $\mathrm{f}\cong _p \mathrm{g}$.
\end{definition}
\begin{definition}[Product of Paths in $X_{topo}$]
  \label{defProTPPath}
  \ \newline \indent
  Let $\mathrm{f}\in P(X_{topo};\mathrm{x}_0,\mathrm{x}_1)$ and $\mathrm{g}\in P(X_{topo};\mathrm{x}_1,\mathrm{x}_2)$. Define the product $\mathrm{f}*\mathrm{g}$ of $\mathrm{f}$ and $\mathrm{g}$ as the path $\mathrm{h}$,
  \begin{equation}
    \label{eq12}
    {\mathrm{h}(t)} =
    \begin{cases}
      \mathrm{f}(t),                           & t \in \mathbb{N}_{1}^{T_{\mathrm{f}}}                                 \\
      \mathrm{g}\left(t-T_{\mathrm{f}}\right), & t \in \mathbb{N}_{T_{\mathrm{f}}+1}^{T_{\mathrm{g}}+T_{\mathrm{f}}-1}
    \end{cases}
  \end{equation}
  The operation $*$ in $\left(P(X_{topo}), *\right)$ has the following properties:
  \begin{enumerate}
    \item{Associativity: If $\mathrm{f*g}$ and $\mathrm{g*h}$ are meaningful, then $\mathrm{(f * g) * h} \cong_p \mathrm{f *(g * h)}$.}
    \item{Left and Right identity element: Given $\mathrm{x}\in X_{topo}$, let $\mathrm{e_x}:1\to \mathrm{x}$. If $\mathrm{f}\in P(X_{topo};\mathrm{x_0,x_1})$, then
                \begin{equation}
                  \label{eq13}
                  \mathrm{e_{x_0} * f} \cong_p \mathrm{f} \text { and } \mathrm{f * e_{x_1}} \cong_p \mathrm{f}.
                \end{equation}
          }
    \item{Inverse element: Given $\mathrm{f}\in P(X;\mathrm{x_0,x_1})$, we define the inverse of $\mathrm{f}$ as $\mathrm{\bar{f}}(t)=\mathrm{f}(T_{\mathrm{f}}-t+1)$, then
                \begin{equation}
                  \label{eq14}
                  \mathrm{f * \bar{f}} \simeq_p \mathrm{e_{x_0}} \text{ and } \mathrm{\bar{f} * f} \simeq_p \mathrm{e_{x_1}}.
                \end{equation}
          }
  \end{enumerate}
\end{definition}
\begin{definition}[Rollback Path in $X_{topo}$]
  \label{defTPRP}
  If the path $\mathrm{f}\in P(X_{topo})$ can be written as $\mathrm{f} \cong_p \mathrm{f_1*f_2*\bar{f}_2*f_3}$, and $\mathrm{f_2}$ is not an identity element path, then $\mathrm{f_2} * \mathrm{\bar{f}_2}$ is a rollback path in $\mathrm{f}$. If $\mathrm{f}$ cannot be written in this form, we call $\mathrm{f}$ is no rollback path.
\end{definition}
\begin{theorem}
  \label{theorem1}
  For any path $\mathrm{f}$ in $P(X_{topo})$, there is only one no rollback path $\mathrm{g}\in P(X_{topo})$ such that $\mathrm{f} \simeq_p \mathrm{g}$.
\end{theorem}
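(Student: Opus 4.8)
The plan is to treat $X_{topo}$ purely combinatorially and to recognise the homotopy of Definition~\ref{defTPPathH} as the equivalence relation generated by two length-decreasing \emph{contraction} rewrites on vertex sequences together with their inverses (the extensions): $(\dots,\mathrm{x},\mathrm{x},\dots)\rightsquigarrow(\dots,\mathrm{x},\dots)$ and $(\dots,\mathrm{x},\mathrm{y},\mathrm{x},\dots)\rightsquigarrow(\dots,\mathrm{x},\dots)$. First I would unwind Definitions~\ref{defTPPathTH} and~\ref{defTPRP} to check that ``$\mathrm{f}$ is a no rollback path'' means exactly ``no contraction applies to $\mathrm{f}$'': if a contraction applies, the deleted factor is a rollback $\mathrm{f_2}*\bar{\mathrm{f}}_2$ with $\mathrm{f_2}$ a single step; conversely, if $\mathrm{f}\cong_p\mathrm{f_1}*\mathrm{f_2}*\bar{\mathrm{f}}_2*\mathrm{f_3}$ with $\mathrm{f_2}$ nontrivial, then, after deleting the consecutive duplicates allowed by $\cong_p$, the innermost turn of $\mathrm{f_2}*\bar{\mathrm{f}}_2$ exhibits a factor $(\mathrm{x},\mathrm{x})$ or $(\mathrm{x},\mathrm{y},\mathrm{x})$. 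With this identification, ``no rollback path'' and ``irreducible path (normal form of the rewriting system)'' become interchangeable.

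For existence, each contraction strictly decreases the sequence length $T_{\mathrm{f}}\in\mathbb{N}$, so any chain of contractions issuing from $\mathrm{f}$ terminates after finitely many steps at some irreducible $\mathrm{g}$; since every contraction is among the moves of Definition~\ref{defTPPathH}, we get $\mathrm{f}\simeq_p\mathrm{g}$, settling the existence half of Theorem~\ref{theorem1}.

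Uniqueness is the crux. I would show that the contraction rewriting system is \emph{confluent}; being also terminating, Newman's Lemma (local confluence plus termination implies confluence) then yields a unique normal form for every path, and the standard consequence that $\mathrm{f}\leftrightarrow^{*}\mathrm{g}$ holds iff $\mathrm{f}$ and $\mathrm{g}$ have the same normal form finishes the argument, since two no rollback paths homotopic to $\mathrm{f}$ are both normal forms of $\mathrm{f}$. To obtain confluence it suffices to prove \emph{local} confluence: whenever two contractions applied to a sequence $\mathrm{h}$ produce $\mathrm{h_1}$ and $\mathrm{h_2}$, the two results have a common reduct. If the two deleted factors occupy disjoint blocks of positions, the moves commute; otherwise their union is a window of at most five consecutive positions, and a finite check over the possible patterns closes every case (for instance $(\mathrm{x},\mathrm{y},\mathrm{x},\mathrm{z},\mathrm{x})$ reduces through either $(\mathrm{x},\mathrm{z},\mathrm{x})$ or $(\mathrm{x},\mathrm{y},\mathrm{x})$ to $(\mathrm{x})$, and $(\mathrm{x},\mathrm{x},\mathrm{x})$, $(\mathrm{x},\mathrm{y},\mathrm{x},\mathrm{y})$, $(\mathrm{x},\mathrm{x},\mathrm{y},\mathrm{x})$ and their mirrors behave likewise). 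Throughout, one notes that the moves never destroy the prescribed endpoints, because a replaced or deleted factor meeting an end of the sequence still begins, respectively ends, with the same vertex.

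I expect the main obstacle to be precisely this overlap bookkeeping in the local-confluence step — enumerating every way two factors of the forms $(\mathrm{x},\mathrm{x})$ and $(\mathrm{x},\mathrm{y},\mathrm{x})$ can meet, including degenerate patterns such as $(\mathrm{x},\mathrm{x},\mathrm{x},\mathrm{x})$ in which several redexes coincide, and verifying that each closes up — together with the preliminary reconciliation of Definition~\ref{defTPRP} with irreducibility through the reparametrisation relation $\cong_p$ of Definition~\ref{defTPPathTH}. If a fully self-contained write-up is preferred over invoking abstract rewriting theory, an equivalent route is to define an explicit left-to-right ``stack'' reduction of a vertex sequence and to prove directly that each elementary move of Definition~\ref{defTPPathH} leaves its output unchanged; this realises the same case analysis but packages uniqueness as the well-definedness of the stack reduction on homotopy classes.
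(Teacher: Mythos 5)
Your proposal is correct in outline, and its existence half is exactly the paper's: a path is a finite sequence, so repeated contractions must terminate in a no rollback path homotopic to $\mathrm{f}$. Where you genuinely diverge is the uniqueness half. The paper argues directly: it assumes two distinct no rollback paths $\mathrm{g}$, $\mathrm{h}$ homotopic to $\mathrm{f}$, observes that the moves of Definition~\ref{defTPPathH} can only insert or delete fragments of the form $(\mathrm{x,x})$ or $(\mathrm{x,y,x})$, concludes from this that $\mathrm{g}$ and $\mathrm{h}$ cannot be homotopic, and then derives a contradiction from transitivity. In other words, the paper \emph{asserts} the key Church--Rosser-type property (two distinct irreducible sequences are never connected by a zigzag of extensions and contractions) rather than proving it; your route via termination (length strictly decreases under contraction), local confluence of overlapping redexes, and Newman's Lemma is precisely the standard argument that establishes this property for reduced walks, and your sample overlap cases (e.g.\ $(\mathrm{x,y,x,z,x})$, $(\mathrm{x,x,x})$, $(\mathrm{x,y,x,y})$) do close up as you claim, so the plan is sound. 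What your approach buys is a self-contained, rigorous justification of the step the paper leaves implicit; what it costs is the overlap bookkeeping plus the preliminary reconciliation you flag between ``no rollback'' (Definition~\ref{defTPRP}, stated only up to $\cong_p$) and ``irreducible,'' where you should make explicit the convention that a stutter $(\mathrm{x,x})$ is itself a rollback $\mathrm{f_2 * \bar{f}_2}$ with $\mathrm{f_2}=(\mathrm{x,x})$ (or that such stutters are removed by $\cong_p$), since otherwise a path like $(\mathrm{a,b,b,c})$ would be ``no rollback'' yet still reducible, and uniqueness as stated would need to be read up to time homotopy.
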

\begin{proof}
  \ \newline
  Existence: According to \textbf{Definition~\ref{defTPPath}}, it is known that the path in $X_{topo}$ is a finite sequence. Therefore, the contraction operation according to \textbf{Definition~\ref{defTPPathH}} must be possible to shorten the path to the no rollback path.\ \newline
  Uniqueness: Assume that the path $\mathrm{f}$ in $X_{topo}$ has two different path homotopic no rollback paths given by $\mathrm{g}$, $\mathrm{h}$. However, the two operations according to \textbf{Definition~\ref{defTPPathH}} can only add and remove segments shaped like $(\mathrm{x,x})$ or $(\mathrm{x,y,x})$ from the path sequence. Hence, $\mathrm{g}$ and $\mathrm{h}$ cannot be homotopic by \textbf{Definition~\ref{defTPPathH}}. However, homotopy is an equivalence relation, and $\mathrm{f}$ cannot be homotopic to $\mathrm{g}$ and $\mathrm{h}$ simultaneously. Therefore, the uniqueness of the proposition is maintained.
\end{proof}
\begin{lemma}
  \label{theorem2}
  For any path $\mathrm{f}$, $\mathrm{g}$ in $P(X_{topo})$, the necessary and sufficient condition for $\mathrm{f} \simeq_p \mathrm{g}$ is that their no rollback paths are equal.
\end{lemma}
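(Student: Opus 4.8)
The plan is to derive the lemma as an almost immediate consequence of \textbf{Theorem~\ref{theorem1}} (existence and uniqueness of the no rollback representative) together with the fact that $\simeq_p$ in $X_{topo}$ is an equivalence relation (\textbf{Remark~\ref{remakeMapEqu}}). Write $\mathrm{f}_0$ and $\mathrm{g}_0$ for the no rollback paths associated with $\mathrm{f}$ and $\mathrm{g}$; by \textbf{Theorem~\ref{theorem1}} these exist, are unique, and satisfy $\mathrm{f}\simeq_p\mathrm{f}_0$ and $\mathrm{g}\simeq_p\mathrm{g}_0$.

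For sufficiency, I would assume $\mathrm{f}_0=\mathrm{g}_0$ and simply chain the homotopies, using symmetry and transitivity: $\mathrm{f}\simeq_p\mathrm{f}_0=\mathrm{g}_0\simeq_p\mathrm{g}$, hence $\mathrm{f}\simeq_p\mathrm{g}$. For necessity, assume $\mathrm{f}\simeq_p\mathrm{g}$; then transitivity gives $\mathrm{f}_0\simeq_p\mathrm{f}\simeq_p\mathrm{g}\simeq_p\mathrm{g}_0$, so $\mathrm{f}_0\simeq_p\mathrm{g}_0$. Now I would invoke the uniqueness half of \textbf{Theorem~\ref{theorem1}} applied to the path $\mathrm{f}_0$: it has exactly one no rollback path in its homotopy class, and both $\mathrm{f}_0$ itself and $\mathrm{g}_0$ are no rollback paths homotopic to $\mathrm{f}_0$, whence $\mathrm{f}_0=\mathrm{g}_0$.

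Since each implication reduces to bookkeeping with an equivalence relation, there is no genuine obstacle once \textbf{Theorem~\ref{theorem1}} is available. The one step that deserves care — and the one I would expect a referee to press on — is the appeal to uniqueness, which rests on the claim established inside the proof of \textbf{Theorem~\ref{theorem1}} that two distinct no rollback sequences can never be connected by the extension/contraction moves of \textbf{Definition~\ref{defTPPathH}}: each such move alters the sequence by inserting or deleting a block of the form $(\mathrm{x,x})$ or $(\mathrm{x,y,x})$, and so cannot identify two already fully reduced sequences. If one wanted the lemma to be self-contained rather than leaning on Theorem~\ref{theorem1}, I would expand this into a short induction on the number of moves in a homotopy between $\mathrm{f}_0$ and $\mathrm{g}_0$, showing the reduced form is preserved at each move; this is the graph-theoretic analogue of the uniqueness of reduced words in a free group (Newman's lemma / a confluence argument for the reduction system), and it is the only place where real content lies.
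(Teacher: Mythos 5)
Your proposal is correct and follows essentially the same route as the paper: both directions are obtained by chaining the homotopies $\mathrm{f}\simeq_p\mathrm{f^*}$, $\mathrm{g}\simeq_p\mathrm{g^*}$ from \textbf{Theorem~\ref{theorem1}} through the equivalence relation, with necessity settled by the uniqueness half of that theorem. Your added remark about where the real content lies (the confluence of the reduction moves, buried in the uniqueness argument of Theorem~\ref{theorem1}) is a fair observation but does not change the proof.
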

\begin{proof}
  Suppose $\mathrm{f}$, $\mathrm{g}$ are any two paths in $P(X_{topo})$, and the no rollback paths of $\mathrm{f}$, $\mathrm{g}$ are $\mathrm{f^*}$, $\mathrm{g^*}$, according to \textbf{Theorem~\ref{theorem1}}, $\mathrm{f}\simeq_p \mathrm{f^*}$ and $\mathrm{g}\simeq_p \mathrm{g^*}$.\ \newline
  Necessary: When $\mathrm{f} \simeq_p \mathrm{g}$ because path homotopy is an equivalence relation in $P(X_{topo})$, we can get $\mathrm{f}\simeq_p \mathrm{f^*} \simeq_p \mathrm{g}\simeq_p \mathrm{g^*}$, on the basis of the uniqueness of \textbf{Theorem~\ref{theorem1}}, such that $\mathrm{f^*}$ and $\mathrm{g^*}$ are the same path.\ \newline
  Sufficient: If $\mathrm{f}$, $\mathrm{g}$ have the same no rollback paths ($\mathrm{f^*} = \mathrm{g^*}$), then $\mathrm{f}\simeq_p  \mathrm{f^*} = \mathrm{g^*}\simeq_p \mathrm{g}$, such that $\mathrm{f} \simeq_p \mathrm{g}$.
\end{proof}
Further, we give another formulation of \textbf{Lemma~\ref{theorem2}} as \textbf{Corollary~\ref{theorem3}}.
\begin{corollary}
  \label{theorem3}
  For any homotopic class $[\mathrm{f}]$ in $P(X_{topo})$ sharing a no rollback path.
\end{corollary}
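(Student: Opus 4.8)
The plan is to derive this directly from \textbf{Theorem~\ref{theorem1}} and \textbf{Lemma~\ref{theorem2}}, reading the statement as the assertion that ``having no rollback path $\mathrm{f^*}$'' is an invariant of the whole homotopy class, i.e.\ that the assignment $[\mathrm{f}]\mapsto \mathrm{f^*}$ is well defined. First I would fix an arbitrary homotopy class $[\mathrm{f}]$ in $P(X_{topo})$ and pick any two representatives $\mathrm{f_1},\mathrm{f_2}\in[\mathrm{f}]$. By the definition of the equivalence class together with \textbf{Remark~\ref{remakeMapEqu}} (path homotopy in $X_{topo}$ is an equivalence relation), $\mathrm{f_1}\simeq_p\mathrm{f_2}$.

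Next I would invoke \textbf{Theorem~\ref{theorem1}} to obtain the unique no rollback paths $\mathrm{f_1^*}$ and $\mathrm{f_2^*}$ with $\mathrm{f_1}\simeq_p\mathrm{f_1^*}$ and $\mathrm{f_2}\simeq_p\mathrm{f_2^*}$, and then apply the necessity direction of \textbf{Lemma~\ref{theorem2}} to $\mathrm{f_1}\simeq_p\mathrm{f_2}$ to conclude $\mathrm{f_1^*}=\mathrm{f_2^*}$. Since $\mathrm{f_1},\mathrm{f_2}$ were arbitrary elements of $[\mathrm{f}]$, this common sequence --- call it $\mathrm{f^*}$ --- depends only on the class and not on the chosen representative, which is exactly the claim. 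Moreover the existence part of \textbf{Theorem~\ref{theorem1}} gives $\mathrm{f^*}\simeq_p\mathrm{f}$, so in fact $\mathrm{f^*}\in[\mathrm{f}]$: the shared no rollback path is itself a member of the class and may serve as its canonical encoder.

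I do not anticipate a real obstacle, since the corollary is essentially a repackaging of results already in hand; the only point that needs care is to state precisely what ``sharing a no rollback path'' means and to observe that well-definedness of $[\mathrm{f}]\mapsto\mathrm{f^*}$ is the necessity half of \textbf{Lemma~\ref{theorem2}}, while the fact that two \emph{distinct} classes cannot share a no rollback path --- so the map is injective, which is what makes it usable as an encoder --- is the sufficiency half. If a self-contained argument is preferred over citing \textbf{Lemma~\ref{theorem2}}, I would instead chain homotopies directly using transitivity, $\mathrm{f_1^*}\simeq_p\mathrm{f_1}\simeq_p\mathrm{f_2}\simeq_p\mathrm{f_2^*}$, and then apply the uniqueness part of \textbf{Theorem~\ref{theorem1}} to the path $\mathrm{f_1^*}$ (which is homotopic to the no rollback path $\mathrm{f_2^*}$, and is itself a no rollback path) to force $\mathrm{f_1^*}=\mathrm{f_2^*}$.
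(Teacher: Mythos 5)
Your argument is correct and matches the paper's intent exactly: the paper offers no separate proof because it presents this corollary as a mere reformulation of \textbf{Lemma~\ref{theorem2}}, and your derivation (representatives $\mathrm{f_1},\mathrm{f_2}$ of the class, uniqueness from \textbf{Theorem~\ref{theorem1}}, necessity direction of \textbf{Lemma~\ref{theorem2}} forcing $\mathrm{f_1^*}=\mathrm{f_2^*}$) is precisely the spelled-out version of that reformulation. Your precise reading of the elliptical statement, and the observation that injectivity of $[\mathrm{f}]\mapsto\mathrm{f^*}$ is the sufficiency half, are both faithful to how the paper later uses the corollary as an encoder.
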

Next, we construct the relationship between the path space $P(X_{free})$ and $P(X_{topo})$ by defining two mappings $\Gamma $, $\Gamma^g $.
\begin{definition}[Mapping $\Gamma :P(X_{free})\to P(X_{topo})$]
  \label{defMAPT}
  \ \newline \indent
  For $f\in P(X_{free})$, $\Gamma \circ f$ is the sequence of nodes in $P(X_{topo})$ corresponding to the convex polygons passed by $f(t)$ with $t$ in the order from $0$ to $1$.
\end{definition}
\begin{definition}[Mapping $\Gamma^g :P(X_{topo})\to P(X_{free})$]
  \label{defMAPTg}
  \ \newline \indent
  For $\mathrm{f} \in P(X_{topo})$ and $\mathrm{f}=(\mathrm{x_1,x_2,\cdots ,x_n})$, $\Gamma^g \circ \mathrm{f}=f_1*f_2*\cdots *f_{n-1}$, where $f_k$, $k\in \mathbb{N} ^{n-1}_1$ is $\Gamma^g \circ (\mathrm{x}_k,\mathrm{x}_{k+1})$ and $f_k=l^{x_c}_{x_k}*l^{x_{k+1}}_{x_c}$, where $x_k$, $x_{k+1}$ are the centroid of the convex polygon corresponding to $\mathrm{x}_k$, $\mathrm{x}_{k+1}$. $x_c$ is the midpoint of the cutline between $\mathrm{x}_k$ and $\mathrm{x}_{k+1}$.
\end{definition}
\begin{figure}[!t]
  \centering
  \subfloat[]{\includegraphics[width=2.8in]{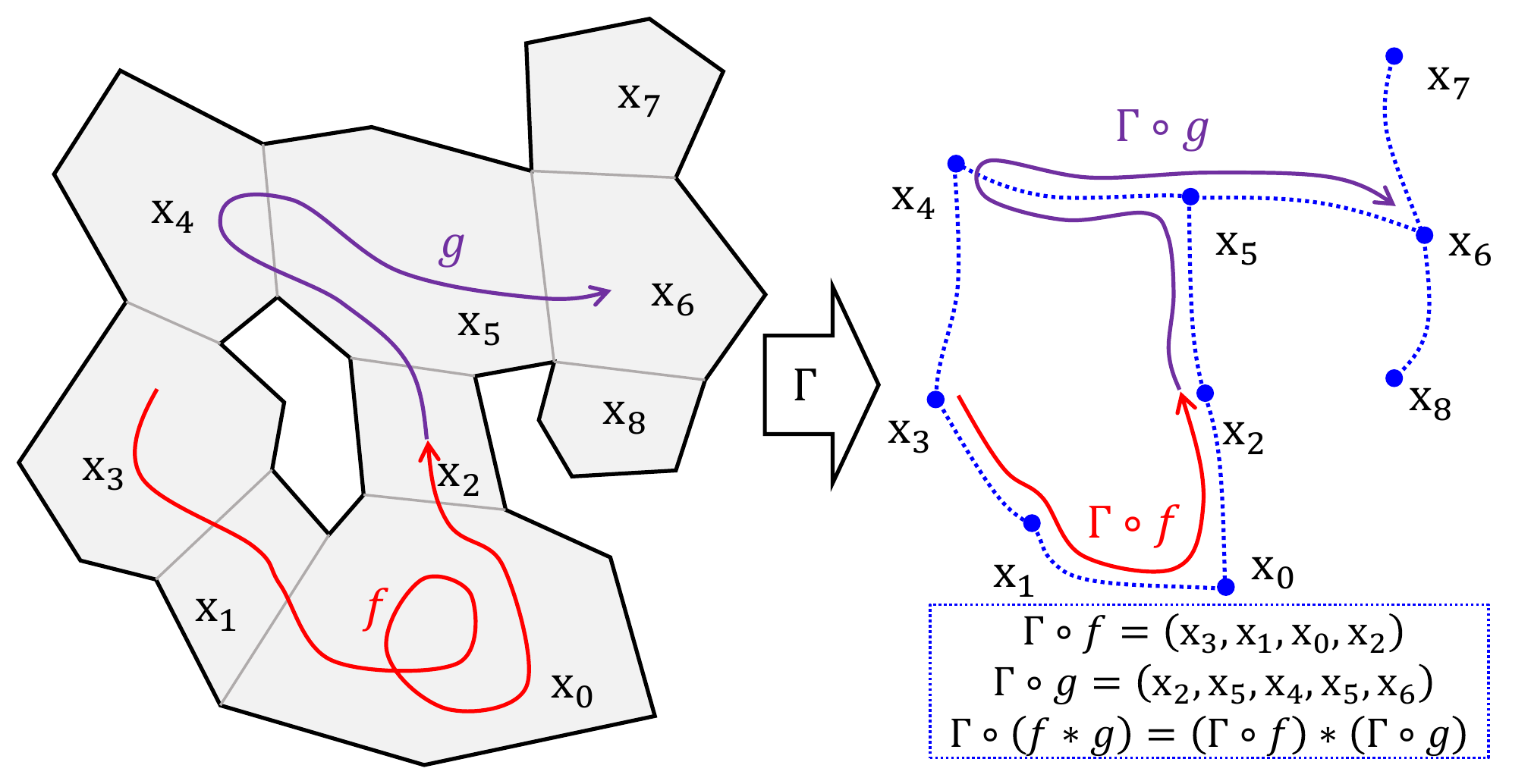}
    \label{fig_4_1}}
  \\
  \subfloat[]{\includegraphics[width=2.8in]{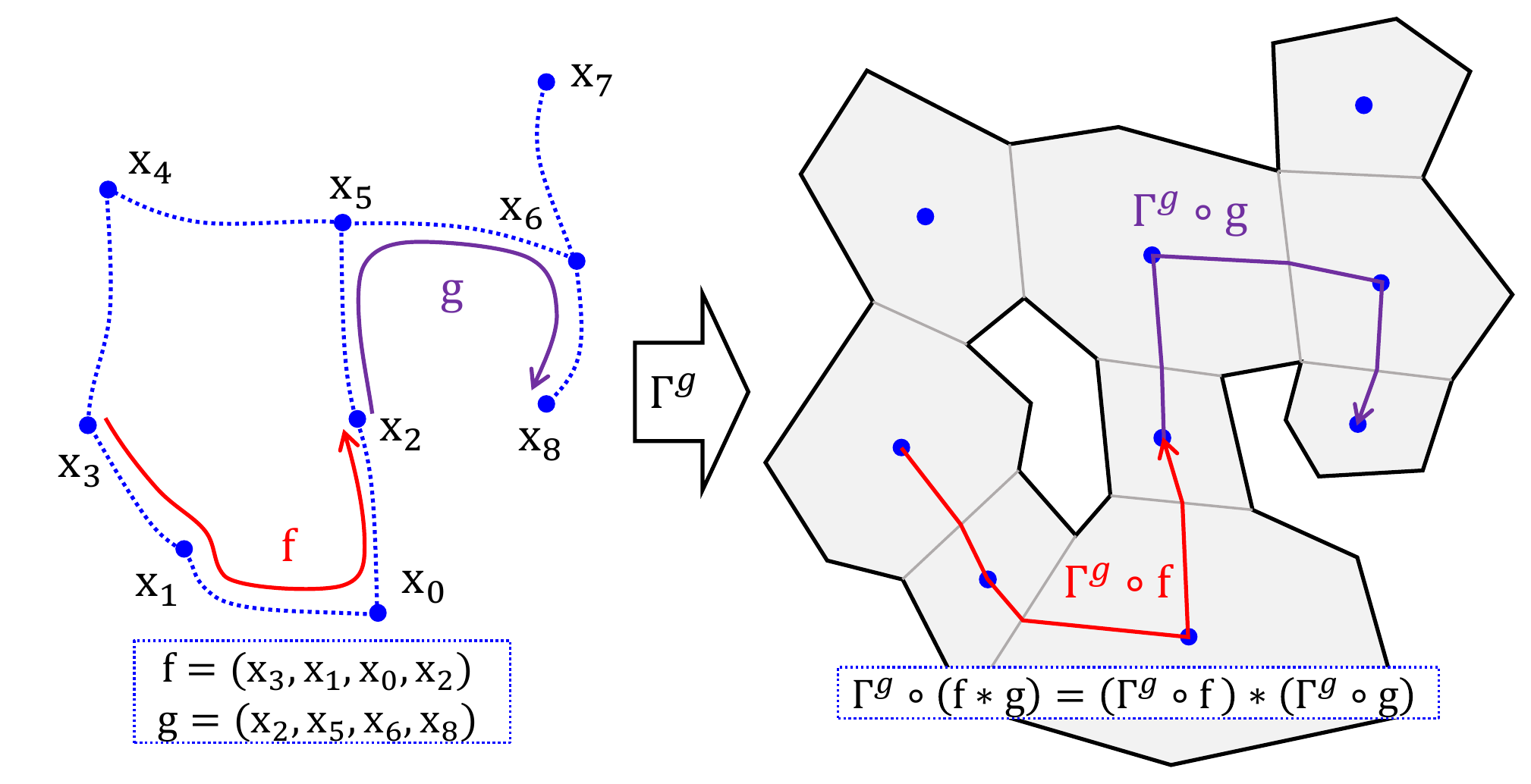}
    \label{fig_4_2}}
  \\
  \subfloat[]{\includegraphics[width=2.8in]{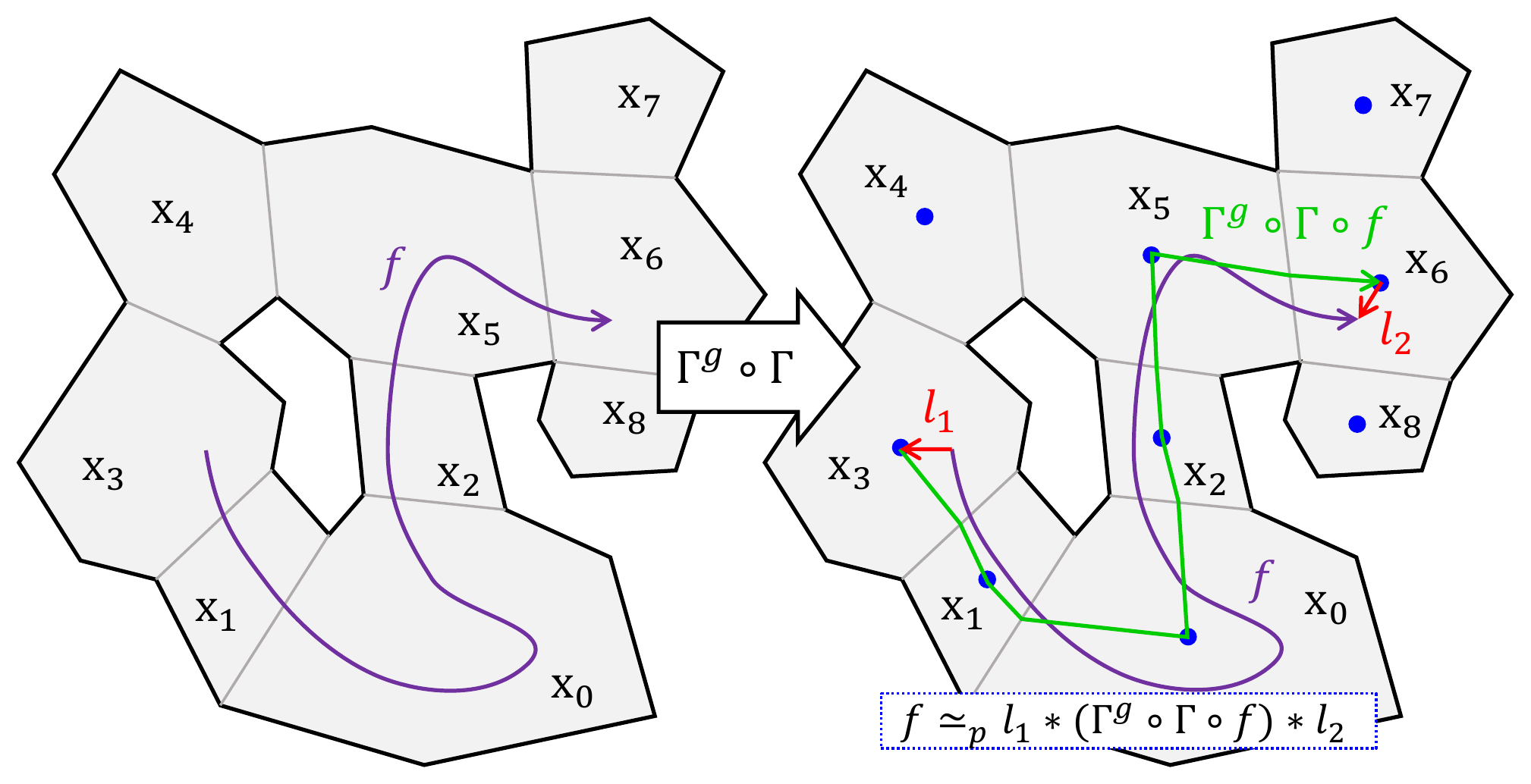}
    \label{fig_4_3}}
  \caption{ Illustration of mapping $\Gamma $ and $\Gamma ^g$. (a) The purple path $f$ in $P(X_{free})$ maps to $P(X_{topo})$ as a sequence of convex polygons that $f$ traverses in turn, obviously $\Gamma $ is a homomorphic mapping of $P(X_{free})$  to $P(X_{topo})$. (b) Shows how $\Gamma ^g$ is mapped, $\Gamma ^g$ is a homomorphic mapping from $P(X_{topo})$ to $P(X_{free})$. (c) The path  $f$ can use $\Gamma ^g \circ \Gamma$ to quickly find a homotopy path $l_1*(\Gamma ^g \circ \Gamma \circ f)*l_2$.}
  \label{fig_4}
\end{figure}
\begin{remark}
  \label{remarkMapT}
  $\Gamma$ is evidently a homomorphism of $P(X_{free})$ to $P(X_{topo})$, because $\forall f,g\in P(X_{free})$ and $f*g$ exists; hence, we obtain
  \begin{equation}
    \label{eq15}
    \Gamma \circ (f*g) \cong_p (\Gamma \circ f)*(\Gamma \circ g).
  \end{equation}
  The mapping $\Gamma^g$ also satisfies the following relationship:\ \newline
  $\forall \mathrm{f,g}\in P(X_{topo})$ and $\mathrm{f*g}$ exists; hence, we obtain
  \begin{equation}
    \label{eq16}
    \Gamma^g \circ (\mathrm{f*g}) \cong_p (\Gamma^g \circ \mathrm{f})*(\Gamma^g \circ \mathrm{g}).
  \end{equation}
  As shown in $\mathrm{Fig.~\ref{fig_4}(c)}$, we can quickly find a path that is homotopic to the original path by using the mapping $\Gamma^g \circ \Gamma$:
  \begin{equation}
    \label{eq17}
    f\cong_p l_1*(\Gamma^g \circ \Gamma \circ f)*l_2.
  \end{equation}
\end{remark}
\begin{definition}[Mapping $\Gamma^* :P(X_{free})\to P(X_{topo})$]
  \label{defMAPTstar}
  \ \newline \indent
  For $f \in P(X_{free})$, $\Gamma^* \circ f$ is the corresponding no rollback path of $\Gamma \circ f$. And in the subsequent text, for $\mathrm{f} \in P(X_{topo})$, we can use $\Gamma^* \circ \Gamma^g \circ \mathrm{f}$ to denote the no rollback path homotopic to $\mathrm{f}$.
\end{definition}
\begin{lemma}
  \label{theorem4}
  If the sequence $\mathrm{f}_n \in P(X_{topo})$ conforms to the condition given in \textbf{Definition~\ref{defTPPathH}}, then all the elements in the sequence  $(\Gamma^g \circ \mathrm{f}_n)$ are homotopic.
\end{lemma}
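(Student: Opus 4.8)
Write the sequence furnished by \textbf{Definition~\ref{defTPPathH}} as $(\mathrm{h}_1,\mathrm{h}_2,\ldots,\mathrm{h}_n)$, so that $\mathrm{f}_n$ denotes this sequence and $(\Gamma^g\circ\mathrm{f}_n)$ is the sequence $(\Gamma^g\circ\mathrm{h}_1,\ldots,\Gamma^g\circ\mathrm{h}_n)$ of paths in $P(X_{free})$. Since path homotopy in $X_{free}$ is an equivalence relation, the plan is to prove that consecutive entries $\Gamma^g\circ\mathrm{h}_{k-1}$ and $\Gamma^g\circ\mathrm{h}_{k}$ are path homotopic; transitivity then places the whole sequence in a single homotopy class. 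Moreover, because the Contraction operation of \textbf{Definition~\ref{defTPPathH}} is exactly the inverse of the Extension operation and $\simeq_p$ is symmetric, I may assume that $\mathrm{h}_{k}$ arises from $\mathrm{h}_{k-1}$ by a single Extension.

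The idea is to localise this Extension through the homomorphism property \eqref{eq16} of $\Gamma^g$. Let $\mathrm{x}$ be the replaced element of $\mathrm{h}_{k-1}$, and split $\mathrm{h}_{k-1}=\mathrm{a}*\mathrm{b}$ with $\mathrm{a}$ ending at $\mathrm{x}$ and $\mathrm{b}$ beginning at $\mathrm{x}$ (if $\mathrm{x}$ is the first, resp.\ last, node of $\mathrm{h}_{k-1}$, take $\mathrm{a}$, resp.\ $\mathrm{b}$, to be the singleton $(\mathrm{x})$, for which $\Gamma^g\circ(\mathrm{x})$ is the constant path $e_{x}$ at the centroid $x$ of the polygon $\mathrm{x}$). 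Then $\mathrm{h}_{k}=\mathrm{a}*\mathrm{s}*\mathrm{b}$ with $\mathrm{s}\in\{(\mathrm{x},\mathrm{x}),(\mathrm{x},\mathrm{y},\mathrm{x})\}$, and two applications of \eqref{eq16} give $\Gamma^g\circ\mathrm{h}_{k}\cong_p(\Gamma^g\circ\mathrm{a})*(\Gamma^g\circ\mathrm{s})*(\Gamma^g\circ\mathrm{b})$ and $\Gamma^g\circ\mathrm{h}_{k-1}\cong_p(\Gamma^g\circ\mathrm{a})*(\Gamma^g\circ\mathrm{b})$. Thus everything reduces to showing that $\Gamma^g\circ\mathrm{s}$ is a loop at $x$ that is null-homotopic in $X_{free}$: granting this and using that $*$ respects $\simeq_p$ together with the identity and associativity laws of \textbf{Definition~\ref{defProPath}}, one gets $(\Gamma^g\circ\mathrm{a})*(\Gamma^g\circ\mathrm{s})*(\Gamma^g\circ\mathrm{b})\simeq_p(\Gamma^g\circ\mathrm{a})*e_{x}*(\Gamma^g\circ\mathrm{b})\cong_p(\Gamma^g\circ\mathrm{a})*(\Gamma^g\circ\mathrm{b})$, hence $\Gamma^g\circ\mathrm{h}_{k}\simeq_p\Gamma^g\circ\mathrm{h}_{k-1}$.

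For $\mathrm{s}=(\mathrm{x},\mathrm{x})$ this is immediate, since $\Gamma^g\circ(\mathrm{x},\mathrm{x})=e_{x}$. For $\mathrm{s}=(\mathrm{x},\mathrm{y},\mathrm{x})$ with $\mathrm{y}$ adjacent to $\mathrm{x}$, let $y$ be the centroid of $\mathrm{y}$ and $x_c$ the midpoint of the cutline shared by $\mathrm{x}$ and $\mathrm{y}$; by \textbf{Definition~\ref{defMAPTg}}, $\Gamma^g\circ(\mathrm{x},\mathrm{y})=l^{x_c}_{x}*l^{y}_{x_c}$ and $\Gamma^g\circ(\mathrm{y},\mathrm{x})=l^{x_c}_{y}*l^{x}_{x_c}$, so $\Gamma^g\circ\mathrm{s}=l^{x_c}_{x}*l^{y}_{x_c}*l^{x_c}_{y}*l^{x}_{x_c}$. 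Since $l^{x_c}_{y}$ is the reverse of $l^{y}_{x_c}$ and $l^{x}_{x_c}$ is the reverse of $l^{x_c}_{x}$, the inverse-element and associativity laws of \textbf{Definition~\ref{defProPath}} collapse this to $l^{x_c}_{x}*e_{x_c}*l^{x}_{x_c}\cong_p l^{x_c}_{x}*l^{x}_{x_c}\simeq_p e_{x}$, as required. (Here each such line segment lies in $X_{free}$ because the centroid of a convex polygon and the midpoint of one of its cutlines are interior points and the segment joining two interior points of a convex set stays interior, which is what makes $\Gamma^g\circ\mathrm{s}$ a genuine loop in $X_{free}$ and the backtracking homotopies legitimate.)

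The step I expect to take the most care is the bookkeeping at the ends of the sequences — when $\mathrm{x}$ is the first or last node of $\mathrm{h}_{k-1}$, or when $\mathrm{h}_{k-1}$ is a single node — together with the precise reading of $\Gamma^g$ on one-element sequences and on the fragment $(\mathrm{x},\mathrm{x})$; routing the whole argument through the homomorphism identity \eqref{eq16} and the groupoid laws of \textbf{Definition~\ref{defProPath}} is intended to absorb these special cases uniformly. Two facts used implicitly above should also be stated: that the product $*$ is well defined on homotopy classes in $X_{free}$ (if $\gamma\simeq_p\gamma'$ then $\alpha*\gamma*\beta\simeq_p\alpha*\gamma'*\beta$), which is standard for path homotopy; and that adjacency of two polygons is realised by a single cutline, so that the point $x_c$, and hence the map $\Gamma^g$, is unambiguous.
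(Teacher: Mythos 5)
Your proof is correct and follows essentially the same route as the paper's: reduce to a single Extension step, split the sequence at the insertion point, push the decomposition through the homomorphism property (\ref{eq16}) of $\Gamma^g$, and cancel the inserted backtracking fragment via the inverse-element law (the paper writes the inserted piece abstractly as $\mathrm{f}'_{new}*\overline{\mathrm{f}'_{new}}$ where you compute $\Gamma^g\circ\mathrm{s}$ explicitly, but this is the same cancellation). Your extra care with the end-of-sequence cases and the convexity justification is a welcome refinement, not a different argument.
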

\begin{proof}
  For this proposition, we are required to prove that for the path $\mathrm{f}'$ obtained by any path $\mathrm{f} \in P(X_{topo})$ through the transformations of \textbf{Definition~\ref{defTPPathH}}, $\Gamma^g \circ \mathrm{f} \simeq_p \Gamma^g \circ\mathrm{f}'$ is satisfied.
  \ \newline \indent
  For the extension transformation in \textbf{Definition~\ref{defTPPathH}}, $\mathrm{f}$ and $\mathrm{f}'$ are expressed respectively as follows:
  \begin{equation}
    \label{eq18}
    \mathrm{f} \cong_p \mathrm{f}_s * \mathrm{f}_e \text{ and }
    \mathrm{f}' \cong_p \mathrm{f}_s * \mathrm{f}'_{new} * \overline{\mathrm{f}'_{new}} * \mathrm{f}_e.
  \end{equation}
  Here, $\mathrm{f}'_{new} * \overline{\mathrm{f}'_{new}}$ is the newly inserted path segment of $\mathrm{f}'$ relative to $\mathrm{f}$. $\mathrm{f}_s$ and $\mathrm{f}_e$ are the first and second half of the path, respectively, of $\mathrm{f}$ at the insertion position of $\mathrm{f}'_{new} * \overline{\mathrm{f}'_{new}}$. Furthermore, the following equations can be obtained using (\ref{eq16}):
  \begin{equation}
    \label{eq19}
    \Gamma^g \circ \mathrm{f} \cong_p (\Gamma^g \circ \mathrm{f}_s) * (\Gamma^g \circ \mathrm{f}_e) ,
  \end{equation}
  \begin{equation}
    \label{eq20}
    \Gamma^g \circ \mathrm{f}' \cong_p (\Gamma^g \circ \mathrm{f}_s) * (\Gamma^g \circ \mathrm{f}_{new}) * (\Gamma^g \circ \overline{\mathrm{f}'_{new}}) * (\Gamma^g \circ \mathrm{f}_e).
  \end{equation}
  According to the definition of $\Gamma^g$, equation (\ref{eq20}) can be expressed as
  \begin{equation}
    \label{eq21}
    \Gamma^g \circ \mathrm{f}' \cong_p (\Gamma^g \circ \mathrm{f}_s) * (\Gamma^g \circ \mathrm{f}_{new}) * \overline{(\Gamma^g \circ \mathrm{f}'_{new})} * (\Gamma^g \circ \mathrm{f}_e).
  \end{equation}
  Hence, it can be concluded that:
  \begin{equation}
    \label{eq22}
    \Gamma^g \circ \mathrm{f}' \simeq_p (\Gamma^g \circ \mathrm{f}_s) * (\Gamma^g \circ \mathrm{f}_e)  \simeq_p \Gamma^g \circ \mathrm{f}.
  \end{equation}
  The proof for the contraction operation is similar.
\end{proof}
\begin{figure}[!t]
  \centering
  \subfloat[]{\includegraphics[width=1.5in]{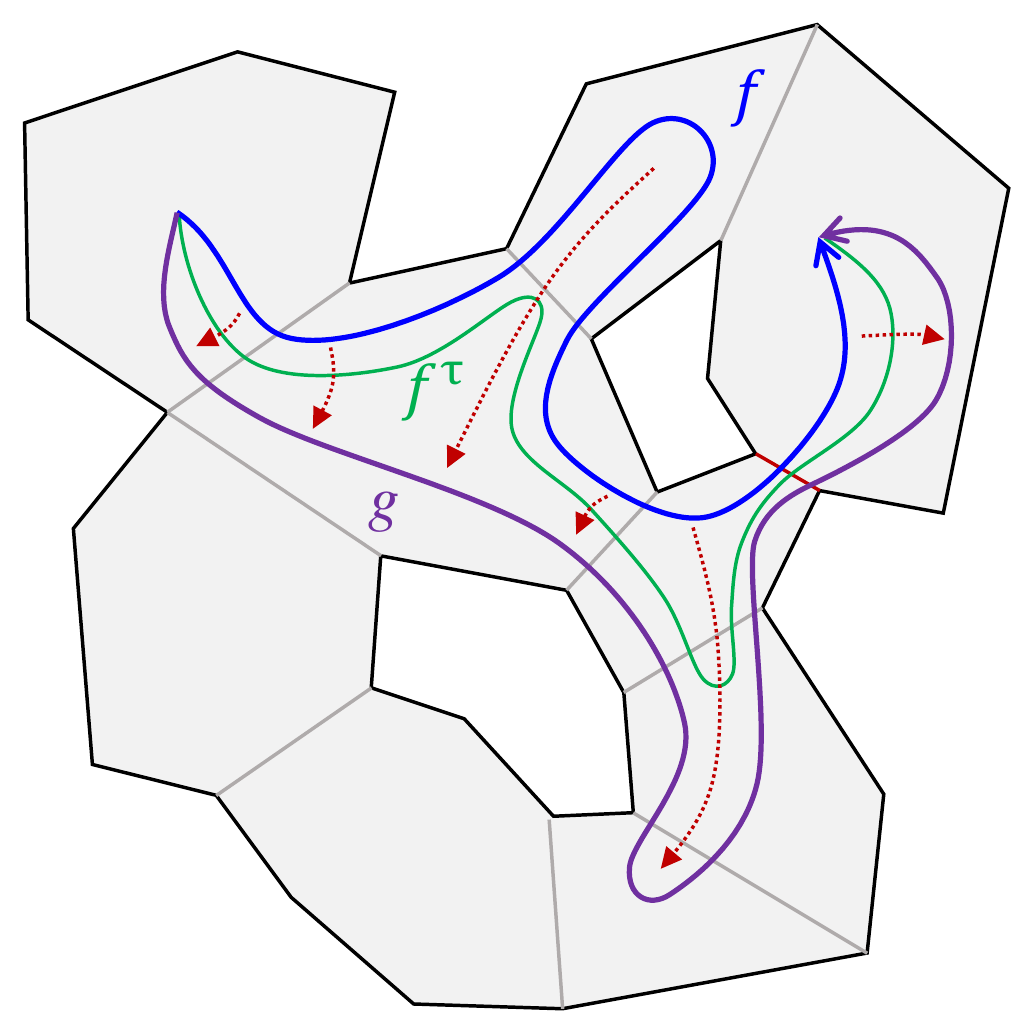}
    \label{fig_5_1}}
  \hfil
  \subfloat[]{\includegraphics[width=1.5in]{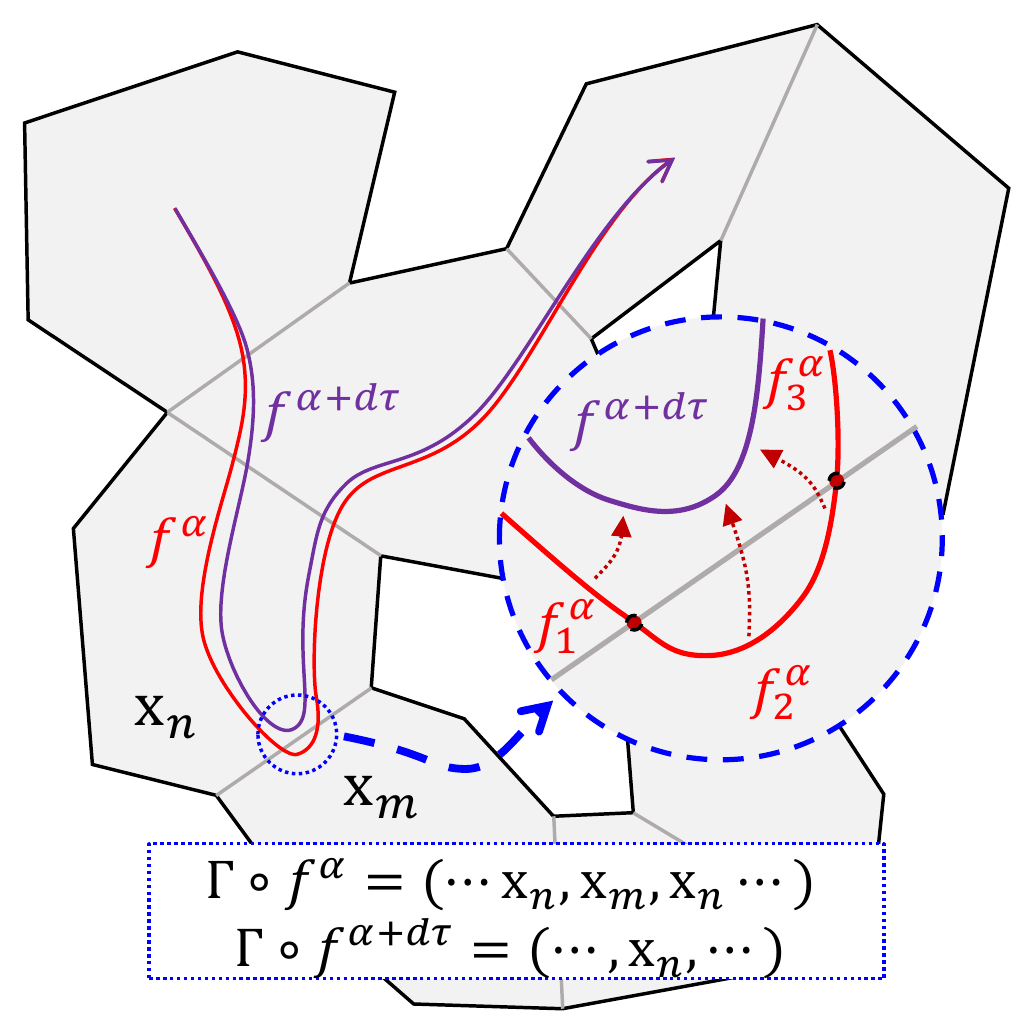}
    \label{fig_5_2}}
  \caption{Illustration of homotopy paths and their passage through convex polygons. (a)  In the continuous change of path $f$ to path $g$, it continuously enters new convex polygons and exits old convex polygons. (b) The mapping corresponding to $\Gamma$ before and after each time the path enters or exits the convex polygon conforms to Definition~\ref{defTPPathH}.}
  \label{fig_5}
\end{figure}
\begin{lemma}
  \label{theorem5}
  Given that $f,g\in P(X_{free};x_0,x_1)$, a necessary and sufficient condition for $f \simeq_p g$ is $\Gamma \circ f \simeq_p \Gamma \circ g$.
\end{lemma}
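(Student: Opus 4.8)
The plan is to prove the two implications separately; the \emph{sufficiency} is a short algebraic argument built on the machinery already in hand, while the \emph{necessity} carries the geometric content. For sufficiency, suppose $\Gamma\circ f\simeq_p\Gamma\circ g$. By Definition~\ref{defTPPathH} there is a finite chain of extension/contraction moves from $\Gamma\circ f$ to $\Gamma\circ g$; applying $\Gamma^{g}$ to that chain and invoking Lemma~\ref{theorem4} gives $\Gamma^{g}\circ\Gamma\circ f\simeq_p\Gamma^{g}\circ\Gamma\circ g$ in $X_{free}$. Homotopic paths in $X_{topo}$ have the same first and last node, so $\Gamma\circ f$ and $\Gamma\circ g$ both begin at the polygon containing $x_{0}$ and end at the polygon containing $x_{1}$; hence the two correction segments $l_{1}$, $l_{2}$ appearing in Remark~\ref{remarkMapT} (the line paths joining $x_{0}$ and $x_{1}$ to the centroids of those two polygons, which lie in the respective convex polygons by Property~4 of the division) are literally the same for $f$ and for $g$. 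Then (\ref{eq17}) yields
\[
f\simeq_p l_{1}*(\Gamma^{g}\circ\Gamma\circ f)*l_{2}\simeq_p l_{1}*(\Gamma^{g}\circ\Gamma\circ g)*l_{2}\simeq_p g .
\]

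For necessity I would make the picture of Fig.~\ref{fig_5} precise. Let $F:I\times I\to X_{free}$ be a path homotopy with $F(\cdot,0)=f$ and $F(\cdot,1)=g$. After replacing $f$, $g$ by homotopic paths (each such adjustment realised by Definition~\ref{defTPPathH} moves on $\Gamma\circ f$, $\Gamma\circ g$), we may assume $f$ and $g$ meet every cutline transversally and in finitely many points and have endpoints in the interiors of their polygons. Since $F(I\times I)$ is compact and contained in the open set $X_{free}$, while every vertex of the convex division lies on $\partial X_{free}$ (Property~2), the image of $F$ automatically avoids all vertices. Perturbing $F$ rel $\partial(I\times I)$, we may further assume $F$ is transverse to each open cutline $e_{i}$. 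Because distinct cutlines meet only at vertices, the preimages $C_{i}=F^{-1}(e_{i})$ are then pairwise disjoint compact $1$-manifolds in $I\times I$ whose boundary lies in $I\times\{0\}\cup I\times\{1\}$.

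For each $\tau$ the sequence $\Gamma\circ F(\cdot,\tau)$ is read off from the way the horizontal slice $I\times\{\tau\}$ crosses $\bigcup_i C_{i}$, and this combinatorial datum is constant on every subinterval of $\tau$ free of critical values of the height functions $\tau|_{C_{i}}$. At a critical value — necessarily a local maximum or minimum, as each $C_{i}$ is a $1$-manifold — two transverse intersections of $F(\cdot,\tau)$ with some $e_{i}$ are created or destroyed; locally this replaces one occurrence of a polygon $A$ in the sequence by $(A,B,A)$, where $B$ is the polygon on the other side of $e_{i}$ (or the reverse), which is exactly an extension or contraction of type~(2) in Definition~\ref{defTPPathH}. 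By Sard's theorem there are only finitely many critical values, so composing the elementary moves across them gives a Definition~\ref{defTPPathH} chain from $\Gamma\circ f$ to $\Gamma\circ g$, i.e. $\Gamma\circ f\simeq_p\Gamma\circ g$.

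The main obstacle is this general-position step together with the bookkeeping that a birth/death event is genuinely \emph{local}: one must check that away from the critical point the slice pattern varies continuously without any combinatorial change, that the two newly created crossings cut off a short sub-arc of $F(\cdot,\tau)$ lying entirely in the single polygon on the far side of $e_{i}$ (so the move touches only one entry of the sequence), and that critical points occurring at the ends $t=0,1$ of the parameter interval — as well as the preliminary homotopies used to make $f$ and $g$ transverse — are covered by the same local analysis. Everything else reduces to the routine algebra of the sufficiency argument above.
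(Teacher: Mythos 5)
Your proposal is correct and follows essentially the same route as the paper: sufficiency via Lemma~\ref{theorem4} combined with (\ref{eq17}), and necessity by tracking how the intermediate paths $F(\cdot,\tau)$ cross the cutlines and identifying each birth or death of a crossing with an extension/contraction move of Definition~\ref{defTPPathH}. Your transversality and Morse-type bookkeeping simply makes rigorous what the paper argues informally via Fig.~\ref{fig_5}.
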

\begin{proof}
  \ \newline
  Necessary: For $f \simeq_p g$, by \textbf{Definition~\ref{defPathH}}, a continuous function $F:I\times I\to X$ can be obtained, such that $f$ varies continuously to $g$. As shown in Fig.~\ref{fig_5}, during this continuous change, the path $f^{\tau}$ will gradually enter the convex polygons through which $g$ passes, where $f^{\tau}(t)=F(\tau,t)$. Let $\alpha \in I$ be any time for $f^{\tau}$ enters a new convex polygon, we can write $f^{\alpha}$ as
  \begin{equation}
    \label{eq23}
    f^{\alpha} \cong_p f^{\alpha}_1 * f^{\alpha}_2 * f^{\alpha}_3,
  \end{equation}
  where $f^{\alpha}_1$ is the part of $f^{\alpha}$ from $f^{\alpha}(0)$ to the boundary of the new convex polygon, $f^{\alpha}_3$ is the part of $f^{\alpha}$ from the boundary of the new convex polygon to $f^{\alpha}(1)$, and $f^{\alpha}_2$ is the section of $f^{\alpha}$ in the new convex polygon, according to characteristics 2) and 3) of the division method mentioned in the previous subsection. The endpoint of each cutline is the boundary of $X_{free}$ and therefore, $f^{\alpha}_2(0)$ and $f^{\alpha}_2(1)$ are on the same cutline. Hence, the changes of these two process paths are mapped into $P(X_{topo})$ using $\Gamma$, which must be equal to the process described in \textbf{Definition~\ref{defTPPathH}}.
  \ \newline \indent
  Therefore, with the change from $f$ to $g$, we can define a sequence that conforms to \textbf{Definition~\ref{defTPPathH}}, such that $\Gamma \circ f \simeq_p \Gamma \circ g$.
  \ \newline
  Sufficient: For $\Gamma \circ f \simeq_p \Gamma \circ g$, we can determine a sequence of paths that fits the description of \textbf{Definition~\ref{defTPPathH}}. The first element of the sequence is $\Gamma \circ f$ and the last element is $\Gamma \circ g$. Hence, according to \textbf{Lemma~\ref{theorem4}}, we obtain
  \begin{equation}
    \label{eq24}
    \Gamma^g \circ \Gamma \circ f \simeq_p \Gamma^g \circ  \Gamma \circ g.
  \end{equation}
  According to (\ref{eq17}) we obtain the homotopic path of $f$ and $g$ as follows:
  \begin{equation}
    \label{eq25}
    f' \cong_p l_s*(\Gamma^g \circ \Gamma \circ f)*l_e \simeq_p f,
  \end{equation}
  \begin{equation}
    \label{eq26}
    g' \cong_p l_s*(\Gamma^g \circ \Gamma \circ g)*l_e \simeq_p g,
  \end{equation}
  where $l_s$ is a line connecting point $x_0$ and the starting point of $\Gamma \circ f$, $l_e$ is a line connecting the ending point of $\Gamma \circ f$ and point $x_1$. Using (\ref{eq24}), (\ref{eq25}) and (\ref{eq26}) we can obtain $f' \simeq_p g'$; hence, $f \simeq_p g$.
\end{proof}
\begin{theorem}
  \label{theorem6}
  For any homotopy path class $[f]$ in $P(X_{free})$, $\Gamma^* \circ [f]$ has a unique result.
\end{theorem}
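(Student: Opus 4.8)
The plan is to show that $\Gamma^*$ is constant on each homotopy class, i.e., that $f \simeq_p g$ in $P(X_{free})$ forces $\Gamma^* \circ f = \Gamma^* \circ g$ as finite sequences in $P(X_{topo})$; this is precisely the assertion that the set $\Gamma^* \circ [f]$ collapses to a single element. The preceding lemmas have already done the substantive work, so the argument is a short chain of implications rather than a new construction.

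Concretely, I would fix two representatives $f, g$ of the class, so that $f \simeq_p g$. Applying \textbf{Lemma~\ref{theorem5}} yields $\Gamma \circ f \simeq_p \Gamma \circ g$ in $P(X_{topo})$. By \textbf{Theorem~\ref{theorem1}} each of the sequences $\Gamma \circ f$ and $\Gamma \circ g$ has a unique no rollback representative, and by \textbf{Definition~\ref{defMAPTstar}} these representatives are exactly $\Gamma^* \circ f$ and $\Gamma^* \circ g$. Finally, \textbf{Lemma~\ref{theorem2}} (equivalently \textbf{Corollary~\ref{theorem3}}) says that two paths of $P(X_{topo})$ are path homotopic if and only if their no rollback paths coincide; together with $\Gamma \circ f \simeq_p \Gamma \circ g$ this gives $\Gamma^* \circ f = \Gamma^* \circ g$. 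Since $f$ and $g$ were arbitrary elements of $[f]$, the value $\Gamma^* \circ [f]$ is unique, which is the claim.

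The main obstacle is not in the deduction above but in making sure the objects it manipulates are unambiguous, and this is where I would spend the care. First, $\Gamma \circ f$ must be a genuine, well-defined element of $P(X_{topo})$ even when $f$ grazes a cutline tangentially or passes through a vertex shared by several convex cells; here I would appeal to the convention fixed in \textbf{Definition~\ref{defMAPT}} (the ordered sequence of convex cells visited as $t$ runs from $0$ to $1$) and observe that any residual ambiguity only alters $\Gamma \circ f$ by insertions or deletions of fragments of the form $(\mathrm{x},\mathrm{x})$ or $(\mathrm{x},\mathrm{y},\mathrm{x})$, hence leaves the no rollback path untouched. Second, the word ``equal'' in \textbf{Lemma~\ref{theorem2}} must be read as equality of finite sequences, not merely homotopy; it is exactly this that makes $\Gamma^* \circ [f]$ a literal singleton, and it is already guaranteed by \textbf{Theorem~\ref{theorem1}} and \textbf{Lemma~\ref{theorem2}}.

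For completeness I would also note the companion fact, which is the real payoff of the theorem: $\Gamma^*$ is onto the set of no rollback paths, since every no rollback path $\mathrm{g}$ equals $\Gamma^* \circ (\Gamma^g \circ \mathrm{g})$ by (\ref{eq17}) and \textbf{Lemma~\ref{theorem5}}. Combined with \textbf{Theorem~\ref{theorem6}} this shows that $\Gamma^*$ descends to a bijection between the homotopy classes in $P(X_{free};x_0,x_1)$ and the no rollback paths with the corresponding endpoints, i.e., the proposed encoder is both well-defined and complete.
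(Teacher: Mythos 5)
Your argument is correct and follows essentially the same route as the paper: apply Lemma~\ref{theorem5} to transfer the homotopy $f \simeq_p g$ into $P(X_{topo})$, then invoke Theorem~\ref{theorem1} together with Lemma~\ref{theorem2}/Corollary~\ref{theorem3} to conclude that the two no rollback paths coincide, so $\Gamma^* \circ [f]$ is a singleton. Your added care about the well-definedness of $\Gamma \circ f$ and the surjectivity remark are useful supplements but do not change the substance of the proof.
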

\begin{proof}
  According to \textbf{Lemma~\ref{theorem5}}, the homotopy path in $P(X_{free})$ remains homotopic when mapped to $P(X_{topo})$, i.e., $\Gamma^* \circ [f]=[\Gamma^* \circ f]$. Furthermore, according to \textbf{Corollary~\ref{theorem3}}, we can obtain that for every homotopy class in $P(X_{topo})$ there is one no rollback path that is homotopic to $\Gamma \circ [f]$. In other words, $\Gamma^* \circ [f]$ has a unique result.
\end{proof}
According to the above theorems, we obtained an encoder $\Gamma^*$ that can efficiently encode homotopy classes in $P(X_{free})$. For any paths with the same starting point and ending point in $P(X_{free})$, we can determine if they belong to the same homotopy class by determining if their mappings using $\Gamma^*$ are equal. In addition, $\Gamma^g$ can be used as a decoder for $\Gamma^*$.
\begin{remark}
  \label{remarkDecouple}
  Using this encoder and decoder, we can moreover decouple the optimal path planning process into the following two tasks:
  (i) Search for the homotopy path class that may contain the optimal path, and (ii) Search for the shortest homotopic path in the homotopy path class.
\end{remark}


\begin{theorem}
  \label{theorem7}
  For any $f \in P(X_{free})$, its locally optimal path $f^*$ satisfies $\Gamma \circ f^* = \Gamma^* \circ f$.
\end{theorem}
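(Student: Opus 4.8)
The plan is to split the statement into an easy equivalence-class part and a harder ``no rollback'' part, and to obtain the latter by a shortcutting argument that exploits the convexity of the cells of the division.

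First I would dispose of the routine half. Since $f^{*}$ is by definition the locally optimal (hence a length-minimising) path of $[f]$, we have $f^{*}\simeq_{p}f$, so \textbf{Lemma~\ref{theorem5}} gives $\Gamma\circ f^{*}\simeq_{p}\Gamma\circ f$ in $P(X_{topo})$. By \textbf{Lemma~\ref{theorem2}} (equivalently \textbf{Theorem~\ref{theorem1}}) two homotopic paths in $X_{topo}$ share the same no rollback path, so $\Gamma^{*}\circ f^{*}=\Gamma^{*}\circ f$ already holds. Consequently it suffices to prove that $\Gamma\circ f^{*}$ is \emph{itself} a no rollback path, for then $\Gamma\circ f^{*}=\Gamma^{*}\circ f^{*}=\Gamma^{*}\circ f$, which is the claim.

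To prove that $\Gamma\circ f^{*}$ has no rollback I would argue by contradiction. Consecutive entries of any $\Gamma$-image are distinct cells by \textbf{Definition~\ref{defTPPath}}, so a rollback in $\Gamma\circ f^{*}$ would force a fragment of the form $(\mathrm{x},\mathrm{y},\mathrm{x})$: the path $f^{*}$ leaves a convex cell $P_{\mathrm{x}}$ across a cutline, runs through the adjacent cell $P_{\mathrm{y}}$, and re-enters $P_{\mathrm{x}}$. Let $p$ and $q$ be the two crossing points and let $f^{*}|_{[s_{1},s_{2}]}$ be the portion of $f^{*}$ between them; by characteristics 3) and 4) of the division listed after Algorithm~\ref{alg1} it lies in the closed convex cell $\overline{P_{\mathrm{y}}}$. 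Replace $f^{*}|_{[s_{1},s_{2}]}$ by the straight segment $l^{q}_{p}$. Since $\overline{P_{\mathrm{y}}}$ is convex, hence simply connected, and the endpoints are fixed, the resulting path $\hat{f}$ is path homotopic to $f^{*}$ in $X_{free}$, so $\hat{f}\in[f]$; and by the triangle inequality applied to the length functional in (\ref{eq6}), $S(\hat{f})\leqslant S(f^{*})$, strictly unless $f^{*}|_{[s_{1},s_{2}]}$ was already that segment. As $f^{*}$ is length-minimising in $[f]$ the strict case is impossible, and in the degenerate case the excursion runs along the cutline, so it lies on $\partial P_{\mathrm{x}}$ and can be pushed off $\overline{P_{\mathrm{y}}}$ without changing its length, again deleting $\mathrm{y}$ from the image. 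Either way $\Gamma\circ f^{*}$ contains no such fragment, so it is a no rollback path and the proof is complete.

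The main obstacle is this last step: making the shortcut argument airtight. One must pin down precisely what ``locally optimal path in $[f]$'' means (a minimiser in the class, or a path admitting no length-decreasing homotopy) and, under that definition, verify that the surgery on $f^{*}$ genuinely produces a path that stays inside $X_{free}$ and inside $[f]$ --- this is where \textbf{Remark~\ref{remarkMapT}} and the four characteristics of the convex division are used --- and that the degenerate boundary situations (a path merely grazing or sliding along a cutline, or an edge shared by two cells along which the bare node sequence of $\Gamma$ could be ambiguous) cannot produce a path that is optimal yet still carries a rollback. Everything else --- the reduction, the homomorphy of $\Gamma$, and uniqueness of the no rollback representative --- is immediate from the already-established \textbf{Lemma~\ref{theorem5}}, \textbf{Lemma~\ref{theorem2}} and \textbf{Theorem~\ref{theorem1}}.
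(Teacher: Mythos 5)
Your argument is essentially the paper's own proof: you reduce, via \textbf{Lemma~\ref{theorem5}} and the uniqueness in \textbf{Theorem~\ref{theorem1}}, to showing that $\Gamma \circ f^*$ is itself a no rollback path, and then contradict length-minimality of $f^*$ by replacing the excursion corresponding to a fragment $(\mathrm{x},\mathrm{y},\mathrm{x})$ with a straight-line shortcut inside a convex cell --- precisely the shortcut-by-convexity contradiction the paper uses, the only (immaterial) difference being that the paper draws the chord between two points of $f^*$ inside the original polygon $P_\mathrm{x}$ rather than along the cutline in $\overline{P_\mathrm{y}}$. The reduction you make explicit and the degenerate grazing cases you flag are simply left implicit in the paper's more informal write-up, so the proposal is correct and follows the same route.
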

\begin{proof}
  For this proposition, it is only necessary to prove that $\Gamma^* \circ f^* = \Gamma \circ f^*$, i.e., $\Gamma \circ f^*$ is a no rollback path. Assume there exists the shortest path $f^*$ such that $\Gamma \circ f^*$ contains a rollback path. By the definition of $\Gamma$, there exists a situation where $f^*$ enters a convex polygon and then returns to the original convex polygon. At this point using a straight line connecting two points of $f^*$ in the original convex polygon, it is simple to obtain a homotopy path that is shorter than $f^*$. This contradicts the fact that $f^*$ is the shortest of $[f]$, therefore the original proposition holds.
\end{proof}
\begin{theorem}
  \label{theorem8}
  For $\forall f,g \in P(X_{free};x_0,x_1)$, if $\Gamma^* \circ f$, $\Gamma^* \circ g$ can be written as follows:
  $\Gamma^* \circ f \cong_p \mathrm{h_1*h_2*h_3}$ and $\Gamma^* \circ g \cong_p \mathrm{h_1*h_3}$.
  Then $S(f^*) \geqslant S(g^*)$.
\end{theorem}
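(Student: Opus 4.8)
The plan is to turn the locally optimal path $f^{*}$ of the class $[f]$ into a competitor path that lies in $[g]$, by short-cutting, with a straight chord inside a single convex cell, the detour that the extra segment $\mathrm{h}_{2}$ records; since $g^{*}$ is the shortest path in $[g]$, this competitor witnesses $S(g^{*})\le S(f^{*})$.

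First I would fix the combinatorial picture. For $\mathrm{h}_{1}*\mathrm{h}_{3}$ to be a meaningful product, the last node of $\mathrm{h}_{1}$ must coincide with the first node of $\mathrm{h}_{3}$; call that node $\mathrm{a}$ and the corresponding convex polygon $X_{\mathrm{a}}$. Then $\mathrm{h}_{1}*\mathrm{h}_{2}*\mathrm{h}_{3}$ being meaningful forces $\mathrm{h}_{2}$ to be a loop based at $\mathrm{a}$. If $\mathrm{h}_{2}$ is an identity-element path, then $\Gamma^{*}\circ f=\Gamma^{*}\circ g$, hence $[f]=[g]$ and $f^{*}=g^{*}$, and the inequality is trivial; so assume $\mathrm{h}_{2}$ is nontrivial. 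By \textbf{Theorem~\ref{theorem7}}, $\Gamma\circ f^{*}=\Gamma^{*}\circ f\cong_{p}\mathrm{h}_{1}*\mathrm{h}_{2}*\mathrm{h}_{3}$, so the node sequence traversed by $f^{*}$ is the concatenation of $\mathrm{h}_{1}$, $\mathrm{h}_{2}$, $\mathrm{h}_{3}$, with $\mathrm{a}$ occurring at the two junction indices.

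Next I would convert this into a geometric splitting of $f^{*}$. By \textbf{Definition~\ref{defMAPT}} together with characteristics 2)--3) of the convex division (cutlines have their endpoints on $\partial X_{free}$, and a path crosses between cells only through cutlines), the two junction occurrences of $\mathrm{a}$ correspond to two times $t_{1}<t_{2}$ at which $f^{*}$ lies in $X_{\mathrm{a}}$. Set $p=f^{*}(t_{1})$, $q=f^{*}(t_{2})$, and write $f^{*}\cong_{p}f_{1}*f_{2}*f_{3}$ with $f_{1}=f^{*}|_{[0,t_{1}]}$, $f_{2}=f^{*}|_{[t_{1},t_{2}]}$, $f_{3}=f^{*}|_{[t_{2},1]}$; then $S(f^{*})=S(f_{1})+S(f_{2})+S(f_{3})$, and by the choice of $t_{1},t_{2}$ we have $\Gamma\circ f_{1}\cong_{p}\mathrm{h}_{1}$ and $\Gamma\circ f_{3}\cong_{p}\mathrm{h}_{3}$. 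Now define $g'=f_{1}*l^{q}_{p}*f_{3}\in P(X_{free};x_{0},x_{1})$, where the chord $l^{q}_{p}$ stays inside $X_{\mathrm{a}}$ by characteristic 4). Using the homomorphism property of $\Gamma$ in (\ref{eq15}), $\Gamma\circ g'\cong_{p}(\Gamma\circ f_{1})*(\Gamma\circ l^{q}_{p})*(\Gamma\circ f_{3})\cong_{p}\mathrm{h}_{1}*(\mathrm{a})*\mathrm{h}_{3}\cong_{p}\mathrm{h}_{1}*\mathrm{h}_{3}=\Gamma^{*}\circ g\simeq_{p}\Gamma\circ g$, so \textbf{Lemma~\ref{theorem5}} gives $g'\in[g]$. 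Finally, since a straight segment realises the minimal length between its endpoints, $S(l^{q}_{p})=\|q-p\|\le S(f_{2})$, and therefore $S(g^{*})\le S(g')=S(f_{1})+\|q-p\|+S(f_{3})\le S(f_{1})+S(f_{2})+S(f_{3})=S(f^{*})$.

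The step I expect to be the main obstacle is this faithful decomposition: rigorously producing the times $t_{1}<t_{2}$ and certifying that the node sequences of the three pieces are exactly $\mathrm{h}_{1},\mathrm{h}_{2},\mathrm{h}_{3}$, so that in particular both $p$ and $q$ lie in the same convex cell $X_{\mathrm{a}}$ (which is what makes the chord short-cut legitimate). This is precisely where the geometric content of the convex division---cutlines separating adjacent cells and the absence of spurious vertices---must be used with care; once it is in place, the homotopy verification via \textbf{Lemma~\ref{theorem5}} and the length comparison built into the definition of $S$ in (\ref{eq6}) are routine. A minor point is the attainment of $g^{*}$: it is enough to read $S(g^{*})$ as the infimum of $S$ over $[g]$, which $g'$ bounds from above.
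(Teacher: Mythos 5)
Your proposal is correct and follows essentially the same route as the paper's own proof: decompose $f^*$ into three pieces matching $\mathrm{h_1}$, $\mathrm{h_2}$, $\mathrm{h_3}$ via Theorem~\ref{theorem7} and the homomorphism property of $\Gamma$, replace the middle piece by the straight chord inside the convex cell at the junction node, and invoke Lemma~\ref{theorem5} to place the shortcut path in $[g]$ before comparing lengths. The only cosmetic difference is that you use a non-strict length bound where the paper asserts a strict one, which does not affect the conclusion.
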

\begin{proof}
  When $\mathrm{h_2}$ is an identity element,  $\Gamma^* \circ f = \Gamma^* \circ g$, hence $f^* \cong_p g^*$. Therefore $S(f^*) = S(g^*)$.

  When $\mathrm{h_2}$ is not an identity element, According to \textbf{Theorem~\ref{theorem7}} we obtain
  \begin{equation}
    \label{eq27}
    \Gamma \circ f^*=\Gamma^* \circ f \cong_p \mathrm{h_1*h_2*h_3}.
  \end{equation}
  According to \textbf{Remark~\ref{remarkMapT}}, $f^*$ can be decomposed into three parts: $f^* \cong_p f^*_1 * f^*_2 * f^*_3$, and they satisfy
  \begin{equation}
    \label{eq28}
    \Gamma \circ f^*_1 \cong_p \mathrm{h_1},\
    \Gamma \circ f^*_2 \cong_p \mathrm{h_2}\ \text{and} \
    \Gamma \circ f^*_3 \cong_p \mathrm{h_3}.
  \end{equation}
  Since $h_1*h_2*h_3$ and $h_1*h_3$ is meaningful, we have $ h_2(1)=h_1(-1)=h_3(1)=h_2(-1)$. Thus, both $f^*_2(0)$ and $f^*_2(1)$ are in the convex polygon $\mathrm{h_2}(1)$. It is easy to obtain the globally shortest path $l^{f^*_2(1)}_{f^*_2(0)}$ between $f^*_2(0)$ and $f^*_2(1)$. Therefore, it is simple to construct a path
  \begin{equation}
    \label{eq29}
    f' \cong_p f^*_1 * l^{f^*_2(1)}_{f^*_2(0)} * f^*_3,
  \end{equation}
  and
  \begin{equation}
    \label{eq30}
    \begin{aligned}
      S(f') & = S(f^*_1 * l^{f^*_2(1)}_{f^*_2(0)} * f^*_3)       \\
            & = S(f^*_1) + S(l^{f^*_2(1)}_{f^*_2(0)}) + S(f^*_3) \\
            & < S(f^*_1) + S(f^*_2) + S(f^*_3)                   \\
            & = S(f^*_1 * f^*_2 * f^*_3)                         \\
            & = S(f^*).
    \end{aligned}
  \end{equation}
  As $\Gamma \circ l^{f^*_2(1)}_{f^*_2(0)} = e_{h_2(1)}$, it follows that
  \begin{equation}
    \label{eq31}
    \Gamma \circ f' \cong_p h_1*e_{h_2(1)}*h_3 \cong_p \Gamma^* \circ g \cong_p \Gamma \circ g^*.
  \end{equation}
  Therefore, according to \textbf{Lemma~\ref{theorem5}}, it follows that $f' \simeq_p g^*$, and $S(f^*) > S(f') \geqslant S(g^*)$.
\end{proof}
According to \textbf{Theorem~\ref{theorem7}} and \textbf{Theorem~\ref{theorem8}}, we can easily obtain an interesting corollary.
\begin{corollary}
  \label{theorem9}
  For any $f\in P(X_{free})$, if the sequence $\Gamma \circ f$ contains duplicate elements, then $f$ must not be the global optimal path between $f(0)$ and $f(1)$.
\end{corollary}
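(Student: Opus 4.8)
The plan is to derive Corollary~\ref{theorem9} as a direct consequence of the two preceding theorems by a proof by contradiction. Suppose $f \in P(X_{free})$ is such that the sequence $\Gamma \circ f$ contains duplicate elements, yet $f$ is the global optimal path between $f(0)$ and $f(1)$; in particular $f$ is then also the locally optimal path in its own homotopy class, so $f = f^*$ up to path time homotopy, and hence $S(f) = S(f^*)$.

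First I would invoke \textbf{Theorem~\ref{theorem7}}, which tells us that for the locally optimal path $f^*$ we have $\Gamma \circ f^* = \Gamma^* \circ f$, i.e.\ $\Gamma \circ f^*$ is a no rollback path. The key observation is then that a no rollback path in $X_{topo}$ can still legitimately contain a repeated node (a node may be revisited via a genuinely different route), so duplication in $\Gamma \circ f$ does not by itself contradict Theorem~\ref{theorem7}; I must instead extract a ``shortcut'' homotopy class and apply Theorem~\ref{theorem8}. Concretely, writing $\Gamma^* \circ f$ as a sequence and picking the first and last occurrences of a duplicated node $\mathrm{x}$, I split $\Gamma^* \circ f \cong_p \mathrm{h_1 * h_2 * h_3}$, where $\mathrm{h_2}$ is the (nonempty, non-identity) loop-like middle segment from that first occurrence of $\mathrm{x}$ back to $\mathrm{x}$, and $\mathrm{h_1}$, $\mathrm{h_3}$ are the prefix and suffix. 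Since $\mathrm{h_1*h_2*h_3}$ is a valid product, $\mathrm{h_1*h_3}$ is also a valid product (the endpoint of $\mathrm{h_1}$ equals the start of $\mathrm{h_3}$, both being $\mathrm{x}$). Because $\Gamma^*$ is surjective onto no rollback paths composed with $\Gamma^g$, there is some $g \in P(X_{free}; f(0), f(1))$ with $\Gamma^* \circ g \cong_p \mathrm{h_1 * h_3}$ (for instance $g = \Gamma^g \circ (\mathrm{h_1 * h_3})$ suitably joined to the endpoints via line segments as in (\ref{eq17})).

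Now \textbf{Theorem~\ref{theorem8}} applies directly to this pair $f, g$ and yields $S(f^*) \geqslant S(g^*)$; moreover, since $\mathrm{h_2}$ is not an identity element, the proof of Theorem~\ref{theorem8} actually gives the strict inequality $S(f^*) > S(g^*)$. But $g^* \in P_f$ is a feasible path with the same endpoints as $f$, so $S(f) = S(f^*) > S(g^*) \geqslant S(f^\circledast)$, contradicting the assumption that $f$ is the global optimal path $f^\circledast$ (which satisfies $S(f) = S(f^\circledast)$). Hence $f$ cannot be globally optimal, proving the corollary.

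The main obstacle I anticipate is the bookkeeping around the splitting $\Gamma^* \circ f \cong_p \mathrm{h_1 * h_2 * h_3}$: I must be careful that ``$\Gamma \circ f$ contains duplicate elements'' transfers to ``$\Gamma^* \circ f$ contains duplicate elements'' (a priori the contraction to the no rollback path could destroy a repetition). The clean way around this is to note that if $\Gamma \circ f$ has a repeat, then either $\Gamma^* \circ f$ still has a repeat — in which case the argument above runs — or else the contraction operations that removed the repeat already produced a genuine rollback segment in $\Gamma \circ f$, which by \textbf{Theorem~\ref{theorem7}} already forbids $f$ from being locally (hence globally) optimal. Either branch closes the argument, and the rest is a routine application of the machinery built in Theorems~\ref{theorem7} and~\ref{theorem8}.
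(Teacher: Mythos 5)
Your overall strategy is exactly what the paper intends: the paper prints no proof of \textbf{Corollary~\ref{theorem9}} beyond the remark that it follows from \textbf{Theorem~\ref{theorem7}} and \textbf{Theorem~\ref{theorem8}}, and your case split (either the duplicate survives in $\Gamma^*\circ f$, or the reduction that removed it certifies a rollback in $\Gamma\circ f$, which already rules out local --- hence global --- optimality by the argument proving \textbf{Theorem~\ref{theorem7}}) together with the strict inequality read off from the proof of \textbf{Theorem~\ref{theorem8}} is the right skeleton. There is, however, one step that can fail as written: the claim that for your chosen duplicated node $\mathrm{x}$ there exists $g$ with $\Gamma^*\circ g \cong_p \mathrm{h_1*h_3}$, ``for instance $g=\Gamma^g\circ(\mathrm{h_1*h_3})$''. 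Since $\Gamma^*\circ g$ is by definition a no rollback path, this existence claim presupposes that $\mathrm{h_1*h_3}$ is itself free of rollbacks, and splitting at the first and last occurrences of an \emph{arbitrary} duplicated node does not guarantee this: if the element just before the first occurrence of $\mathrm{x}$ coincides with the element just after its last occurrence --- e.g. $\Gamma^*\circ f=(\mathrm{p,x,b,c,x,p,d})$, which is a no rollback sequence with duplicates, but splitting at $\mathrm{x}$ gives $\mathrm{h_1*h_3}=(\mathrm{p,x,p,d})$ --- then $\mathrm{h_1*h_3}$ contains a segment of the form $(\mathrm{p,x,p})$, so it is not the $\Gamma^*$-encoding of any path and \textbf{Theorem~\ref{theorem8}} cannot be invoked with that decomposition; your appeal to surjectivity silently skips this verification.

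The gap is small and closable: choose, among the duplicated nodes of $\Gamma^*\circ f$, one whose first occurrence is earliest (equivalently, keep enlarging $\mathrm{h_2}$ while the two junction neighbours coincide; the process terminates). If the junction elements coincided for this choice, that common neighbour would itself be a duplicated node with a strictly earlier first occurrence, a contradiction (and if the first occurrence is the initial element of the sequence there is no left junction element at all). Consecutive repeats at the junction are impossible because $\Gamma^*\circ f$ is no rollback, and any other forbidden segment of $\mathrm{h_1*h_3}$ would lie inside a contiguous block of $\Gamma^*\circ f$, which is likewise excluded. With this choice $\mathrm{h_1*h_3}$ is a genuine no rollback path, your $g=l_1*(\Gamma^g\circ(\mathrm{h_1*h_3}))*l_2$ has $\Gamma^*\circ g\cong_p \mathrm{h_1*h_3}$, and the rest of your argument --- strictness of $S(f^*)>S(g^*)$ because $\mathrm{h_2}$ is not an identity, then $S(f)=S(f^*)>S(g^*)\geqslant S(f^\circledast)$ contradicting global optimality --- goes through. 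A cosmetic point: you do not need ``$f=f^*$ up to time homotopy'' (uniqueness of the in-class minimiser is never established); all that is used is $S(f)=S(f^*)$, which holds simply because a globally optimal $f$ is in particular shortest within $[f]$.
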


\begin{lemma}
  \label{NSCLOP} 
  A necessary and sufficient condition for $f^* \in P(X_{free})$ to be the shortest path in $[f^*]$ is that, for any $t_1,t_2 \in [0,1]$, there exists the shortest path $g(t)=f^*(t_1+(t_2-t_1)t)$ in $[g]$.
\end{lemma}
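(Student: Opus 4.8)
The plan is to prove the two implications separately. Sufficiency is immediate: applying the stated condition with $t_1=0$ and $t_2=1$ gives exactly $g=f^*$, so $f^*$ is the shortest path in $[f^*]$ with nothing further to verify. The content of the lemma is therefore the necessity direction, which is an optimal-substructure (Bellman-type) statement, and I would establish it by a cut-and-paste contradiction.

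Assume $f^*$ is the shortest path in $[f^*]$ and fix $t_1,t_2\in[0,1]$. First I would reduce to the case $t_1\leqslant t_2$: when $t_1>t_2$ the sub-path $g$ is precisely the reverse $\bar{\tilde g}$ of the forward sub-path $\tilde g(s)=f^*\bigl(t_2+(t_1-t_2)s\bigr)$, and since $h\mapsto\bar h$ is a length-preserving bijection of $[\tilde g]$ onto $[\bar{\tilde g}]$, $g$ is shortest in $[g]$ exactly when $\tilde g$ is shortest in $[\tilde g]$; the case $t_1=t_2$ gives a constant path, trivially shortest in its class. Next I would split $f^*$ into three consecutive pieces by affine reparametrisation, $f_1(t)=f^*(t_1 t)$, $g(t)=f^*\bigl(t_1+(t_2-t_1)t\bigr)$ and $f_2(t)=f^*\bigl(t_2+(1-t_2)t\bigr)$, so that $f_1*g*f_2$ is a monotone reparametrisation of $f^*$ and hence $f_1*g*f_2\cong_p f^*$. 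This yields at once that $f_1*g*f_2$ and $f^*$ lie in the same homotopy class, that $S(f_1*g*f_2)=S(f^*)$, and, by the additivity of $S$ along products already used in (\ref{eq30}), that $S(f^*)=S(f_1)+S(g)+S(f_2)$. The degenerate cases $t_1=0$ or $t_2=1$ are absorbed by the identity-element property of $*$ in \textbf{Definition~\ref{defProPath}}.

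Now suppose, for contradiction, that $g$ is not a shortest path in $[g]$; then there exists $g'\in P(X_{free})$ with $g'\simeq_p g$ and $S(g')<S(g)$. Since $g'\simeq_p g$ forces $g'$ and $g$ to share their endpoints and since $\simeq_p$ is compatible with $*$ (concatenate the homotopy carrying $g$ to $g'$ with the constant homotopies of $f_1$ and $f_2$), the path $f':=f_1*g'*f_2\in P(X_{free})$ satisfies $f'\simeq_p f_1*g*f_2\simeq_p f^*$, so $f'\in[f^*]$, whereas $S(f')=S(f_1)+S(g')+S(f_2)<S(f_1)+S(g)+S(f_2)=S(f^*)$. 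This contradicts the optimality of $f^*$ in $[f^*]$; hence $g$ must be a shortest path in $[g]$, which completes the necessity direction.

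The argument is conceptually routine, and the only delicate points are foundational, and both are used tacitly elsewhere in the paper. First, the length functional $S$ of (\ref{eq6}) is invariant under monotone reparametrisation, and such a reparametrisation is in particular a path homotopy in the sense of \textbf{Definition~\ref{defPathH}}, so passing between $f^*$ and $f_1*g*f_2$ alters neither length nor homotopy class. Second, $\simeq_p$ is a congruence for the product $*$, so replacing one factor by a homotopic path yields a homotopic product. I would either cite these directly or record them as a one-line preliminary remark before the short cut-and-paste step; I do not anticipate any genuine obstacle beyond this bookkeeping.
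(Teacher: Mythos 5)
Your proposal is correct and follows essentially the same route as the paper's own proof: sufficiency by taking $t_1=0$, $t_2=1$, and necessity by the cut-and-paste contradiction, decomposing $f^*$ as (a reparametrisation of) $g_s*g*g_e$ and splicing in a strictly shorter $g'\simeq_p g$ to contradict optimality of $f^*$ in $[f^*]$. You are in fact slightly more careful than the paper — you treat the $t_1>t_2$ case by reversal and record the reparametrisation-invariance and congruence facts explicitly, and your inequality chain is stated in the correct direction, whereas the paper's displayed chain in (\ref{eq_NSCLOP2}) has the inequalities written the wrong way round even though the intended argument is the same.
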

\begin{proof}

  Necessary: The law of proof by contradiction is used herein. Suppose $f^*$ is the shortest path in $[f^*]$, and $\exists t_1, t_2 \in [0,1]$ such that $g(t)=f^*(t_1+(t_2-t_1)t)$ is not the shortest path in $[g]$; that is, there exists $g' \simeq_p g$ such that $S(g')<S(g)$. The path $f^*$ can be expressed as $f^* \cong_p g_s*g*g_e$, where
  \begin{align}
    \label{eq_NSCLOP1}
     & g_s(t) = f^*(a \cdot t),   & a = \min (t_1,t_2), \\
     & g_e(t) = f^*(b + (1-b) t), & b = \max (t_1,t_2).
  \end{align}
  Hence,
  \begin{equation}
    \label{eq_NSCLOP2}
    \begin{split}
      S(f^*) &= S(g_s) + S(g) + S(g_e)\\
      &< S(g_s) + S(g') + S(g_e)\\
      &< S(g_s * g' * g_e).
    \end{split}
  \end{equation}
  Therefore, there exists a path $g_s * g' * g_e$ in $[f^*]$ that is shorter than $f^*$. This contradicts our original hypothesis. Therefore, the sufficiency condition is true.

  Sufficient: When the latter of the proposition holds, let $t_1=0$ and $t_2=1$. At this moment $g(t)=f^*(t)$. Therefore, $f^*$ is the shortest path in $[f^*]$. Therefore, the necessary condition is true.
\end{proof}
\begin{theorem}
  \label{SSDR} 
  For any optimal path $f^* \in P(X_{free})$ where $x_s=f^*(0)$, $x_e=f^*(1)$. The $f^*$ must be expressed in the form of multiple line splices as follows.
  When $x_s$ and $x_e$ are within the same convex polygon,
  \begin{equation}
    \label{eq_SSDR1}
    f^* \cong_p l^{x_e}_{x_s}.
  \end{equation}
  When $x_s$ and $x_e$ are not within the same convex polygon,
  \begin{equation}
    \label{eq_SSDR2}
    f^* \cong_p l_{x_0}^{x_1} * l_{x_1}^{x_2} * \cdots * l_{x_{n-1}}^{x_n},
  \end{equation}
  where $x_0=x_s$, $x_n=x_e$, $x_1,\dotsc,x_{n-1}$ is the intersection of $f^*$ in turn with the cutlines.
\end{theorem}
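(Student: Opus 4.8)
The plan is to reduce the statement to its one-polygon case via \textbf{Lemma~\ref{NSCLOP}}, and to settle that one-polygon case directly from the convexity of the pieces together with the strict triangle inequality. \textbf{Theorem~\ref{theorem7}} supplies the global skeleton: since $f^*$ is optimal, $\Gamma\circ f^*=\Gamma^*\circ f^*$ is a finite no-rollback sequence, so $f^*$ meets the cutlines at only finitely many parameters and never re-enters the convex polygon it has just left.

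I would start with the case in which the whole image of $f^*$ lies in a single convex polygon $X_{con}$ (this covers the situation where $x_s,x_e$ lie in a common convex polygon and $\Gamma\circ f^*$ is a single node). By characteristic~4) of the division, the segment $l^{x_e}_{x_s}$ also lies in $X_{con}\subseteq X_{free}$; since a convex polygon is contractible, $f^*$ and $l^{x_e}_{x_s}$ are path homotopic in $X_{free}$, hence $l^{x_e}_{x_s}\in[f^*]$. Because $l^{x_e}_{x_s}$ is simultaneously the globally shortest path between $x_s$ and $x_e$ in the plane, optimality of $f^*$ forces $S(f^*)=S(l^{x_e}_{x_s})$; the strict triangle inequality for $\|\cdot\|$ then makes the straight segment the unique length-minimiser up to a monotone reparametrisation, so $f^*\cong_p l^{x_e}_{x_s}$, which is \eqref{eq_SSDR1}.

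For the general case, let $0<t_1<\dots<t_{n-1}<1$ be the finitely many parameters at which $f^*$ crosses a cutline, put $t_0=0$, $t_n=1$, $x_i=f^*(t_i)$, and $g_i(t)=f^*\big(t_{i-1}+(t_i-t_{i-1})t\big)$. By characteristic~3), on each subinterval the path stays inside one convex polygon, so $x_{i-1}$ and $x_i$ lie in the closure of a common convex polygon; by \textbf{Lemma~\ref{NSCLOP}}, each $g_i$ is the shortest path in its own class. The one-polygon case therefore applies to every $g_i$ and gives $g_i\cong_p l^{x_i}_{x_{i-1}}$. Concatenating through \textbf{Definition~\ref{defProPath}}, $f^*\cong_p g_1*\dots*g_n\cong_p l^{x_1}_{x_0}*\dots*l^{x_n}_{x_{n-1}}$, which is \eqref{eq_SSDR2}; by construction $x_1,\dots,x_{n-1}$ are the successive intersections of $f^*$ with the cutlines.

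The main obstacle is not the homotopy bookkeeping but the analytic regularity at the cutlines: I must justify that an optimal path crosses the cutlines at finitely many parameters and may be taken to meet each one transversally, ruling out or reparametrising away the degenerate possibilities that $f^*$ runs along a cutline over a whole subinterval, touches one tangentially without crossing, or threads a vertex shared by several cutlines. The tool here is again a straightening argument: any arc of $f^*$ that lies on a cutline or inside a polygon and is not already a chord can be replaced by that chord, which is strictly shorter by the triangle inequality and homotopic to it, contradicting optimality; \textbf{Theorem~\ref{theorem7}} then guarantees the reduced crossing pattern is genuinely no-rollback. A secondary, purely formal point is that the identities above hold only up to the time homotopy $\cong_p$, since any monotone reparametrisation of the polyline remains optimal — which is exactly why \textbf{Lemma~\ref{NSCLOP}} is phrased with a reparametrised restriction.
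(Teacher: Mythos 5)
Your proposal is correct and follows essentially the same route as the paper: split $f^*$ at its successive intersections with the cutlines, invoke Lemma~\ref{NSCLOP} to conclude each piece is optimal in its own class, and use convexity of the cells (characteristic~4 of the dissection) to force each optimal piece to be the straight segment, with the single-polygon case handling \eqref{eq_SSDR1}. The extra care you take — using Theorem~\ref{theorem7} to get a finite no-rollback crossing pattern and straightening away degenerate contacts with the cutlines — only fills in details the paper leaves implicit ("the points $x_0,\dots,x_n$ exist by the characteristics of the dissection"), not a different argument.
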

\begin{proof}
  According to the properties of convex polygons, (\ref{eq_SSDR1}) is established, and according to the characteristics of the convex dissection method in Subsection III.A, the points $x_0,x_1,\cdots ,x_n$ in (\ref{eq_SSDR2}) exist. Therefore, $f^*$ can be expressed as follows:
  \begin{equation}
    \label{eq_SSDR3}
    f^* \cong_p f_1 * f_2 * \cdots * f_n,
  \end{equation}
  where $f_k \in P(X_{free},x_{k-1},x_k)$. Moreover, based on \textbf{Lemma~\ref{NSCLOP}}, the optimal path $f^*$ must guarantee that each of its segments is optimal; hence, for any $f_k$, $f_k = l^{x_k}_{x_{k-1}}$.
\end{proof}

According to \textbf{Theorem~\ref{SSDR}}, we only need to consider the points on the cutlines. The state space of the optimal path planning task is reduced from the 2D space of $X_{free}$ to a 1D space on the cutlines.

\section{Path Planning Algorithm}
In this section, we present an optimal path planning algorithm CDT-RRT* (Rapidly-exploring Random Tree based on Convex Division Topology), as an example of the application of homotopy path class encoder in path planning.

CDT-RRT* path planning process is based on the following four main ideas:
\begin{enumerate}
  \item{According to \textbf{Theorem~\ref{SSDR}}, CDT-RRT* only samples points on the cutlines, reducing the sampling space from 2D to 1D.}
  \item{After $x_{init}$ and $x_{goal}$ are determined, CDT-RRT* prunes the topological graph according to \textbf{Corollary~\ref{theorem9}}, further reducing the size of the state space that needs to be sampled.}
  \item{CDT-RRT* adopts a non-uniform sampling strategy that is biased towards exploring unknown homotopy classes, which accelerates the convergence of the algorithm towards the global optimal path.}
  \item{The sampling process of CDT-RRT* is essentially a search for path homotopy classes. When searching for a new homotopy class, it uses dynamic programming to calculate the locally optimal path in that homotopy class and attempts to update the globally optimal path.\footnote{According to \textbf{Remark~\ref{remarkDecouple}}, the optimal path planning task is decoupled into two sub-tasks: (i) and (ii). The RRT* method in CDT-RRT* is mainly used to handle task (i), while task (ii) is handled by \textbf{Algorithm~\ref{alg5}}.}}
\end{enumerate}

\begin{figure*}[!t]
  \centering
  \subfloat[]{\includegraphics[width=2.0in]{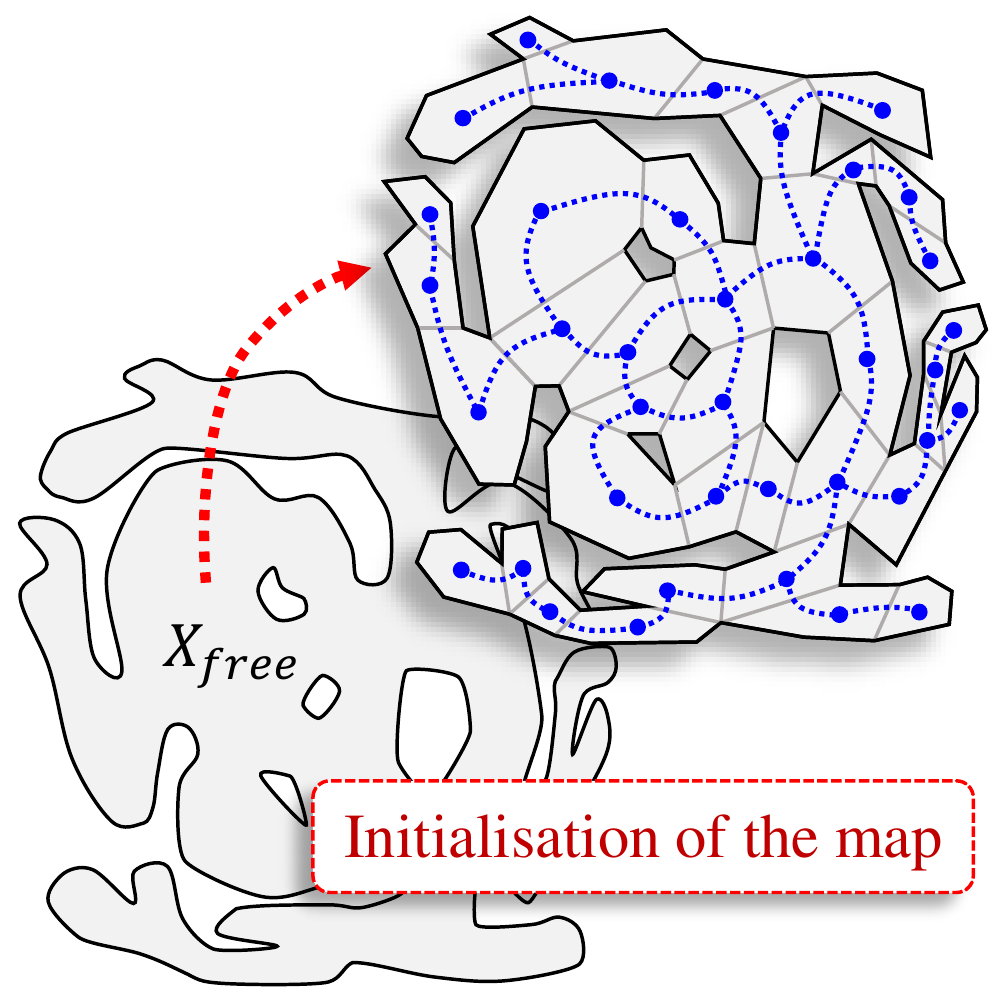}
    \label{fig_CDTRRT_1}}
  \hfil
  \subfloat[]{\includegraphics[width=2.0in]{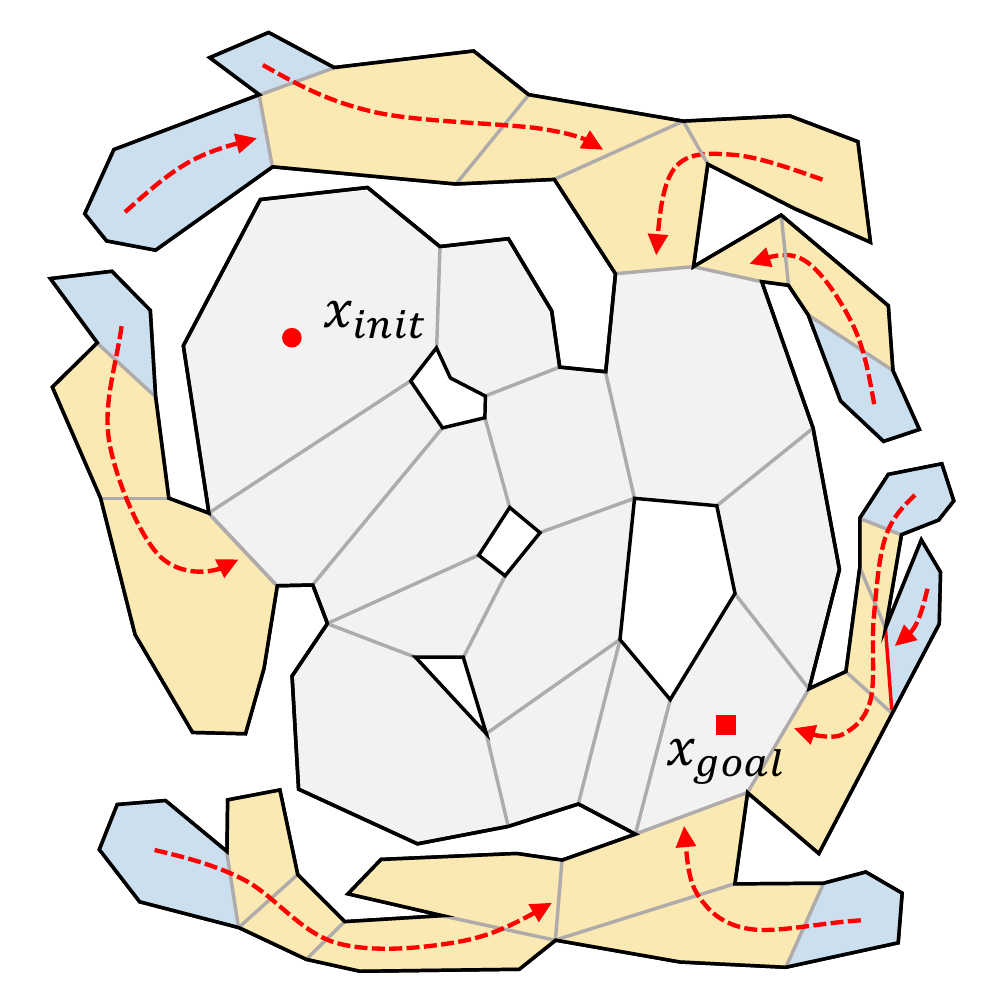}
    \label{fig_CDTRRT_2}}
  \hfil
  \subfloat[]{\includegraphics[width=2.0in]{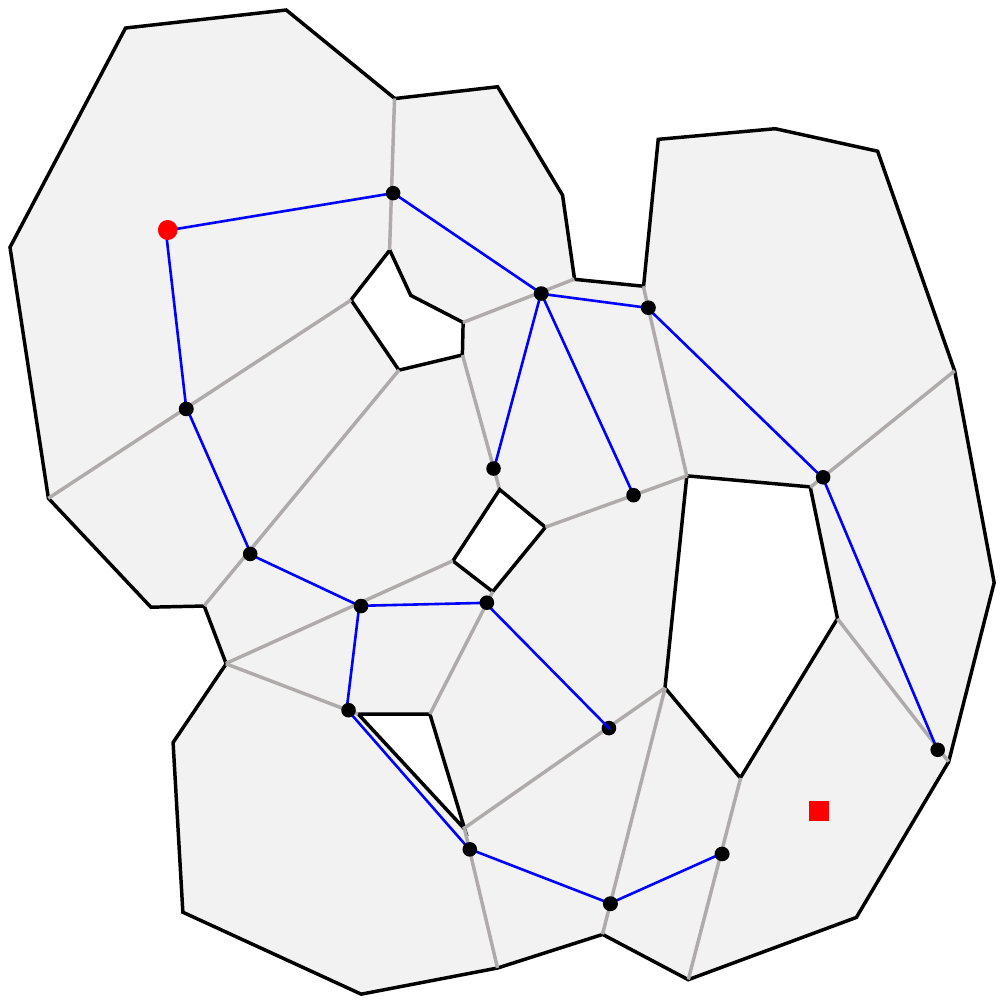}
    \label{fig_CDTRRT_3}}
  \hfil
  \subfloat[]{\includegraphics[width=2.0in]{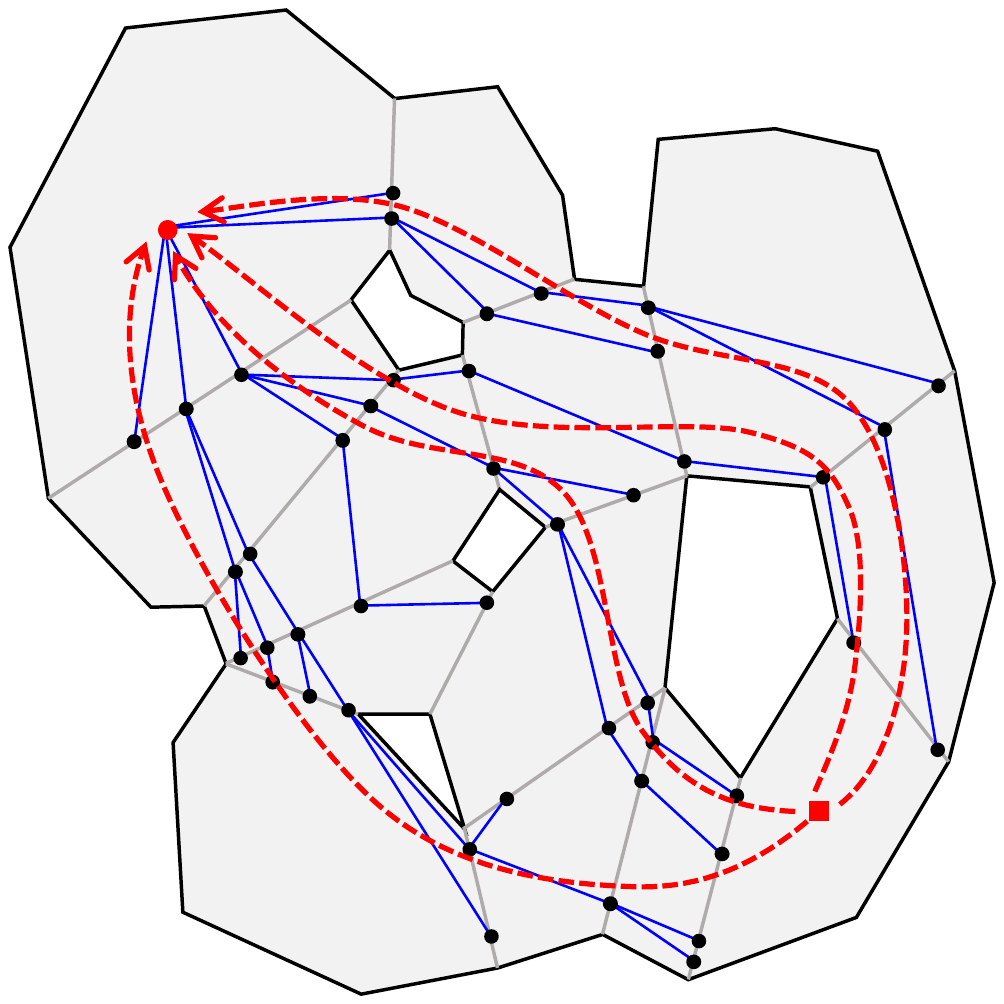}
    \label{fig_CDTRRT_4}}
  \hfil
  \subfloat[]{\includegraphics[width=2.0in]{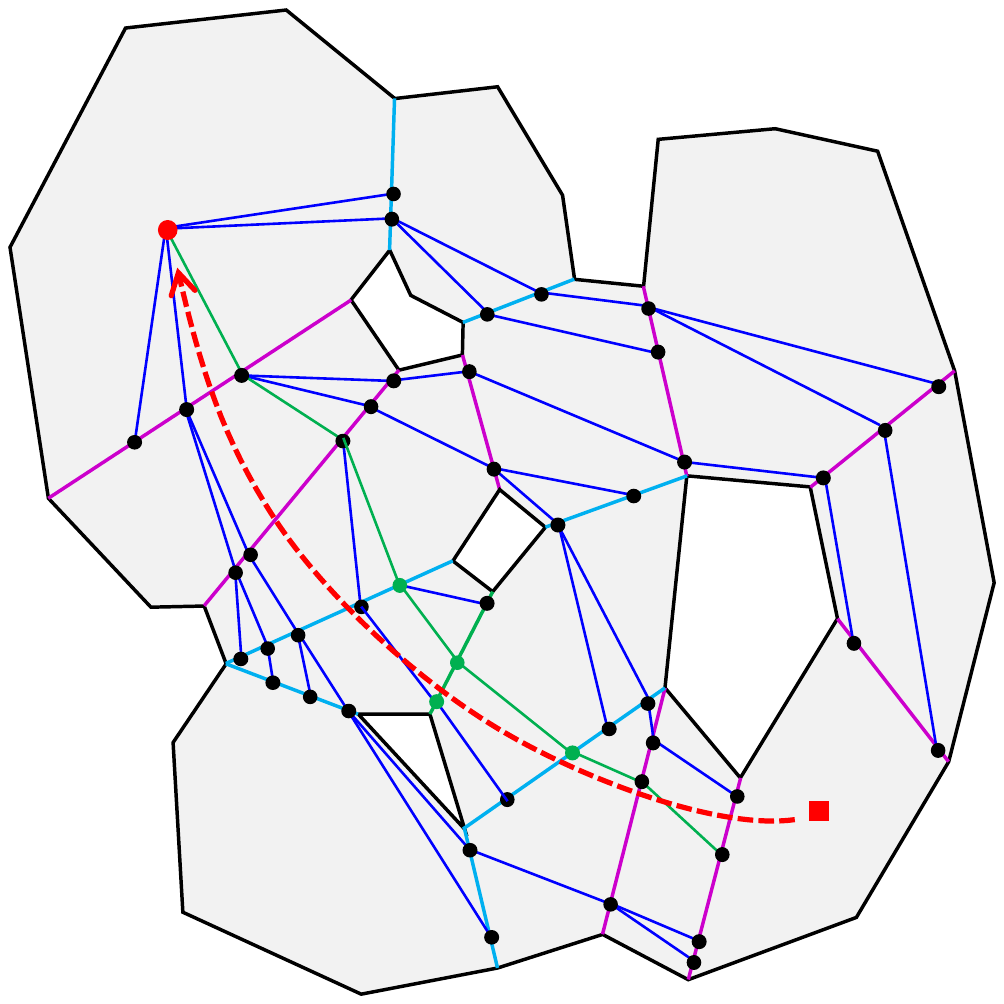}
    \label{fig_CDTRRT_5}}
  \hfil
  \subfloat[]{\includegraphics[width=2.0in]{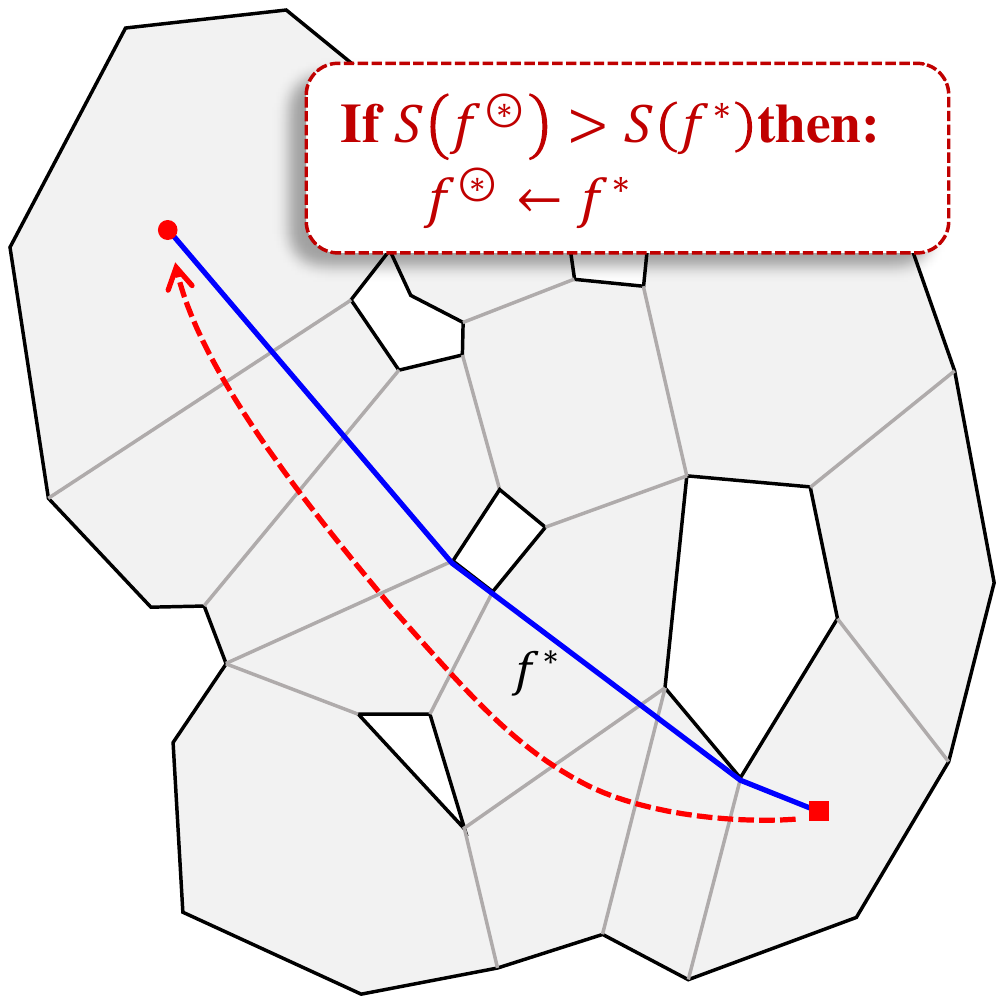}
    \label{fig_CDTRRT_6}}
  \caption{Illustration of the CDT-RRT* algorithm, the red circles and squares in the figure are the start and end points, respectively. (a) Initialize the map by fitting the free space with simple polygons, performing convex division, and constructing the topology graph. (b) After $x_{init}$ and $x_{goal}$ are determined, the CDT-RRT* proceeds with a pruning process, gradually removing redundant branches starting from the singly-connected convex polygon along the direction indicated by the red arrow. (c) CDT-RRT* will only sample one point ever cutline in the initial stage. (d) CDT-RRT* only samples on the cutline, using the sampling points adjacent to the $x_{goal}$ to backtrack the coding of feasible homotopy classes in $P(X_{topo})$, the red dotted lines are the four homotopy path classes currently obtained. (e) For the four homotopy classes already present in subfigure (d), sampling equation (\ref{eq_ST}) increases the sampling probability on the green and blue cutlines in this subfigure in order to sample some other optimal homotopy classes. (f) Using \textbf{Algorithm~\ref{alg5}} to determine the shortest path (blue solid line) in the newly sampled path homotopy class (red dashed line).}
  \label{fig_CDTRRT}
\end{figure*}

The pseudocode of CDT-RRT* is shown in \textbf{Algorithm~\ref{alg2}}. CDT-RRT* is divided into two stages: map initialisation stage (lines 2-5) and path search stage (lines 6-32).

In the map initialisation stage, the algorithm obtains the free space $X_{free}$ of the agent on the basis of the input map, and uses a simple polygon to fit $X_{free}$. Thereafter, it performs convex division, and builds the topology graph $X_{topo}$ according to the division result.

In the path search stage, the algorithm first obtains the start and end points of the planning task and initialises the tree $\mathcal{T}$ with $x_{init}$ as its root (lines 7,8). $\xi_{old}$ and $Q_r$ are two different queues initialised in line 8, where $\xi_{old}$ is used to record the homotopy path class that has been searched, and $Q_r$ is used for the rewiring process. Thereafter, the CDT-RRT* uses \textbf{Algorithm~\ref{alg3}} to prune $X_{topo}$ according to $x_{init}$ and $x_{goal}$ (line 9).  \textbf{Algorithm~\ref{alg3}} not only returns the pruned topological graph $X'_{topo}$, but also attempts to return the optimal CDT encoding $\mathrm{f}^\circledast$ between $x_{init}$ and $x_{goal}$. If $\mathrm{f}^\circledast$ is not None, the global optimal path $f^\circledast$ between $x_{init}$ and $x_{goal}$ can be obtained directly using $\mathrm{f}^\circledast$ (lines 10,11).

In lines 13-30 of the pseudocode, CDT-RRT* performs $N$ sampling search iterations. As shown in Fig.~\ref{fig_CDTRRT}, sampled nodes $x_{rand}$ (line 10) will be connected to a nearby node $x_{closest}$ and added to the tree (lines 15-17). The mathematical expression for line 16 of the algorithm is as
\begin{equation}
  \label{eq_FindClosest}
  x_{closest} = \arg \min _{x \in Q_{near}} \mathrm{Cost}(x) + \|x_{rand}-x\|,
\end{equation}
here $\mathrm{Cost}(x)$ as
\begin{equation}
  \label{eq_PointCost}
  \mathrm{Cost}(x)=S(f_x),
\end{equation}
and here $f_x$ is the path from $x_{init}$ to $x$ using $\mathcal{T}$.
If the node $x_{rand}$ is close to the node $x_{goal}$, then it is stored in $Q_g$ (lines 19,20). Otherwise, \textbf{Algorithm~\ref{alg4}} is used to perform rewiring at the tree-changing locations (line 22).

\begin{algorithm}[H]
  \caption{CDT-RRT*.}\label{alg:alg2}
  \begin{algorithmic}[1]
    \STATE {\textbf{Input: }}$Map$
    \STATE $X_{free} \gets $GetFreeSpace($Map$)
    \STATE $SimplePolygon \gets $PolygonFit($X_{free}$)
    \STATE Use \textbf{Algorithm~\ref{alg1}} to get $polygonlist$
    \STATE $X_{topo} \gets $BuildTopologyGraph($polygonlist$)
    \STATE \textbf{loop}
    \STATE \hspace{0.5cm}$x_{init},x_{goal},N \gets$ WaitingTaskInput()
    \STATE \hspace{0.5cm}Initialise $\mathcal{T}$ with $x_{init}$, $\xi_{old}$, $Q_r$, $Q_g$
    \STATE \hspace{0.5cm}$X'_{topo}, \mathrm{f}^\circledast \gets$ ReduceBranches($X_{topo},x_{init},x_{goal}$)
    \STATE \hspace{0.5cm}\textbf{if $\mathrm{f}^\circledast$} is not None \textbf{then}
    \STATE \hspace{1.0cm}$f^\circledast\gets$GetShortestPath($\mathrm{f}^\circledast,x_{init},x_{goal}$)
    \STATE \hspace{0.5cm}\textbf{else}
    \STATE \hspace{1.0cm}\textbf{for $i=0$ to $N$ do}
    \STATE \hspace{1.5cm}Sample $x_{rand}$ on cutlines of $X'_{topo}$ using (\ref{eq_ST})
    \STATE \hspace{1.5cm}$Q_{near} \gets$ FindNodesNear($X'_{topo},\mathcal{T},x_{rand}$)
    \STATE \hspace{1.5cm}$x_{closest} \gets$ FindClosest($x_{rand},Q_{near}$)
    \STATE \hspace{1.5cm}AddNodeToTree($\mathcal{T},x_{rand},x_{closest}$)
    \STATE \hspace{1.5cm}Push $x_{rand}$ to the end of $Q_r$
    \STATE \hspace{1.5cm}\textbf{if} $x_{rand}$ is near $x_{goal}$ \textbf{then}
    \STATE \hspace{2.0cm}Push $x_{rand}$ to the end of $Q_g$
    \STATE \hspace{1.5cm}\textbf{else}
    \STATE \hspace{2.0cm}RewireRandomNode($Q_r,\mathcal{T}$)
    \STATE \hspace{1.5cm}\textbf{for $x_e$ in $Q_g$ do}
    \STATE \hspace{2.0cm}Use $\mathcal{T}$ to backtrack $x_e$ to get the path $f$
    \STATE \hspace{2.0cm}$\mathrm{f} \gets \Gamma \circ (f*l^{x_{goal}}_{x_e})$
    \STATE \hspace{2.0cm}\textbf{if $\mathrm{f} \notin \xi_{old}$ then}
    \STATE \hspace{2.5cm}Push $\mathrm{f}$ into $\xi_{old}$
    \STATE \hspace{2.5cm}$f^*\gets$GetShortestPath($\mathrm{f},x_{init},x_{goal}$)
    \STATE \hspace{2.5cm}\textbf{if $\xi_{old} = \varnothing$ or $S(f^*)<S(f^\circledast)$ then}
    \STATE \hspace{3.0cm}$f^\circledast \gets f^*$
    \STATE \hspace{0.5cm}\textbf{output $f^\circledast$}
    \STATE \textbf{end loop}
  \end{algorithmic}
  \label{alg2}
\end{algorithm}

\begin{remark}
  \label{remarkRRN}
  In the tree $\mathcal{T}$, any path $f$ from any point to $x_{init}$ has the property that there are no repeated elements in the sequence $\Gamma \circ f$.
\end{remark}
\begin{proof}
  The aforementioned property of the tree $\mathcal{T}$ is mainly ensured by (\ref{eq_FindClosest}). Suppose the selection of $x_{closest}$ leads to the addition of $x_{rand}$ to the tree, and a repeated element appears in $\mathrm{f}_{x_{rand}}$, where $\mathrm{f}_{x_{rand}} = \Gamma \circ f_{x_{rand}}$ and $f_{x_{rand}}$ is the path from $x_{init}$ to $x_{rand}$ on $\mathcal{T}$. In other words, $f_{x_{rand}}$ enters the convex polygon $\mathrm{f}_{x_{rand}}(-1)$ twice (the second is from $x_{rand}$). Denote the point where $f_{x_{rand}}$ enters $\mathrm{f}_{x_{rand}}(-1)$ for the first time as $x_k$ (which is also a node in $\mathcal{T}$). Then we can decompose $f_{x_{rand}}$ as
  \begin{equation}
    \label{eq_RRN1}
    f_{x_{rand}} \cong_p f_1 * f_2,
  \end{equation}
  where $f_1$ is the path from $x_{init}$ to $x_k$ and $f_2$ is the path from $x_k$ to $x_{rand}$. Because both $x_k$ and $x_{rand}$ are on the convex polygon $\mathrm{f}_{x_{rand}}(-1)$, a new path $f'_{x_{rand}}$ from $x_{init}$ to $x_{rand}$ as
  \begin{equation}
    \label{eq_RRN2}
    f'_{x_{rand}} \cong_p f_1 * l^{x_{rand}}_{x_k},
  \end{equation}
  and
  \begin{equation}
    \label{eq_RRN3}
    \begin{aligned}
      S(f'_{x_{rand}}) & = S(f_1) * S(l^{x_{rand}}_{x_k}) \\
                       & < S(f_1) * S(f_2)                \\
                       & = S(f_{x_{rand}}).
    \end{aligned}
  \end{equation}
  This means that $x_k$ can result in a smaller cost for $x_{rand}$ compared to $x_{closest}$, which contradicts formula (\ref{eq_FindClosest}). Therefore, \textbf{Remark~\ref{remarkRRN}} holds.
\end{proof}
At the final stage of each iteration (lines 23-30), each element $x_e$ in the list $Q_g$ is used to obtain the path from $x_{init}$ to $x_e$ using $\mathcal{T}$ (line 24), and these paths will be encoded after concatenated with $l^{x_{goal}}_{x_e}$ (line 25). By determining if the encoded result is in $\xi_{old}$, we can determine if the homotopy class where this path is located has been searched (line 26). If the encoded result is not within $\xi_{old}$, we will push it into $\xi_{old}$ (line 27). Thereafter, we use \textbf{Algorithm~\ref{alg5}} to obtain the shortest path in this homotopy class (line 28) and determine if the current global optimal path requires replacement (lines 29,30). Finally, when the number of iterations reaches the preset $N$, the search task is terminated. Output planning results at the end of the task (line 31).

\noindent\textbf{Random point sampling:} Sampling in line 14 of \textbf{Algorithm~\ref{alg2}} uses (\ref{eq_ST}). $\rho_1$ and $\rho_2$ are random numbers between $[0,1]$. $c_1,c_2,\dots,c_n$ are the corresponding cutlines in $X'_{topo}$. $th_1,th_2,\dots,th_{n-1}$ are thresh used to select the cutlines when sampling. $\alpha $ is a very large number, $\beta $ is a number between $(0,1]$, $\mu_i$ is the number of sampling points adjacent to the $i$-th cutline, $\eta_i$ is the number of sampling points on the $i$-th cutline, and $\kappa_i$ is the number of occurrences of the $i$-th cutline in the sampled homotopy classes (list $\xi_{old}$).
\begin{equation}
  \label{eq_ST}
  \begin{aligned}
    {x_{rand}} = &
    \begin{cases}
      c_1(\rho_1), & \rho_2 \in[0,th_1),     \\
      c_2(\rho_1), & \rho_2 \in[th_1,th_2),  \\
                   & \vdots                  \\
      c_n(\rho_1), & \rho_2 \in[th_{n-1},1].
    \end{cases},                                                                                                                         \\
                 & \hspace{3em} th_k = \frac{\sum_{i = 1}^{k} \min(1,\mu_i)(1+\alpha 0^{\eta_i}) \beta^{\kappa_i}}{\sum_{i = 1}^{n} \min(1,\mu_i)(1+\alpha 0^{\eta_i}) \beta^{\kappa_i}}.
  \end{aligned}
\end{equation}
In (\ref{eq_ST}), \textbf{Algorithm~\ref{alg2}} samples on the cutlines of $X'_{topo}$. We select these cutlines using a roulette selection procedure. The probability of selecting these cutlines is non-uniform, and the probability that cutline is selected is affected by the following three factors:
\begin{enumerate}
  \item{If there are no nodes on the convex polygons adjacent to the cutline, then this cutline will not be selected. This is to ensure that each sampling point is valid during the initial growth phase of the tree (ensure that $Q_{near}$ is always non-empty in line 15 of \textbf{Algorithm~\ref{alg2}}). In (\ref{eq_ST}) this factor is realised by $\min(1,\mu_i)$.}
  \item{The number of occurrences of the cutline in the sampled homotopy classes. This factor is represented by $\beta^{\kappa_i}$ in (32). This part is used to control the tendency to explore unknown homotopy classes. The closer $\beta$ is to 0, the more inclined the algorithm is to sample on the cutlines where $\kappa$ is relatively small, and if a split line has a larger $\kappa$, then it has a lower probability of being selected.}
  \item{Rapid growth of the tree at the beginning. To rapidly make all the cutlines in $X'_{topo}$ satisfy the sampling condition (not limited by $\min(1,\mu_i)$), each cutline is sampled only once during the initial sampling. This sampling is achieved by means of the $\alpha 0^{\eta_i}$ term in (\ref{eq_ST}). For a cutline where $\eta_i$ is zero, an additional very large probability of being sampled will be obtained.}
\end{enumerate}

\subsection{Reduce Branches}
\begin{figure}[!t]
  \centering
  \subfloat[]{\includegraphics[width=1.6in]{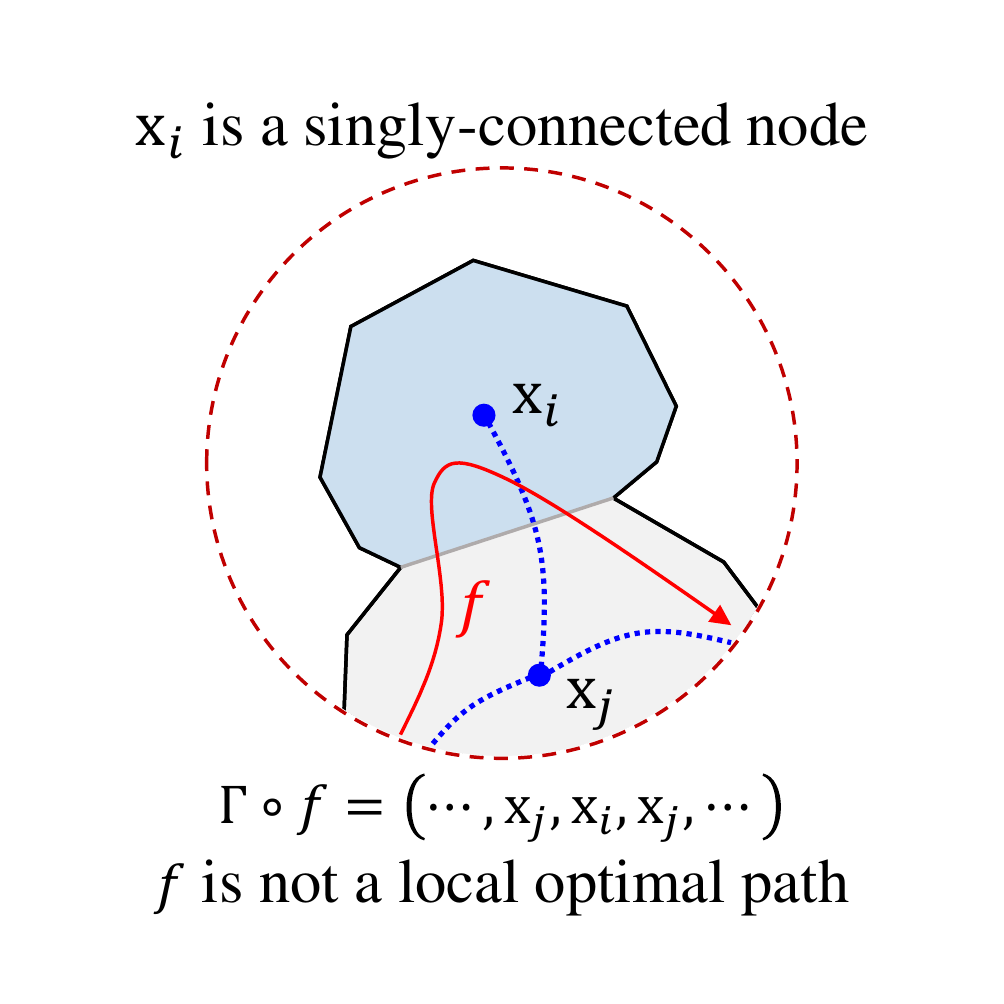}
    \label{fig_RB_1}}
  \hfil
  \subfloat[]{\includegraphics[width=1.6in]{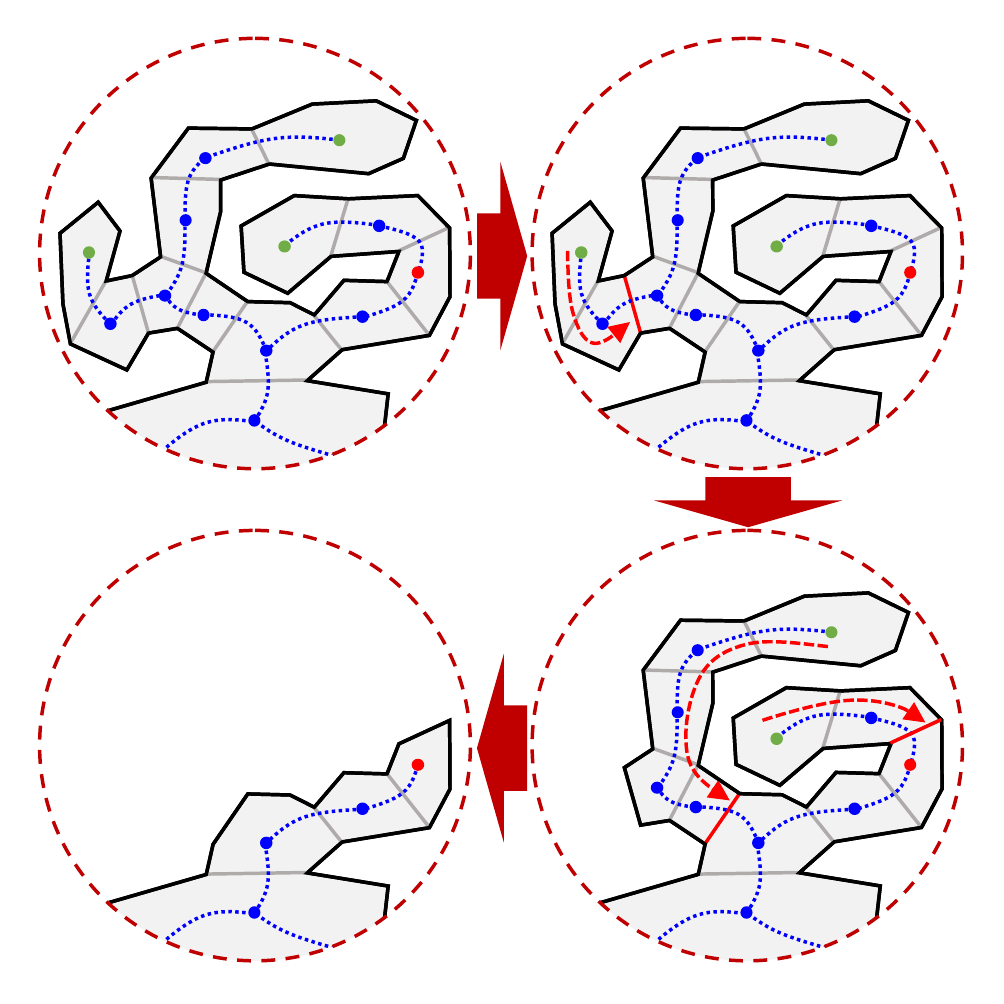}
    \label{fig_RB_2}}
  \hfil
  \subfloat[]{\includegraphics[width=1.6in]{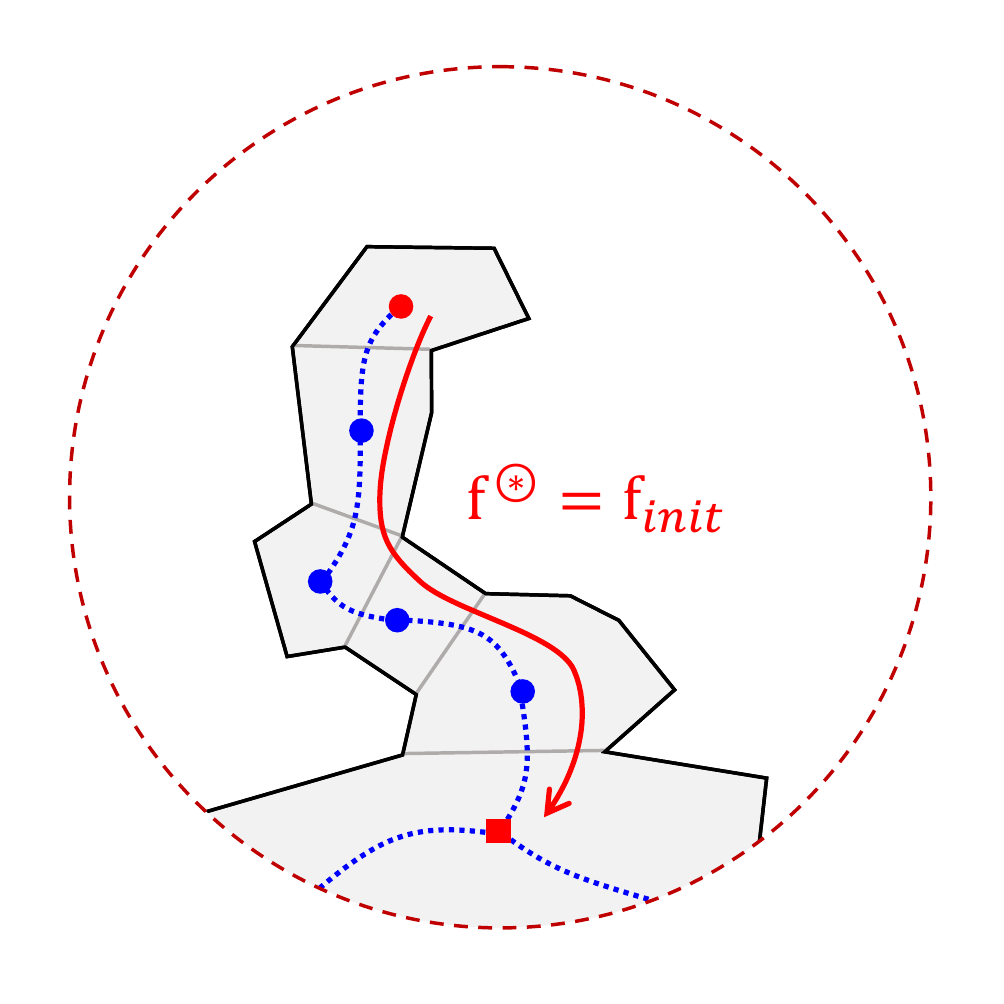}
    \label{fig_RB_3}}
  \hfil
  \subfloat[]{\includegraphics[width=1.6in]{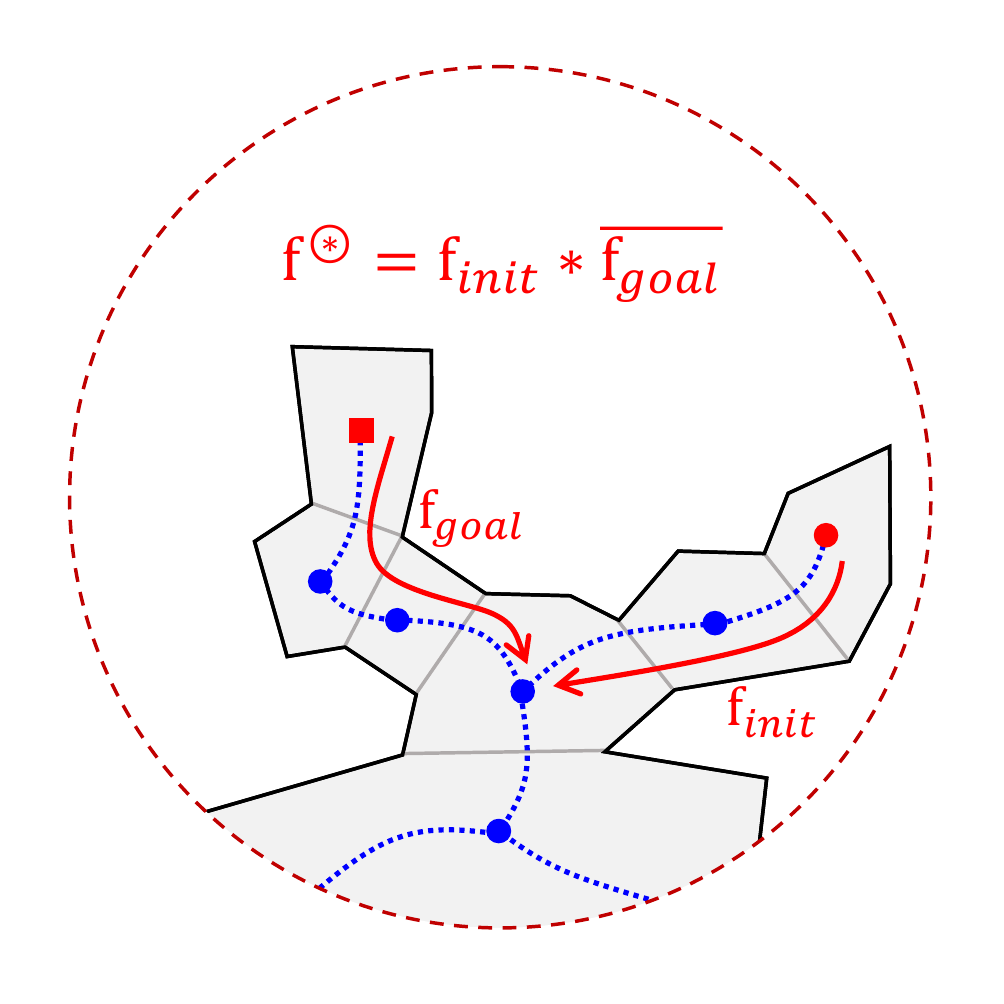}
    \label{fig_RB_4}}
  \caption{Illustration of the reduce branches algorithm. (a) If $x_{init}$ and $x_{goal}$ are not inside the single-connected convex polygon $\mathrm{x}_i$, then the optimal path will not pass through $\mathrm{x}_i$. (b) Starting from the green simply-connected node, irrelevant branches are successively removed until the currently considered node has two or more neighbors, or the node is either $\mathrm{x}_{init}$ or $\mathrm{x}_{goal}$ (the red nodes). (c) In further backtracking of $\mathrm{x}_{init}$ (or $\mathrm{x}_{goal}$), if $\mathrm{f}_{init}$ (or $\mathrm{f}_{goal}$) meets $\mathrm{x}_{goal}$ (or $\mathrm{x}_{init}$) along the way, the CDT encoding of the global optimal path can be directly returned as $\mathrm{f}_{init}$ (or $\overline{\mathrm{f}_{goal}}$). (d) If $\mathrm{f}_{init}$ meets $\mathrm{f}_{goal}$, then the CDT encoding of the global optimal path can be directly returned as $\mathrm{f}_{init} * \overline{\mathrm{f}_{goal}}$.}
  \label{fig_RB}
\end{figure}
If the start and end points of the path planning task are determined, then some special paths will be determined to be non-optimal paths. Hence these paths do not require searching.
\begin{corollary}
  \label{theorem10}
  If a convex polygon has only one neighbour and neither $x_{init}$ nor $x_{goal}$ is contained within it, then the optimal route must not pass through this convex polygon.
\end{corollary}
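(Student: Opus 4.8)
The plan is to deduce the statement directly from \textbf{Corollary~\ref{theorem9}} by a short argument by contradiction. Let $\mathrm{c}$ be the node of $X_{topo}$ associated with the convex polygon in question, suppose it has a unique topological neighbour $\mathrm{d}$, and assume that $x_{init}$ and $x_{goal}$ lie in convex polygons other than the one corresponding to $\mathrm{c}$. Suppose, for contradiction, that the global optimal path $f^\circledast \in P(X_{free};x_{init},x_{goal})$ passes through this polygon, i.e.\ $\mathrm{c}$ occurs in the sequence $\Gamma \circ f^\circledast$.

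First I would argue that $\mathrm{c}$ can be neither the first nor the last element of $\Gamma \circ f^\circledast$: by \textbf{Definition~\ref{defMAPT}}, the first element of $\Gamma \circ f^\circledast$ is the convex polygon containing $f^\circledast(0)=x_{init}$ and the last is the one containing $f^\circledast(1)=x_{goal}$, and by hypothesis neither of these equals $\mathrm{c}$. Hence every occurrence of $\mathrm{c}$ in the sequence has both an immediate predecessor and an immediate successor. This is precisely the step where the hypothesis ``neither $x_{init}$ nor $x_{goal}$ is contained in the polygon'' is used.

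Next I would invoke the adjacency requirement in \textbf{Definition~\ref{defTPPath}} together with characteristic 3) of the convex division (a path can leave a convex polygon only through one of its cutlines): any element adjacent in $\Gamma \circ f^\circledast$ to an occurrence of $\mathrm{c}$ must be a topological neighbour of $\mathrm{c}$, and the only such neighbour is $\mathrm{d}$. Therefore the predecessor and the successor of (every occurrence of) $\mathrm{c}$ both equal $\mathrm{d}$, so $\mathrm{d}$ appears at least twice in $\Gamma \circ f^\circledast$; that is, the sequence $\Gamma \circ f^\circledast$ contains a duplicate element. By \textbf{Corollary~\ref{theorem9}} this contradicts the assumption that $f^\circledast$ is the global optimal path between its endpoints, which establishes the claim.

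I do not anticipate a genuine obstacle; the only point needing a little care is the observation, already noted above, that $\mathrm{c}$ sits strictly in the interior of the sequence. A secondary remark worth stating explicitly is that the conclusion is unaffected if $f^\circledast$ enters the polygon several times: each visit is flanked by $\mathrm{d}$ on both sides, so a repeated element occurs in $\Gamma \circ f^\circledast$ regardless of the number of visits.
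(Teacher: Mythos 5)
Your argument is correct and is essentially the paper's own proof: both deduce the result from Corollary~\ref{theorem9} by noting that a path entering a polygon with a single neighbour must traverse that neighbour twice, producing a duplicate element in $\Gamma \circ f$. Your explicit justification that $\mathrm{c}$ lies strictly in the interior of the sequence (using the hypothesis on $x_{init}$ and $x_{goal}$) merely fills in a detail the paper leaves implicit, not a different route.
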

\textbf{Corollary~\ref{theorem10}} is easily obtained using \textbf{Corollary~\ref{theorem9}}. As shown in Fig.~\ref{fig_RB} (a), Since this type of convex polygon only has one neighbor, if the path $f\in P(X_{free},x_{init},x_{goal})$ passes through this convex polygon, then $f$ will pass through its neighbor at least twice. According to \textbf{Corollary~\ref{theorem9}}, $f$ must not be the global optimal path from $x_{init}$ to $x_{goal}$. Therefore, these types of convex polygons do not need to be considered during planning. The nodes in the topological graph $X_{topo}$ that correspond to such convex polygons are referred to as singly-connected node, and such nodes can be removed from $X_{topo}$ (except for the nodes where $x_{init}$ and $x_{goal}$ are located). As shown in Fig.~\ref{fig_RB} (b), with the removal of the original singly-connected nodes, some nodes in the topology graph may become new singly-connected nodes. Therefore, the algorithm requires backtracking to remove all possible singly-connected nodes. The pseudocode for removing singly-connected nodes is shown in lines 2-11 of \textbf{Algorithm~\ref{alg3}}.
\begin{algorithm}[H]
  \caption{ReduceBranches.}\label{alg:alg3}
  \begin{algorithmic}[1]
    \STATE {\textbf{Input: }}$X_{topo},x_{init},x_{goal}$
    \STATE Copy $X_{topo}$ to $X'_{topo}$
    \STATE $\mathrm{x}_{init}, \mathrm{x}_{goal} \gets \Gamma \circ x_{init}, \Gamma \circ x_{goal}$
    \STATE \textbf{for $\mathrm{x}$ in} SingleConnectedNodes of $X_{topo}$ \textbf{do}
    \STATE \hspace{0.5cm}$\mathrm{x}_{temp} \gets \mathrm{x}$
    \STATE \hspace{0.5cm}\textbf{while} $\mathrm{x}_{temp}$ is a Single-Connected Node of $X'_{topo}$ \textbf{do}
    \STATE \hspace{1.0cm}\textbf{if} $\mathrm{x}_{temp} \notin \{\mathrm{x}_{init}, \mathrm{x}_{goal}\}$ \textbf{then}
    \STATE \hspace{1.5cm}Remove $\mathrm{x}_{temp}$ from $X'_{topo}$
    \STATE \hspace{1.0cm}\textbf{else}
    \STATE \hspace{1.5cm}\textbf{break}
    \STATE \hspace{1.0cm}$\mathrm{x}_{temp} \gets$ neighbor($\mathrm{x}_{temp}$)

    \STATE $\mathrm{f}_{init} \gets e_{\mathrm{x}_{init}}$
    \STATE \textbf{while} $\text{neighbors}(\mathrm{f}_{init}(-1))/\mathrm{f}_{init} < 2$ \textbf{do}
    \STATE \hspace{0.5cm}$\mathrm{x}_{near} \gets$ neighbors$(\mathrm{f}_{init}(-1))/\mathrm{f}_{init}$
    \STATE \hspace{0.5cm}Push $\mathrm{x}_{near}$ to the end of $\mathrm{f}_{init}$
    \STATE \hspace{0.5cm}\textbf{if} $\mathrm{x}_{near}=\mathrm{x}_{goal}$ \textbf{then}
    \STATE \hspace{1.0cm}\textbf{return} $X'_{topo}, \mathrm{f}_{init}$

    \STATE $\mathrm{f}_{goal} \gets e_{\mathrm{x}_{goal}}$
    \STATE \textbf{while} $\text{neighbors}(\mathrm{f}_{goal}(-1))/\mathrm{f}_{goal} < 2$ \textbf{do}
    \STATE \hspace{0.5cm}$\mathrm{x}_{near} \gets$ neighbors$(\mathrm{f}_{goal}(-1))/\mathrm{f}_{goal}$
    \STATE \hspace{0.5cm}Push $\mathrm{x}_{near}$ to the end of $\mathrm{f}_{goal}$
    \STATE \hspace{0.5cm}\textbf{if} $\mathrm{x}_{near}=\mathrm{x}_{init}$ \textbf{then}
    \STATE \hspace{1.0cm}\textbf{return} $X'_{topo}, \overline{\mathrm{f}_{goal}}$

    \STATE \textbf{if} $\mathrm{f}_{goal}(-1)=\mathrm{f}_{init}(-1)$ \textbf{then}
    \STATE \hspace{0.5cm}\textbf{return} $X'_{topo}, \mathrm{f}_{init} * \overline{\mathrm{f}_{goal}}$
    \STATE \textbf{else}
    \STATE \hspace{0.5cm}\textbf{return} $X'_{topo}$, None
  \end{algorithmic}
  \label{alg3}
\end{algorithm}

In the algorithm, $X_{topo}$ is first cloned and stored in $X'_{topo}$ (line 2). Thereafter, every singly-connected node in $X_{topo}$ (lines 4-11) is backtracked. In each backtracking process, the current node is deleted in $X'_{topo}$ until the current node belongs to $\{\mathrm{x}_{init}, \mathrm{x}_{goal}\}$ or is not a singly-connected node in $X'_{topo}$. After that, $X'_{topo}$ is the topological graph after reduce branches.

In the second half of \textbf{Algorithm~\ref{alg3}} (lines 12-27), we present a method that can directly provide the CDT encoding of the globally optimal path from $x_{init}$ to $x_{goal}$ in certain special cases. In the pruned topological graph, we can continue to backtrack from $\mathrm{x}_{init}$ and $\mathrm{x}_{goal}$ to construct the backtrack paths $\mathrm{f}_{init}$, $\mathrm{f}_{goal}$, respectively (lines 12-17 and lines 18-23). As shown in Fig.~\ref{fig_RB} (c), if $\mathrm{x}_{init}$ meets $\mathrm{f}_{goal}$ or $\mathrm{x}_{goal}$ meets $\mathrm{f}_{init}$, then we will return $X'_{topo}$, and the CDT encoding of the global optimal path is $\mathrm{f}_{init}$ or $\overline{\mathrm{f}_{goal}}$ (lines 17,23). As shown in Fig.~\ref{fig_RB} (d), when $\mathrm{x}_{init}$ fails to meet $\mathrm{f}_{goal}$ and $x_{init}$ fails to meet $\mathrm{f}_{goal}$, the path must first depart from the initial point, pass through $\mathrm{f}_{init}(-1)$, then through $\mathrm{f}_{goal}(-1)$, and finally reach the goal point. If $\mathrm{f}_{init}(-1) = \mathrm{f}_{goal}(-1)$ in this case, we can directly give the CDT encoding of the global optimal path as $\mathrm{f}_{init} * \overline{\mathrm{f}_{goal}}$.\footnote{This is because for any $f \in P(X_{free},x_{init},x_{goal})$, if $\Gamma \circ f$ passes through nodes other than $\mathrm{f}_{init} * \overline{\mathrm{f}_{goal}}$, then $\Gamma \circ f$ will pass through $\mathrm{f}_{init}(-1)$ twice, so according to \textbf{Corollary~\ref{theorem9}}, $f$ is not optimal.}
Finally, \textbf{Algorithm~\ref{alg3}} returns the pruned topological graph $X'_{topo}$, and the CDT encoding $\mathrm{f}_{init} * \overline{\mathrm{f}_{goal}}$ of the global optimal path or an empty result if $\mathrm{f}^\circledast$ cannot be obtained (lines 25,27).

\subsection{Rewire Random Node}
Rewire random node is necessary when $x_{rand}$ is required to be added to the tree, similar as with RRT*. If $x_{rand}$ is not in the vicinity of $x_{goal}$, then $x_{rand}$ may cause a change in the connectivity of other nodes in $\mathcal{T}$. We consider the rewiring of nodes near $x_{rand}$. The pseudocode for rewire random node is shown in \textbf{Algorithm~\ref{alg4}}.

Rewiring is done when a node $x_{near}$ gets a lower cost value by passing from node $x_r$ instead of its parent (lines 6-10).
If rewiring happens to any node $x_{near}$, \textbf{Algorithm~\ref{alg4}} adds $x_{near}$ to $Q_r$ since nodes around $x_{near}$ have the potential to get rewired (line 9). \textbf{Algorithm~\ref{alg4}} continues to iterate until $Q_r$ is empty, at which point the rewiring optimisation of $\mathcal{T}$ is completed.
\begin{algorithm}[H]
  \caption{RewireRandomNode.}\label{alg:alg4}
  \begin{algorithmic}[1]
    \STATE {\textbf{Input: }}$Q_r,\mathcal{T}$
    \STATE \textbf{repeat}
    \STATE \hspace{0.5cm}$x_r \gets$ PopFirst($Q_r$)
    \STATE \hspace{0.5cm}$Q_{near} \gets$ FindNodesNear($X'_{topo},\mathcal{T},x_r$)

    \STATE \hspace{0.5cm}\textbf{for $x_{near}$ in $Q_{near}$ do}
    \STATE \hspace{1.0cm}$cost_{old} = \mathrm{Cost}(x_{near})$
    \STATE \hspace{1.0cm}$cost_{new} = \mathrm{Cost}(x_r) + \|x_r - x_{near}\|$
    \STATE \hspace{1.0cm}\textbf{if} $cost_{old}>cost_{new}$ \textbf{then}
    \STATE \hspace{1.5cm}Push $x_{near}$ to the end of $Q_r$
    \STATE \hspace{1.5cm}Parent($x_{near}$) $\gets x_r$
    \STATE \textbf{until} $Q_r$ is empty.
  \end{algorithmic}
  \label{alg4}
\end{algorithm}

\subsection{Get Shortest Path}
\textbf{Algorithm~\ref{alg:alg5}} is primarily used to fast obtain the shortest path in the homotopy path class. In line 2, $c'_1,c'_2,\cdots ,c'_m$ are the cutlines through which $\mathrm{f}$ passes subsequently. In line 3, $x_0=x_{s}$, and $x_{m+1}=x_{e}$, $x_1, x_2, \cdots , x_m$ are the midpoints of the cutlines $c'_1,c'_2,\cdots ,c'_m$ respectively. 
In lines 5-10, we draw inspiration from the Elastic Band algorithm to obtain a local optimal path. According to \textbf{Theorem~\ref{SSDR}}, we only update the control points of $f'$ on these $m$ cutlines, which greatly improves the convergence speed of $f'$.
In addition, the solution to line 8 of the algorithm is simply given by the following: (i) If lines $l^{x_{k+1}}_{x_{k-1}}$ and $c'_k$ intersect, $x_k$ is their intersection, and (ii) if lines $l^{x_{k+1}}_{x_{k-1}}$ and $c'_k$ do not intersect, $x_k$ is the endpoint of $c'_k$ that is closest to $l^{x_{k+1}}_{x_{k-1}}$.

\begin{algorithm}[H]
    \caption{GetShortestPath.}\label{alg:alg5}
    \begin{algorithmic}[1]
        \STATE {\textbf{Input: }}$\mathrm{f},x_s,x_e$
        \STATE Use $\mathrm{f}$ to get the cutline sequence $\{c'_1,c'_2,\cdots ,c'_m\}$
        \STATE $\{x_0,x_1,\cdots ,x_m,x_{m+1}\} \gets \{x_s,c'_1(\frac{1}{2}),\cdots ,c'_m(\frac{1}{2}),x_e\}$
        \STATE $f' \gets l_{x_0}^{x_1} * l_{x_1}^{x_2} * \cdots * l_{x_n}^{x_{n+1}}$
        \STATE \textbf{repeat}
        \STATE \hspace{0.5cm} ${cost} \gets S(f') $
        \STATE \hspace{0.5cm} \textbf{for $k \in \mathbb{N}^m_1$ do}
        \STATE \hspace{1.0cm} $x_k = \mathop {\arg \min }\limits_{x \in c'_k}S(l_{x_{k-1}}^x * l_x^{x_{k+1}})$
        \STATE \hspace{0.5cm} $f' \gets l_{x_0}^{x_1} * l_{x_1}^{x_2} * \cdots * l_{x_n}^{x_{n+1}}$
        \STATE \textbf{until} $S(f') - cost < \varepsilon $
        \STATE \textbf{return} $f'$
    \end{algorithmic}
    \label{alg5}
\end{algorithm}

\section{Experimental Studies and Results}
In this section, we conduct a number of simulation experiments to demonstrate the effectiveness and efficiency of the proposed algorithm. The computer we use is an Intel NUC (Next Unit of Computing). The NUC setup is an Intel i7-1165G7 (4.7GHz) with 16 GB of RAM. For the simulation, we use C++ on Ubuntu 18.04. For the experimental implementation, robot operating system (ROS) on Ubuntu 18.04 and a designed differential drive robot were used. The proposed algorithms were implemented on an IPC (Industrial Personal Computer) as the computational core for testing, the IPC setup is Intel Celeron J4125 (2.70 GHz) with 8 GB of RAM.

In experiments, two parameters were often used to compare the performance of the algorithms: $t_{init}$, which is the time of the initial solution; and $t_{2\%}$, which is the time to determine a solution of cost ($1.02\times C_{optimal}$), where $ C_{optimal}$ is the optimal cost.
\subsection{Map Initialisation to Build Topology Graph}
\begin{figure*}[!t]
  \centering
  \subfloat[]{\includegraphics[width=1.10in]{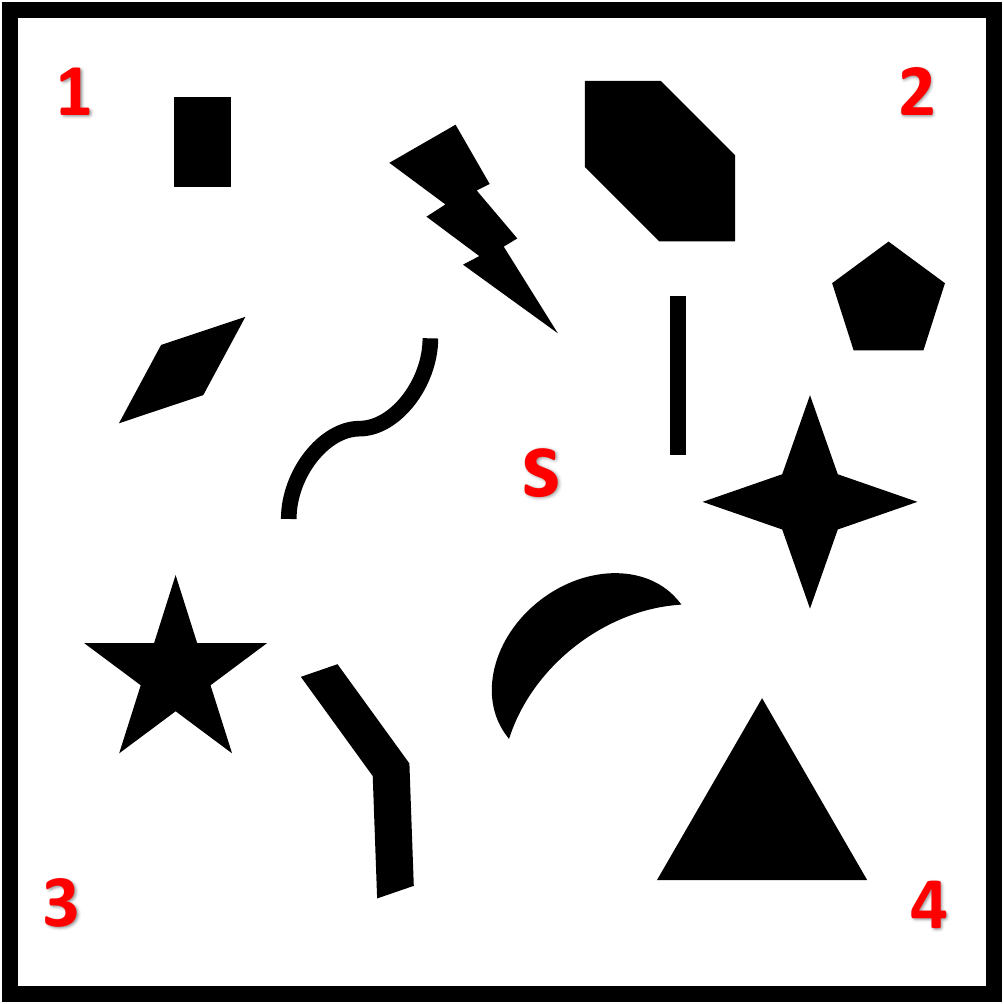}
    \label{fig_map_1}}
  \hfil
  \subfloat[]{\includegraphics[width=1.10in]{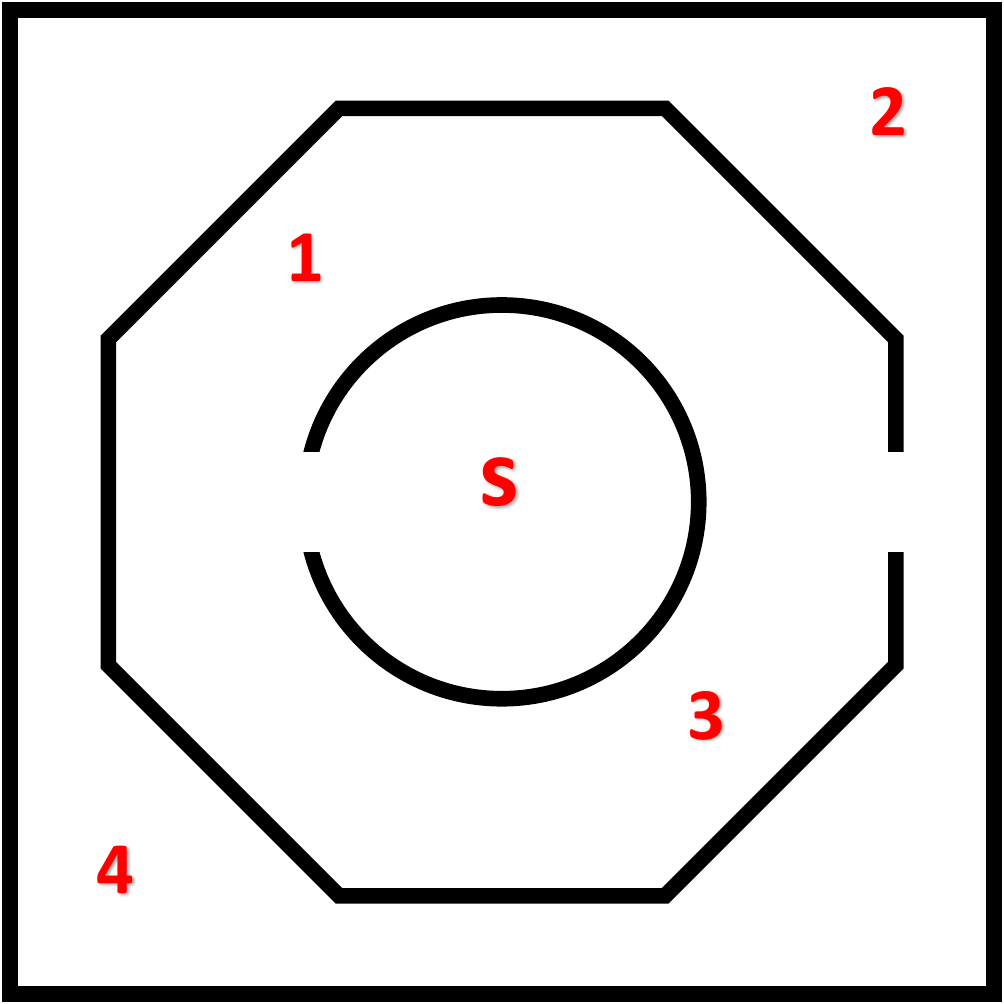}
    \label{fig_map_2}}
  \hfil
  \subfloat[]{\includegraphics[width=1.10in]{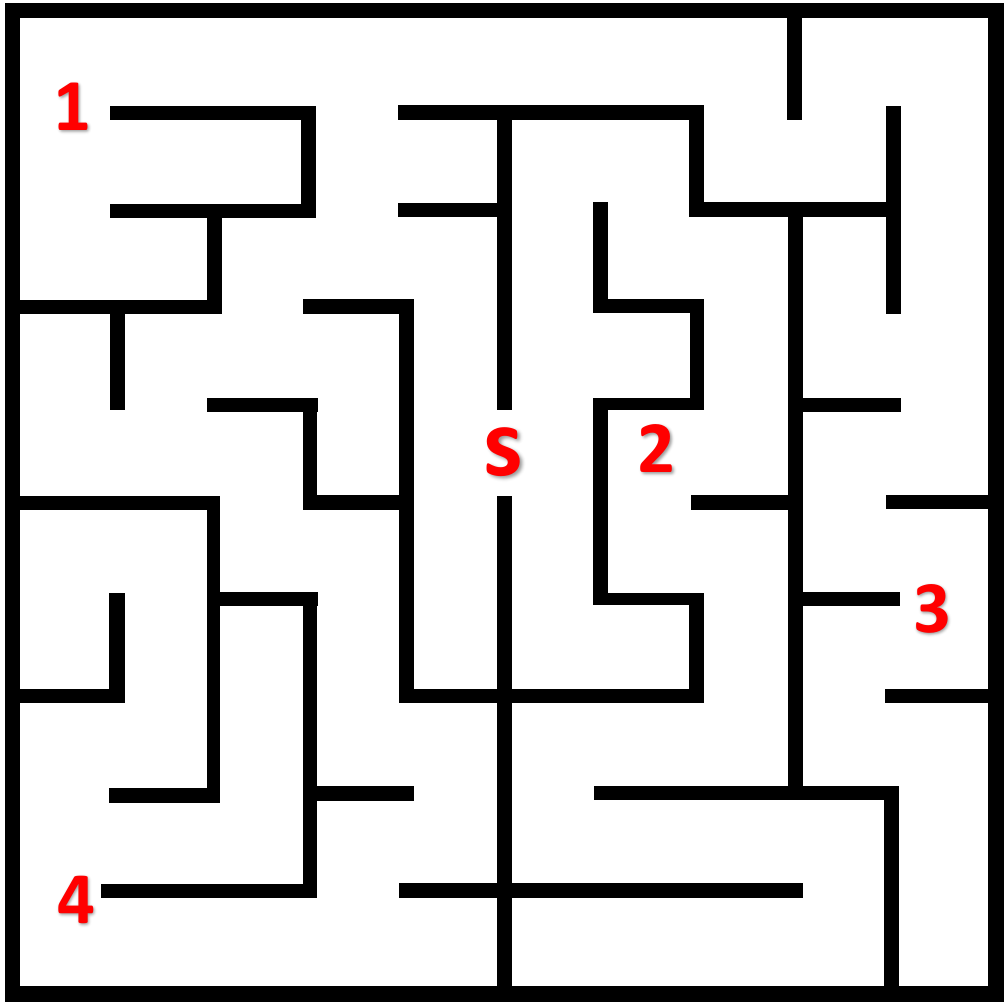}
    \label{fig_map_3}}
  \hfil
  \subfloat[]{\includegraphics[width=1.10in]{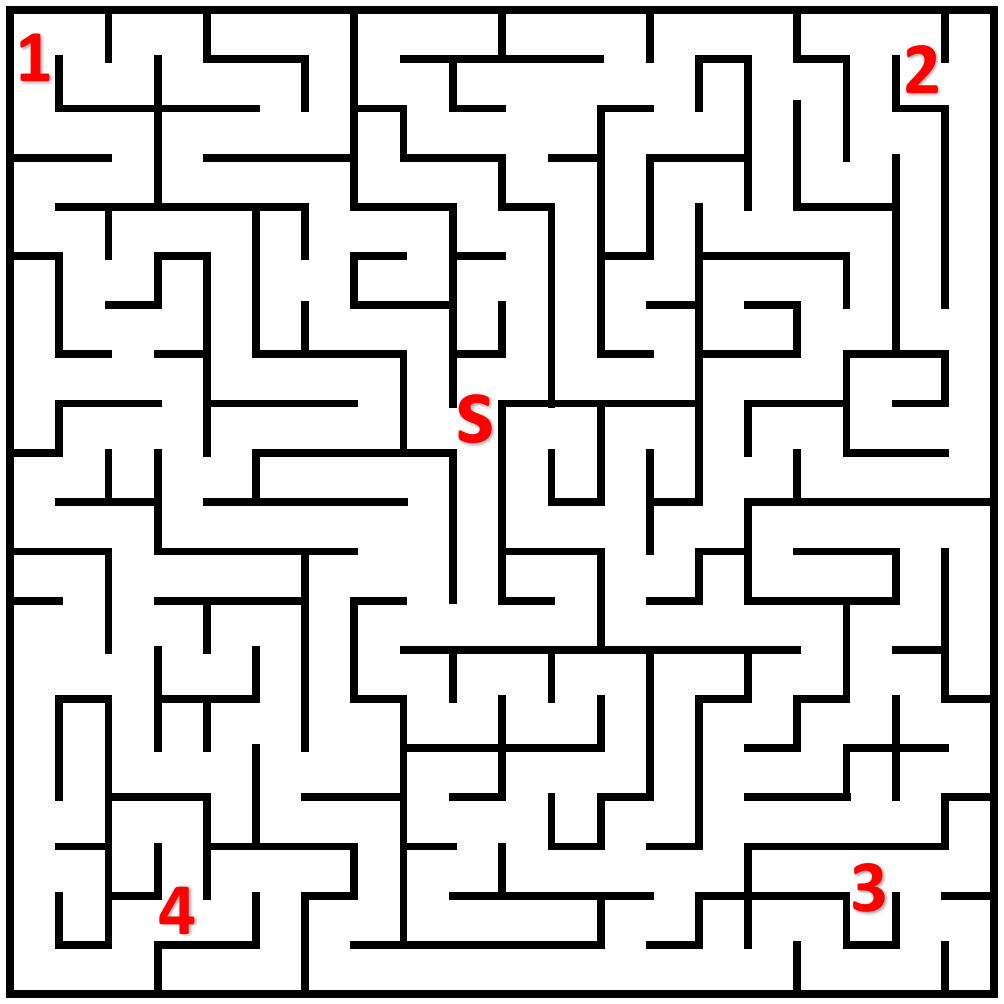}
    \label{fig_map_4}}
  \hfil
  \subfloat[]{\includegraphics[width=1.10in]{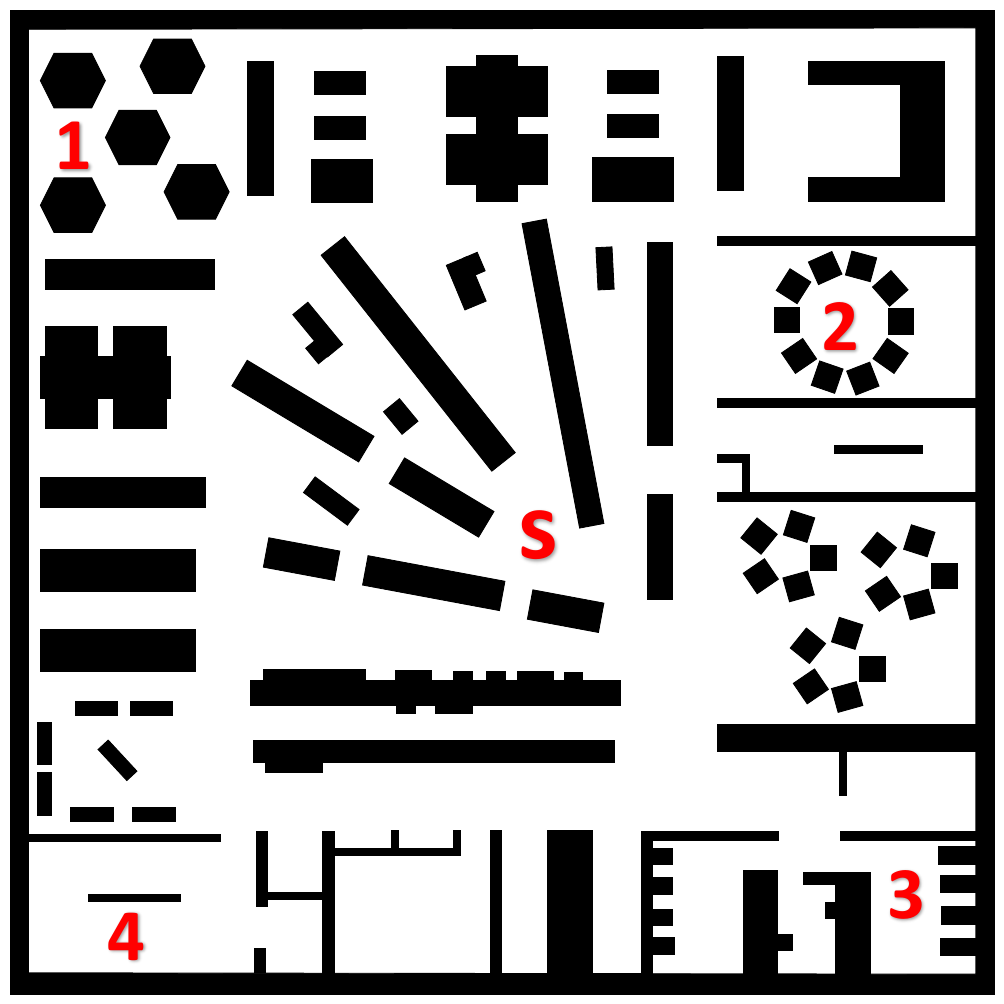}
    \label{fig_map_5}}
  \hfil
  \subfloat[]{\includegraphics[width=1.10in]{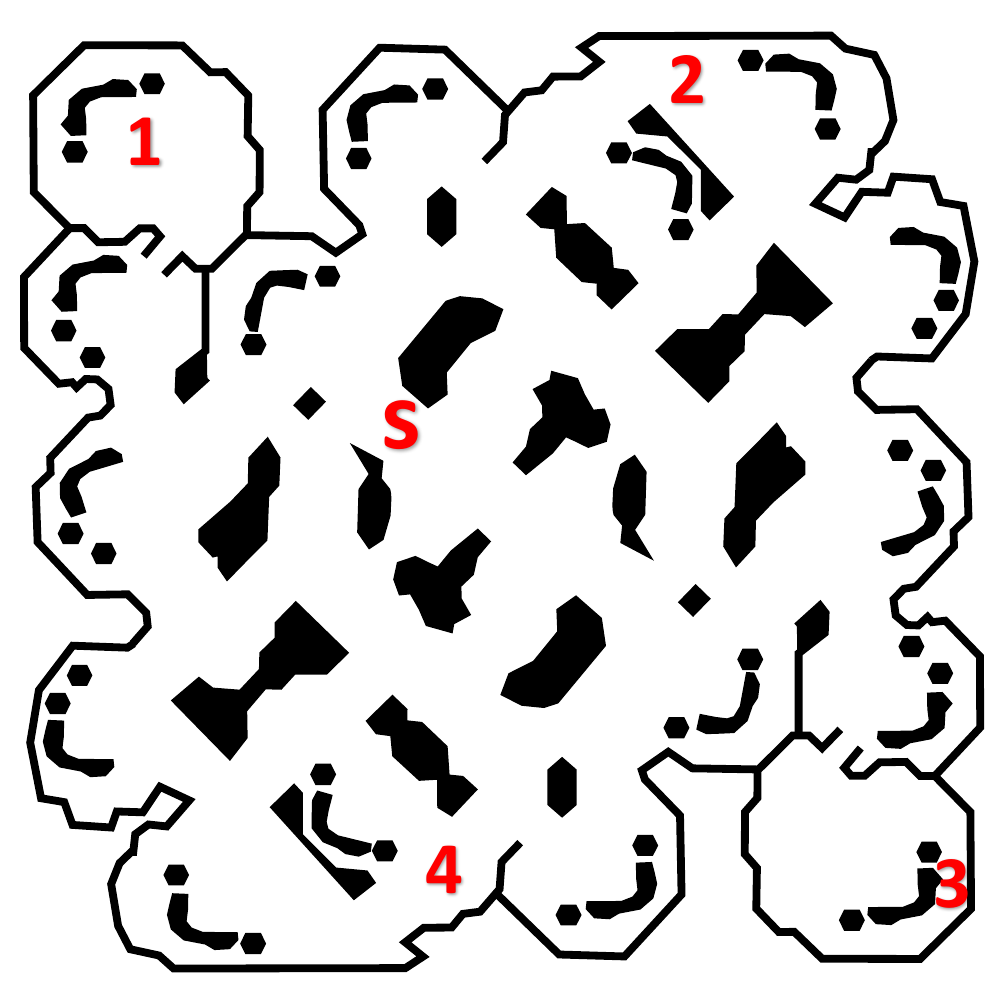}
    \label{fig_map_6}}
  \\
  \subfloat[]{\includegraphics[width=1.10in]{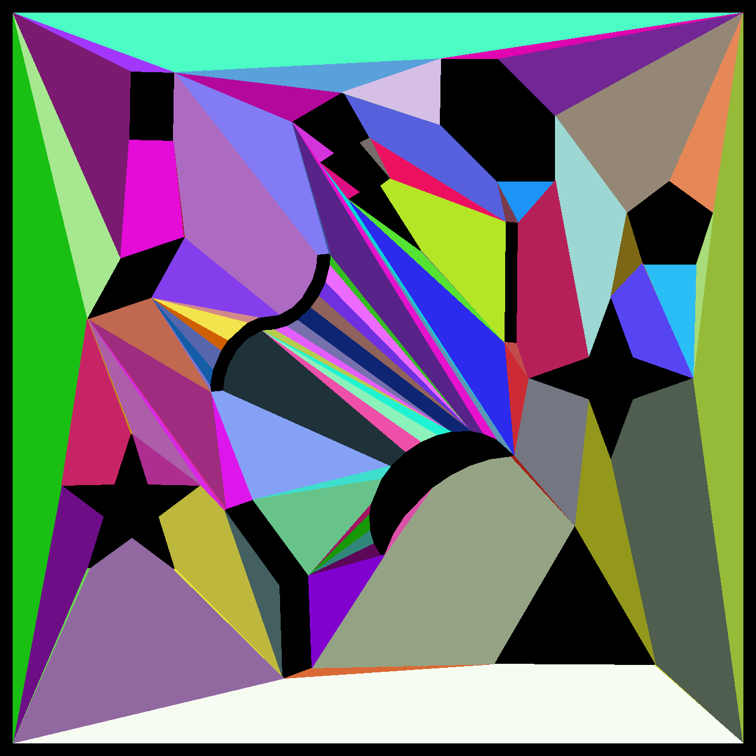}
    \label{fig_mapCD_1}}
  \hfil
  \subfloat[]{\includegraphics[width=1.10in]{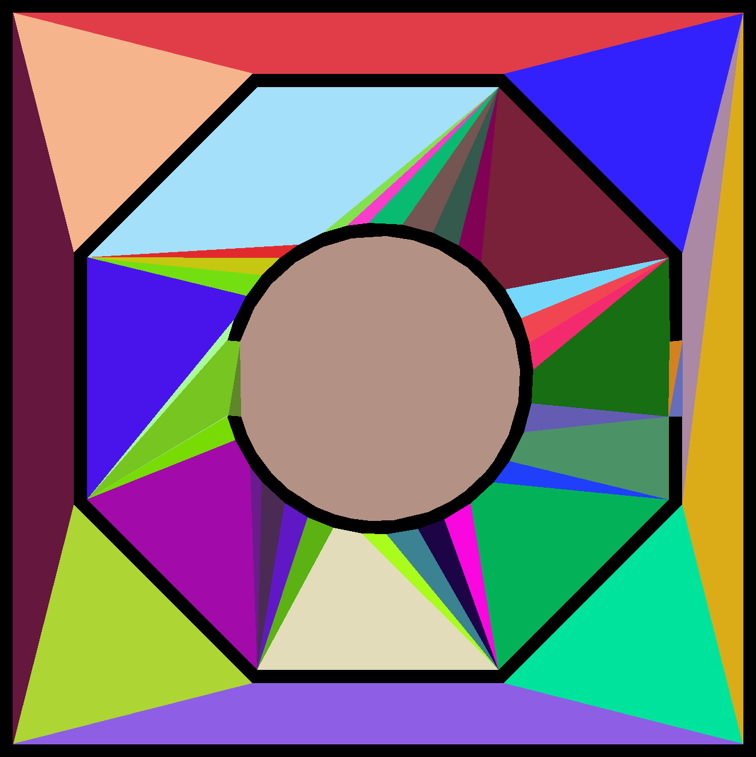}
    \label{fig_mapCD_2}}
  \hfil
  \subfloat[]{\includegraphics[width=1.10in]{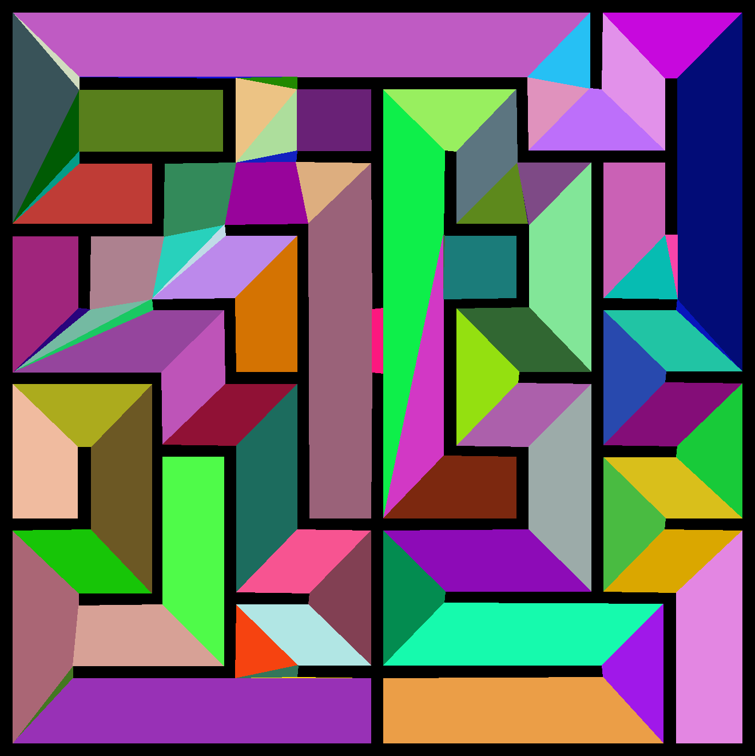}
    \label{fig_mapCD_3}}
  \hfil
  \subfloat[]{\includegraphics[width=1.10in]{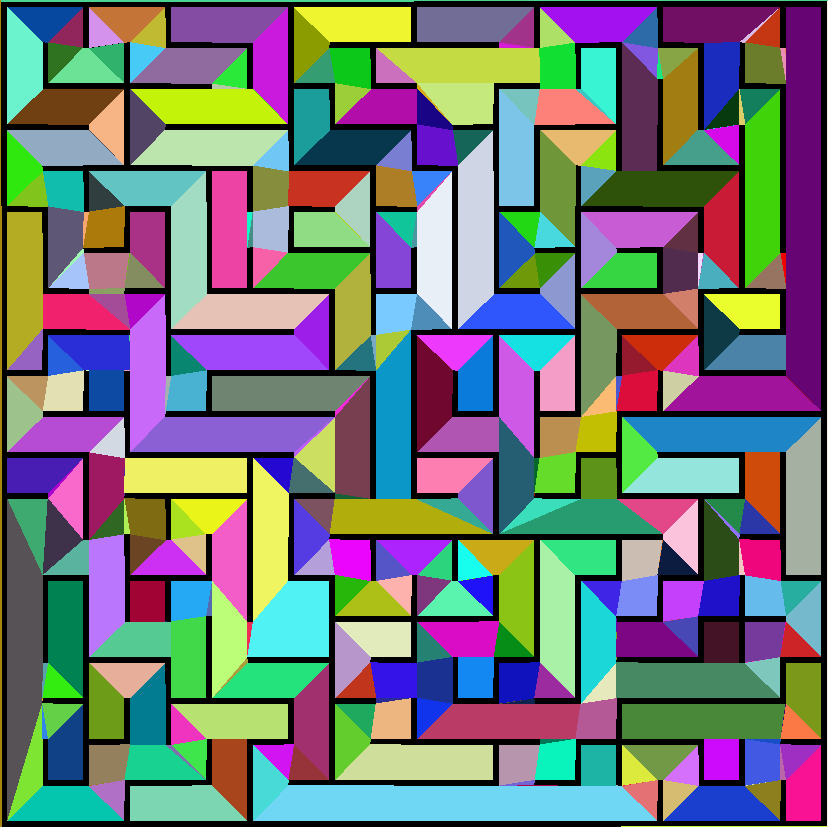}
    \label{fig_mapCD_4}}
  \hfil
  \subfloat[]{\includegraphics[width=1.10in]{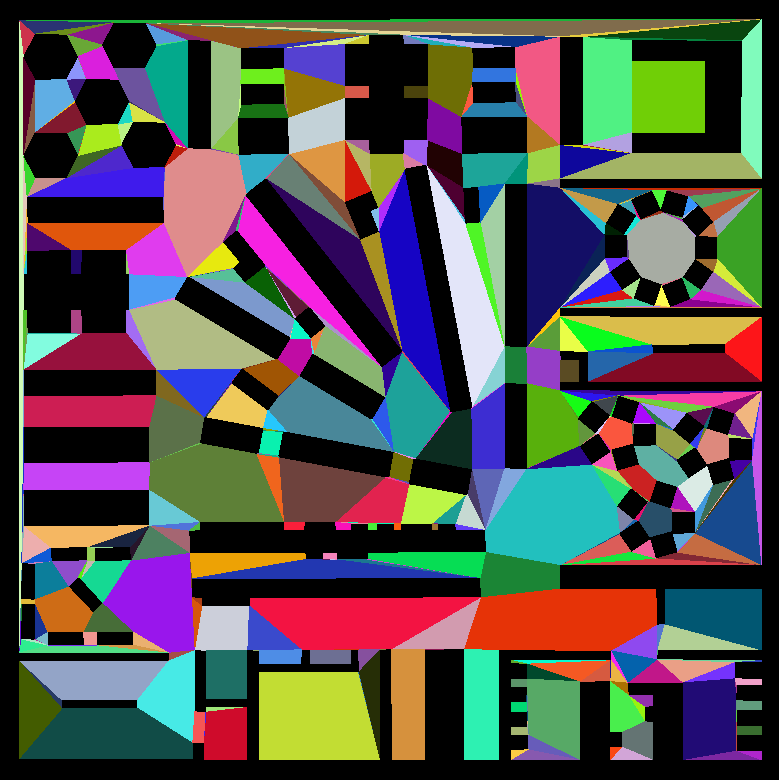}
    \label{fig_mapCD_5}}
  \hfil
  \subfloat[]{\includegraphics[width=1.10in]{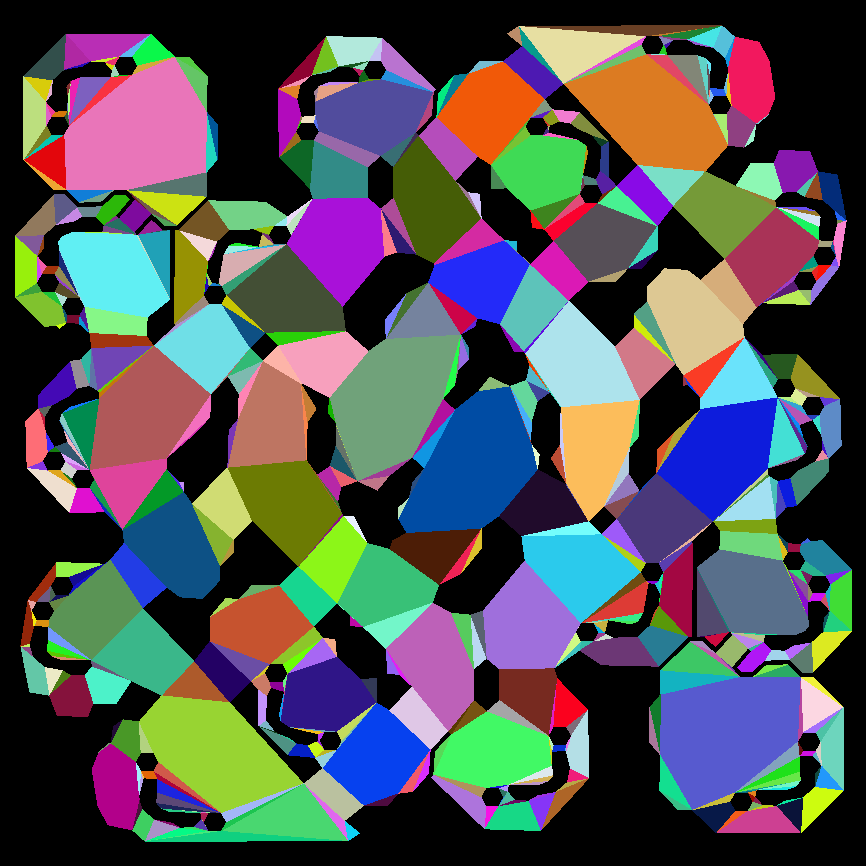}
    \label{fig_mapCD_6}}
  \\
  \subfloat[]{\includegraphics[width=1.14in]{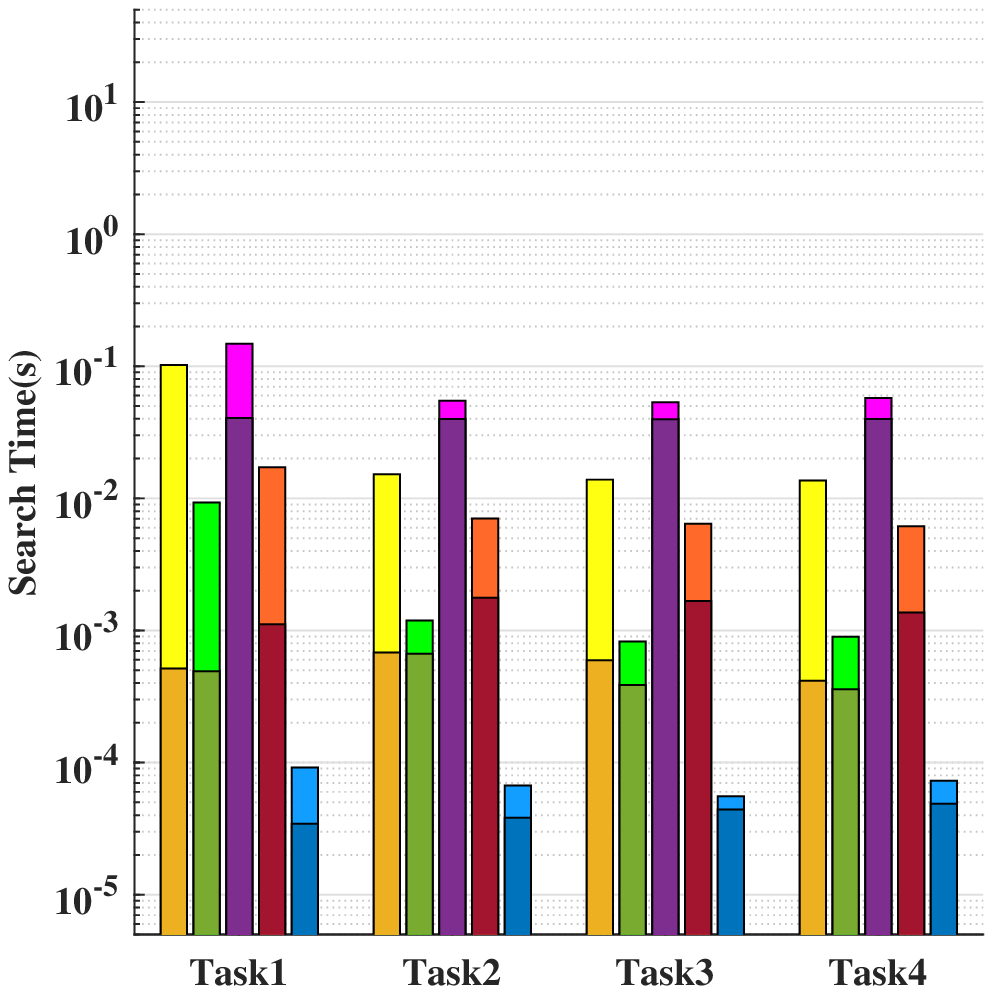}
    \label{fig_mapData_1}}
  \hfil
  \subfloat[]{\includegraphics[width=1.14in]{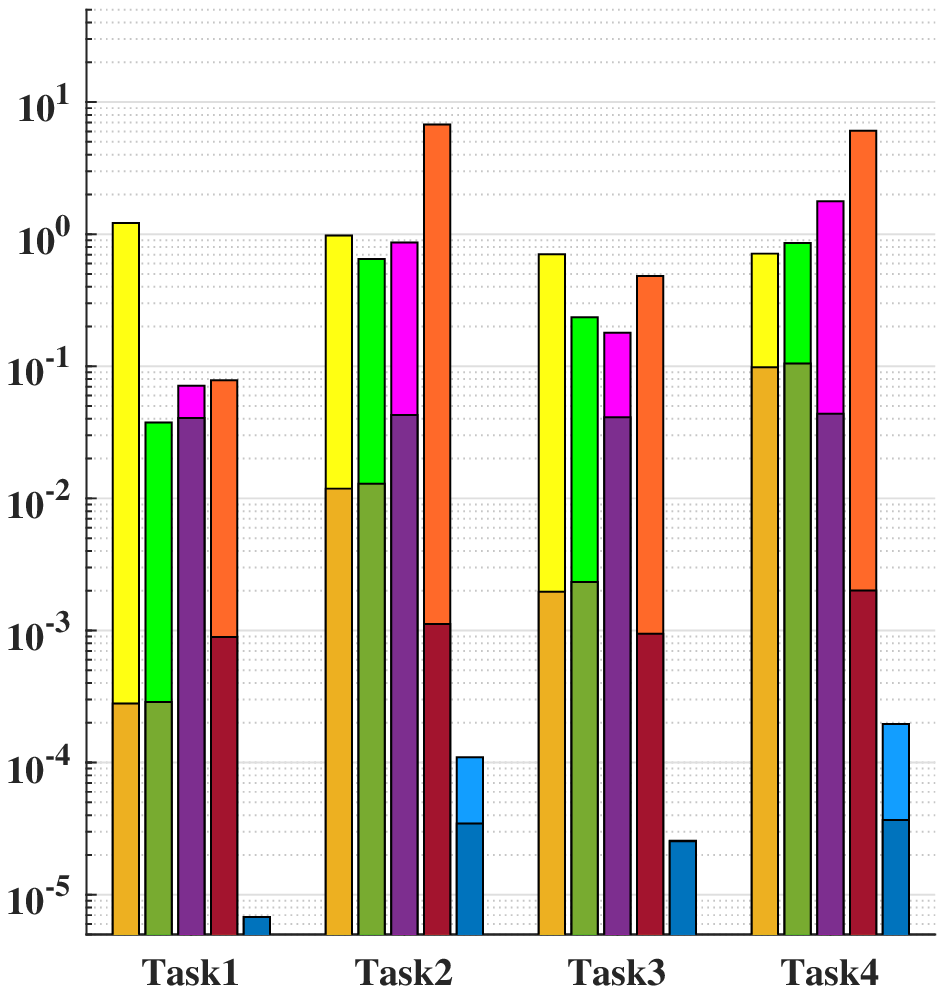}
    \label{fig_mapData_2}}
  \hfil
  \subfloat[]{\includegraphics[width=1.14in]{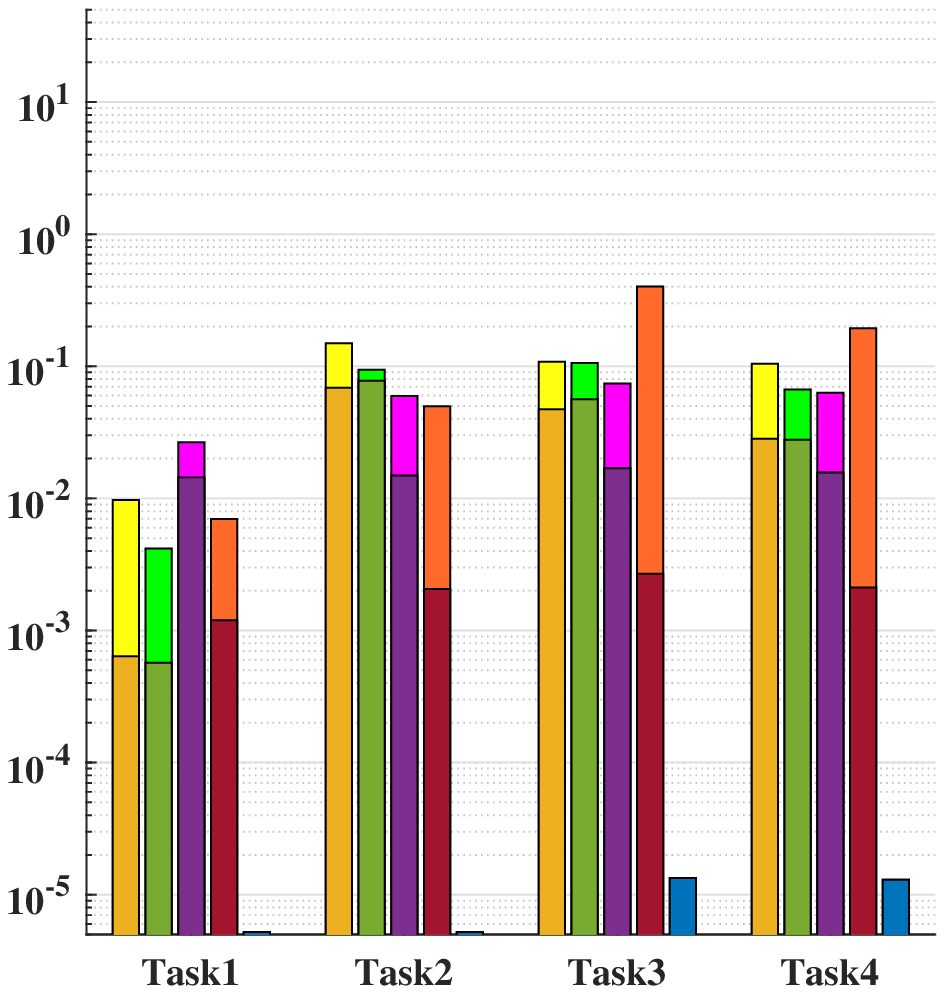}
    \label{fig_mapData_3}}
  \hfil
  \subfloat[]{\includegraphics[width=1.14in]{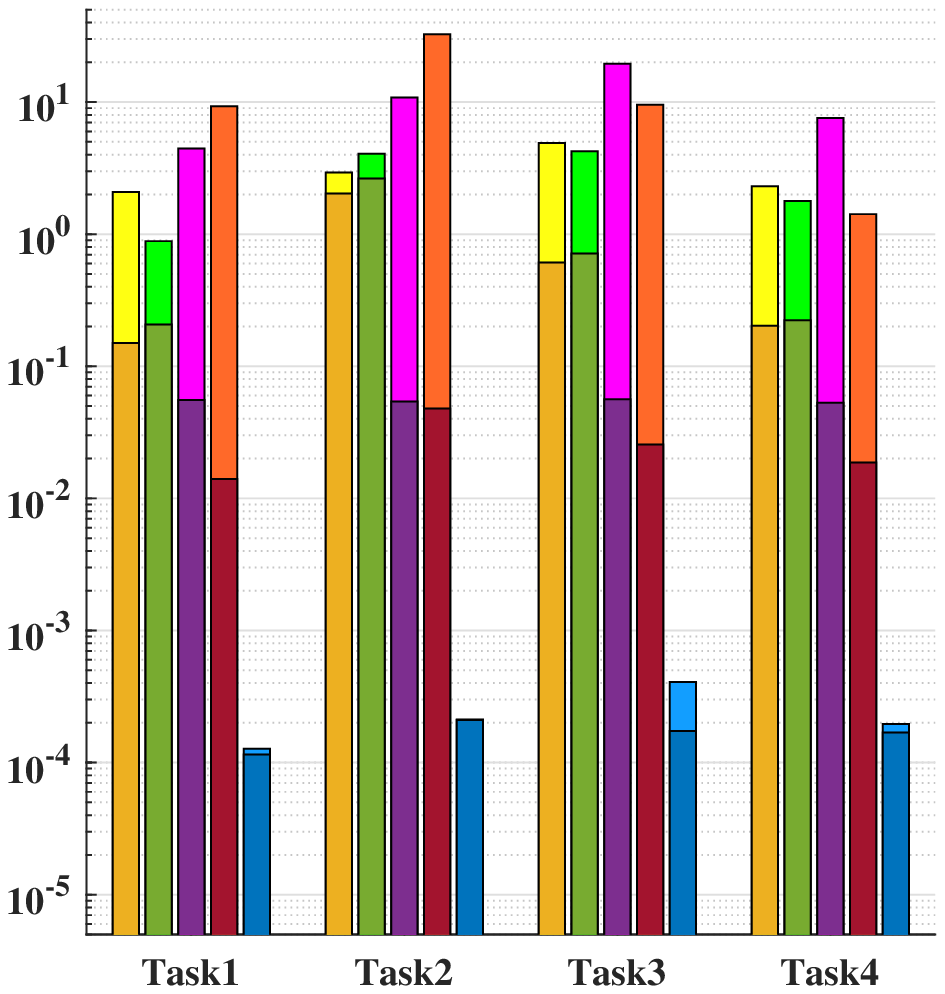}
    \label{fig_mapData_4}}
  \hfil
  \subfloat[]{\includegraphics[width=1.14in]{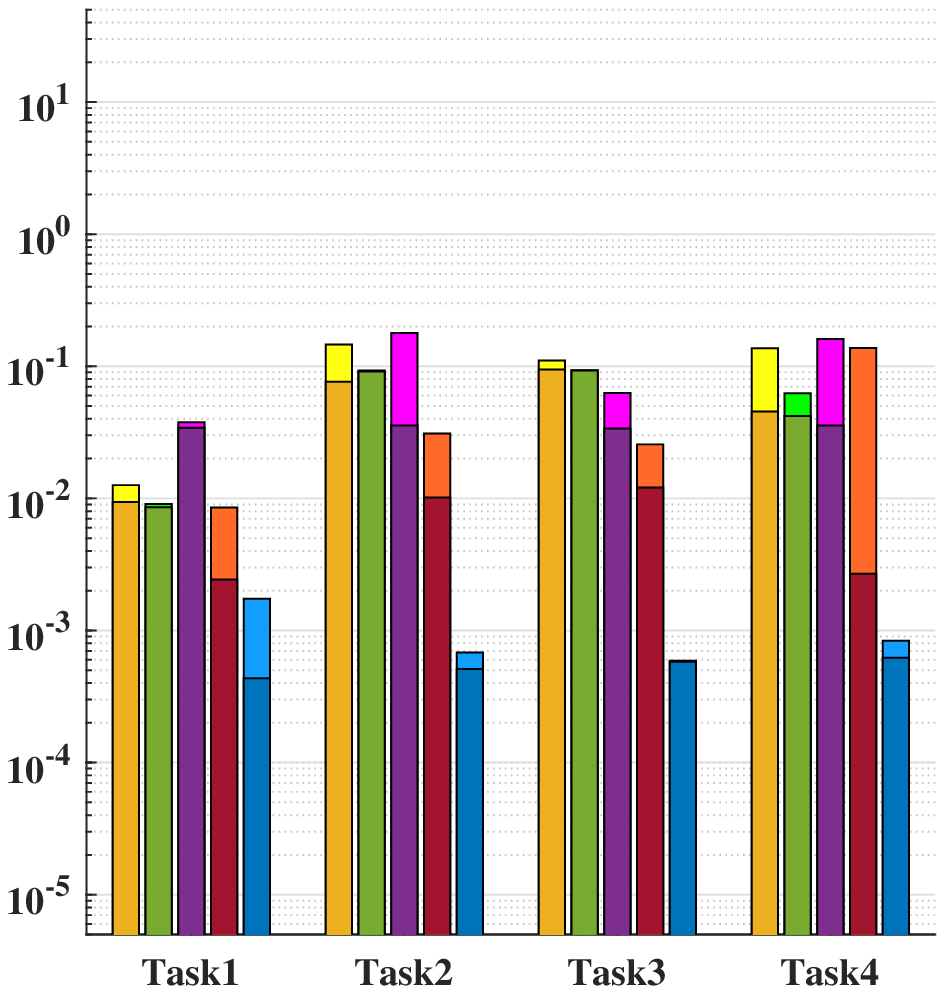}
    \label{fig_mapData_5}}
  \hfil
  \subfloat[]{\includegraphics[width=1.14in]{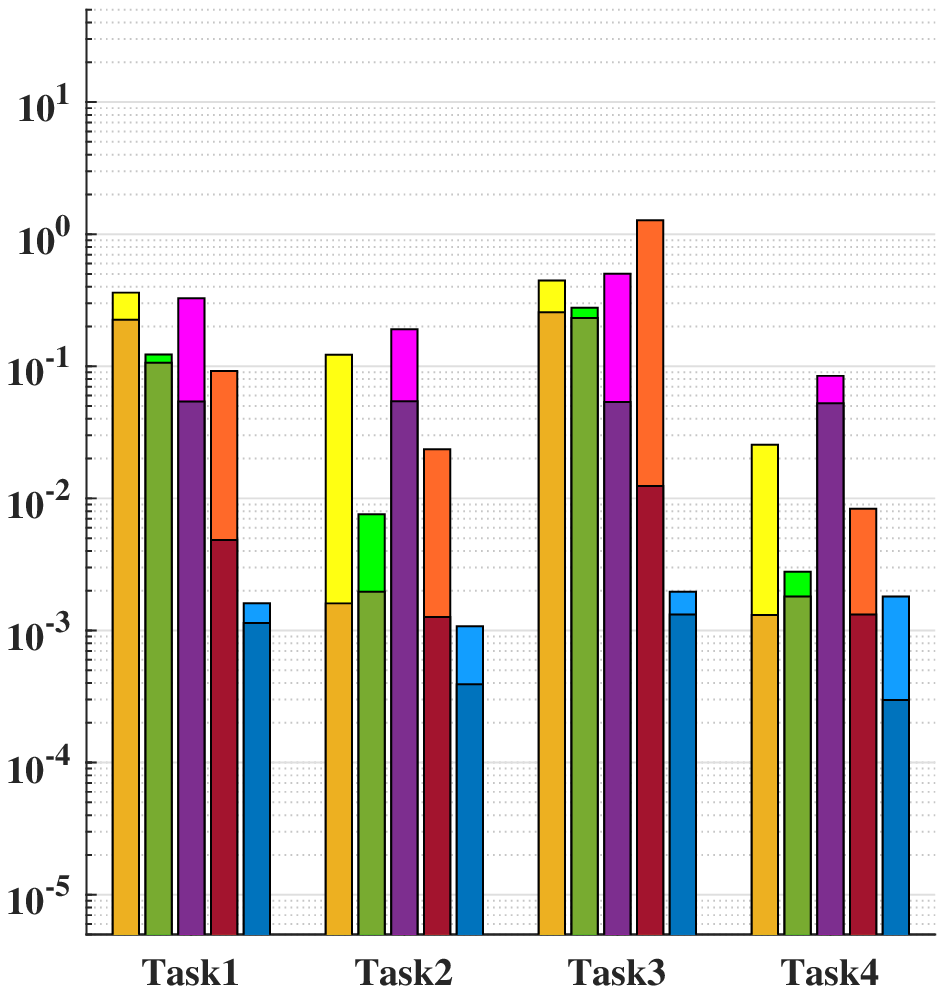}
    \label{fig_mapData_6}}
  \caption{Illustration of experiment map and planning tasks. (a) Cluttered environment, (b) Trap, (c) Maze1, (d) Maze2, (e) Floormap and (f) a StarCraft II map. Each environment contains four planning tasks starting from \textbf{S} and ending \textbf{1-4}. (g)-(l) correspond to the convex division results of each environment, respectively.}
  \label{fig_map}
\end{figure*}
The construction of the topology graph is an essential step in the proposed algorithms. In this subsection we present the results of the time required to construct the topology graph. The polygon fitting error $\varepsilon$ in (\ref{eq11}) it was set as $8$ pixels.

We conducted one hundred initialisation experiments on (a) Cluttered environment, (b) Trap, (c) Maze1, (d) Maze2, (e) Floormap and (f) a StarCraft II map in Fig.~\ref{fig_map}, respectively, and the results are shown in Fig.~\ref{fig_map} (g)-(l). Each environment contains four planning tasks for subsequent experiments, starting from \textbf{S} and ending \textbf{1-4}.

In \textbf{Table~\ref{tab:table1}}, we can find that the initialisation time of the algorithm for the map is not directly related to the resolution of the map.\footnote{It should be noted that in most cases, higher resolution maps tend to contain more details about the environment. Therefore, selecting a smaller fitting error $\varepsilon$ will result more complex simple polygons used to fit the free space, and the initialization time of the map will also be longer.} The initialisation result of the map is mainly affected by the chaos of the map itself. The chaos of the original map can affect the complexity of the simple polygons used to fit itself. This determines the time taken for initialisation and the resulting number of dividing lines and convex polygons.

From the test results, it can be found that the initialisation stage of the map in our algorithms does not require much time, even for a large and complex map such as the Floormap or StarCraft II map, the initialisation time is less than 0.4s. In actual use, only one initialisation is needed for one map, and we can save the initialisation result as a file and load it directly in future use. In fact, this is what we do.
\begin{table}
  \begin{center}
    \caption{Initialisation of Six Maps\label{tab:table1}}
    \centering
    \begin{tabular}{|c|c|c|c|c|}
      \hline
      \makecell*[c]{Map                                                      \\Name} & \makecell*[c]{Map Image\\Resolution} & \makecell*[c]{Cutlines} & \makecell*[c]{Convex\\Polygons} & \makecell*[c]{Time ($ms$)}\\
      \hline
      \makecell*[c]{Cluttered} & $1100*1100$ & $105$ & $94$  & $16.6\pm3.7$  \\
      \hline
      \makecell*[c]{Trap}      & $1320*1320$ & $48$  & $47$  & $8.9\pm1.4$   \\
      \hline
      \makecell*[c]{Maze1}     & $1320*1320$ & $83$  & $85$  & $14.0\pm2.8$  \\
      \hline
      \makecell*[c]{Maze2}     & $3310*3310$ & $403$ & $395$ & $112.7\pm6.6$ \\
      \hline
      \makecell*[c]{Floormap}  & $2338*2338$ & $535$ & $475$ & $147.5\pm4.3$ \\
      \hline
      \makecell*[c]{Starcraft} & $2599*2599$ & $727$ & $662$ & $368.1\pm5.9$ \\
      \hline
    \end{tabular}
  \end{center}
\end{table}

\subsection{Performance Analysis of Task Decoupling}
In this subsection, we will compare and analyze the performance impact of the task decoupling strategy \textbf{Remark~\ref{remarkDecouple}} on CDT-RRT*. To do this, we will disable lines 10-12 in \textbf{Algorithm~\ref{alg2}} and replace lines 25-30 with the following pseudocode:
\begin{algorithm}
  \begin{algorithmic}[1]
    \STATE $f \gets f*l^{x_{goal}}_{x_e}$
    \STATE \textbf{if} $f^\circledast$ is None \textbf{or $S(f)<S(f^\circledast)$ then}
    \STATE \hspace{0.5cm}$f^\circledast \gets f$
  \end{algorithmic}
\end{algorithm}

Afterwards, we conducted 1000 tests using the tasks shown in Fig.~\ref{fig_map} (e) to compare the performance of the algorithm before and after the modification. The experimental results are presented in \textbf{Table~\ref{tab:table2}}. Fig.~\ref{fig_ueDCost} shows the time-cost diagrams of the two algorithms in a single experiment.\footnote{Before analyzing the parameter $\beta$ in (\ref{eq_ST}) (in subsection D), all $\beta$ values were set to 1.} It is evident from \textbf{Table~\ref{tab:table2}} that the average time required for the undecoupled algorithm to obtain the optimal path is significantly longer than that of the decoupled algorithm. This is mainly because the decoupled algorithm, after sampling a new homotopy class using RRT*, directly obtains the optimal path within that homotopy class through \textbf{Algorithm~\ref{alg5}}, rather than slowly converging to the optimal path using the asymptotic optimality property of RRT*. This can be clearly observed in Fig.~\ref{fig_ueDCost}, where the decoupled algorithm shows an instantaneous drop in the Cost when discovery of new and better homotopy classes, while the undecoupled algorithm wastes significant time in the asymptotic convergence process, even after having found the homotopic path of the globally optimal path.

\begin{table}
  \begin{center}
    \caption{Performance Analysis of Task Decoupling\label{tab:table2}}
    \centering
    \begin{tabular}{|c|c|c|c|c|}
      \hline
      \multirow{3.0}*{Algorithm} & \multicolumn{4}{c|}{\makecell*[c]{Time $t_{2\%}$ ($us$)}}  \\
      \cline{2-5}
      ~ & \makecell*[c]{Task 1} & \makecell*[c]{Task 2} & \makecell*[c]{Task 3} & \makecell*[c]{Task 4}\\
      \hline
      \makecell*[c]{Decoupled} & $3608.8$ & $662.8$ & $592.6$  & $974.3$  \\
      \hline
      \makecell*[c]{Undecoupled}  & $8949.5$ & $13931.9$  & $14203.9$  & $13781.8$   \\
      \hline
    \end{tabular}
  \end{center}
\end{table}

\begin{figure}[!t]
  \centering
  \subfloat[]{\includegraphics[width=1.6in]{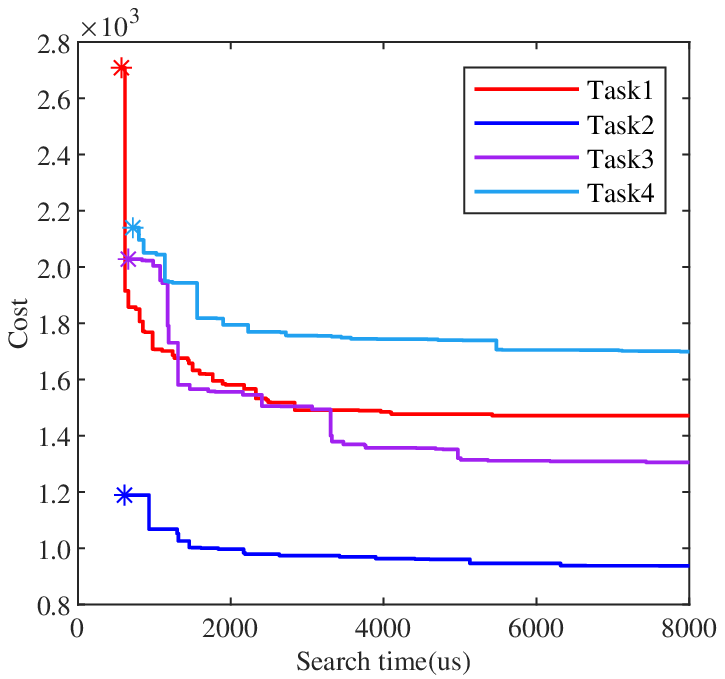}
    \label{fig_ueDCost_1}}
  \hfil
  \subfloat[]{\includegraphics[width=1.6in]{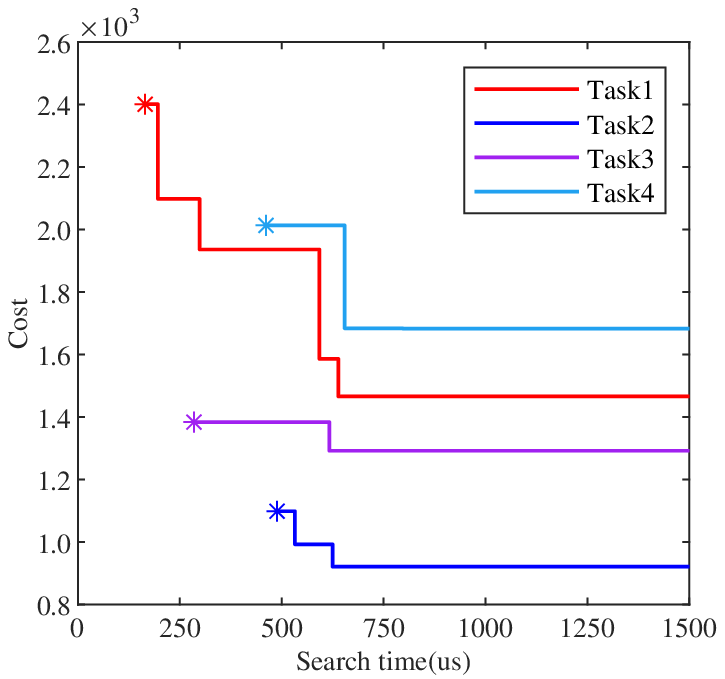}
    \label{fig_ueDCost_2}}
  \caption{Time-cost diagrams of CDT-RRT* with and without task decoupling. (a) Undecoupled. (b) Decoupled.}
  \label{fig_ueDCost}
\end{figure}

\subsection{The Effect of Reduce Branches on CDT-RRT*}
To analyze the impact of Reduce Branches on the performance of CDT-RRT*, in this subsection, we conducted experiments on tasks in the three representative maps shown in Fig.~\ref{fig_map} (a)(c)(d). For each task, we performed 1000 planning and recorded the mean time required to obtain the optimal path. The results are shown in \textbf{Table~\ref{tab:table3}}.

For the planning tasks in the Maze1 environment, CDT-RRT* achieves a fast planning speed regardless of whether Reduce Branches is used or not. In the first place, since the free space of Maze1 is simply connected. Therefore, after applying Reduce Branches, lines 10 and 11 of \textbf{Algorithm~\ref{alg2}} can always be triggered, thus directly obtaining the globally optimal path. For the case of not applying Reduce Branches, similarly, since the free space of Maze1 is simply connected, which means there is only one homotopy class between any two points in the space, therefore, the first homotopy class found by CDT-RRT* is the homotopy class where the global optimal path for the task resides. In addition, due to the term $\alpha 0^{\eta_i}$ in (\ref{eq_ST}) that enables $\mathcal{T}$ to rapidly expand in the initial stage and find the initial solutions for the task, therefore, CDT-RRT* can also quickly find the global optimal solution in the environment similar to Maze1 even without using Reduce Branches.

Different from Maze1, there are 9 and 12 independent obstacles in Maze2 and Cluttered, respectively, and none of the 4 tasks set in these two environments will trigger lines 10 and 11 in \textbf{Algorithm~\ref{alg2}}. However, there is a significant difference in the performance of CDT-RRT* with Reduce Branches in the two environments. In Maze2, the improvement in planning speed with Reduce Branches is more pronounced. This is mainly because Reduce Branches can eliminate many irrelevant nodes in $X_{topo}$, thereby reducing the number of cutlines that CDT-RRT* needs to consider when sampling. In the tasks of Maze2, more irrelevant cutlines are shielded. \textbf{Table~\ref{tab:table4}} shows the number of cutlines that CDT-RRT* needs to consider with and without using Reduce Branches in these tasks. The performance improvement of CDT-RRT* by using Reduce Branches is positively correlated with the number of cutlines that it shields, which is greatly influenced by the choice of environment and task, Although Reduce Branches mostly results in significant improvements for CDT-RRT*, it is not always effective, as seen in the experiments conducted in the Cluttered environment.

\begin{table}
  \begin{center}
    \caption{Impact of Reduce Branches on Performance\label{tab:table3}}
    \centering
    \begin{tabular}{|c|c|c|c|c|c|}
      \hline
      \multicolumn{2}{|c|}{\multirow{3.0}*{Environment}} & \multicolumn{4}{c|}{\makecell*[c]{Time ($us$)}}  \\
      \cline{3-6}
      \multicolumn{2}{|c|}{~} & \makecell*[c]{Task 1} & \makecell*[c]{Task 2} & \makecell*[c]{Task 3} & \makecell*[c]{Task 4}\\
      \hline
      \multirow{3.5}*{Maze1} &  \makecell*[c]{Reduce\\Branches}  & $6.6$ & $6.8$ & $15.3$  & $16.6$  \\
      \cline{2-6}
       & \makecell*[c]{No\\Action}       & $11.2$  & $24.3$  & $38.9$  & $39.6$   \\
       \hline
       \multirow{3.5}*{Maze2} &  \makecell*[c]{Reduce\\Branches}  & $138.1$ & $237.4$ & $816.6$  & $190.7$  \\
       \cline{2-6}
        & \makecell*[c]{No\\Action}       & $296.9$  & $572.2$  & $1451.9$  & $379.7$   \\
       \hline
       \multirow{3.5}*{Cluttered} &  \makecell*[c]{Reduce\\Branches}  & $111.3$ & $71.2$ & $61.6$  & $81.3$  \\
       \cline{2-6}
        & \makecell*[c]{No\\Action}       & $118.0$  & $78.4$  & $69.9$  & $89.8$   \\
       \hline
    \end{tabular}
  \end{center}
\end{table}

\begin{table}
  \begin{center}
    \caption{The Impact of Reduce Branches on the Number of Cutlines During Planning\label{tab:table4}}
    \centering
    \begin{tabular}{|c|c|c|c|c|c|}
      \hline
      \multicolumn{2}{|c|}{\multirow{3.0}*{Environment}} & \multicolumn{4}{c|}{\makecell*[c]{Number of Cutlines}}  \\
      \cline{3-6}
      \multicolumn{2}{|c|}{~} & \makecell*[c]{Task 1} & \makecell*[c]{Task 2} & \makecell*[c]{Task 3} & \makecell*[c]{Task 4}\\
      \hline
      \multirow{3.5}*{Maze1} &  \makecell*[c]{Reduce\\Branches}  & $12$ & $8$ & $23$  & $23$  \\
      \cline{2-6}
       & \makecell*[c]{No\\Action}       & \multicolumn{4}{c|}{\makecell*[c]{83}}   \\
       \hline
       \multirow{3.5}*{Maze2} &  \makecell*[c]{Reduce\\Branches}  & $242$ & $242$ & $256$  & $253$  \\
       \cline{2-6}
        & \makecell*[c]{No\\Action}       & \multicolumn{4}{c|}{\makecell*[c]{403}} \\
       \hline
       \multirow{3.5}*{Cluttered} &  \makecell*[c]{Reduce\\Branches}  & $99$ & $99$ & $99$  & $99$  \\
       \cline{2-6}
        & \makecell*[c]{No\\Action}       & \multicolumn{4}{c|}{$105$} \\
       \hline
    \end{tabular}
  \end{center}
\end{table}

\subsection{Analysis of the Random Sampling Equation}
In this subsection, we conducted experiments to analyze the effect of the $\alpha 0^{\eta_i}$ term in (\ref{eq_ST}) on the ability of CDT-RRT* to quickly find initial solutions, as well as the influence of the parameter $\beta $ on the optimization time ($t_{2\%}-t_{init}$) used by CDT-RRT* during the search for global optimal paths.

First, we tested the CDT-RRT* algorithm with and without the $\alpha 0^{\eta_i}$ term removed in all tasks in Fig.~\ref{fig_map}. We conducted 1000 experiments for each task and recorded the mean $t_{init}$ values, which are presented in \textbf{Table~\ref{tab:table5}}. We observed that using the $\alpha 0^{\eta_i}$ term generally resulted in a 3-6 times improvement in the algorithm's ability to find initial solutions. It should be noted that the $\alpha 0^{\eta_i}$ term is necessary in CDT-RRT* because without it, CDT-RRT* may never find an initial path, as reflected in the experimental results in the Trap task. In Tasks 2-3 of Trap, the $t_{init}$ without using $\alpha 0^{\eta_i}$ appears to be very abnormal. When processing experimental data, we found that in these three tasks, even though the algorithm exceeded the set limit of 40,000 sampling attempts, it still failed to find the initial path. The main cause of this phenomenon is related to the $\min(1,\mu_i)$ term in (\ref{eq_ST}). As explained for (\ref{eq_ST}), the role of $\min(1,\mu_i)$ is to "ensure that $Q_{near}$ is always non-empty in line 15 of \textbf{Algorithm~\ref{alg2}}". $\min(1,\mu_i)$ controls (\ref{eq_ST}) to only sample the cutlines and their neighbors that have been sampled. That is to say, when (\ref{eq_ST}) does not have $\alpha 0^{\eta_i}$, the probability of sampling the unsampled cutline is $|C_{ex}|/|C_{ex} \cup  C_{nb}|$, where $C_{ex}$ represents the set of cutlines that have been sampled, and $C_{nb}$ represents the set of cutlines adjacent to $C_{ex}$.  If $|C_{ex}|\gg |C_{ex}|$, the probability of unsampled branches in (\ref{eq_ST}) approaches zero, which means that CDT-RRT* will be trapped in the trap. Therefore, the $\alpha 0^{\eta_i}$ term is indispensable.

Parameter $\beta$ is used to explore other possible homology path classes after the initial solution is searched by CDT-RRT*. Therefore, in order to analyse the effect of parameter $\beta$ on CDT-RRT*, we performed 2000 path planning experiments for each value of $\beta$ from $(0,1]$ at 0.05 intervals and recorded the path optimisation time ($t_{2\%}-t_{init}$) for each experiment. To facilitate the display and analysis of the effect of parameter $\beta$, we additionally selected a planning task from point \textbf{3} to point \textbf{1} in the environment of Fig.~\ref{fig_map} (e) for experimentation.\footnote{The reason for doing this is that in the environment of Fig.~\ref{fig_map} (e), there are many local optimal paths between point \textbf{3} and point \textbf{1} that have lengths similar to the global optimal path. The purpose of setting the $\beta^{\kappa_i}$ term in (\ref{eq_ST}) is to enable CDT-RRT* to actively explore unknown homotopy classes, thereby finding the global optimal path faster.} Fig.~\ref{fig_AnalyzeBeta} shows the fit of the probability density function of the optimal time for each value of $\beta$ in the task, and the curve of the optimisation time mean with respect to parameter $\beta$. 

From the experimental results, we can determine that the value of $\beta$ has a large impact on the performance of CDT-RRT*. When $\beta = 1$, the sampling mode of CDT-RRT* degenerates to ordinary uniform sampling and the average optimisation time of CDT-RRT* is approximately $2.3$ times longer than when $\beta = 0.2$.  Selecting a smaller $\beta$ increases the sampling probability of cutlines that are used less frequently by the existing homotopy path classes, thus increasing the probability of CDT-RRT* discovery of other homology path classes. However, when $\beta$ is too small, the mean optimisation time value of CDT-RRT* increases instead, because a too small $\beta$ results in CDT-RRT* being more rigorous. CDT-RRT* primarily samples the least used cutlines, which unnecessarily consumes time on cutlines that do not fit within the optimal homotopy class. Following these results, CDT-RRT* with $\beta = 0.2$ was used for analysis in remaining experiments ().\footnote{In different environments and tasks, the optimal value of $B$ is different. It is difficult to quantitatively analyze how to choose the best $\beta$. However, based on our experience from a large number of experiments, when $\beta \in [0.07, 0.25]$, there is generally a good performance.}

\begin{figure}[!t]
  \centering
  \subfloat[]{\includegraphics[width=3.0in]{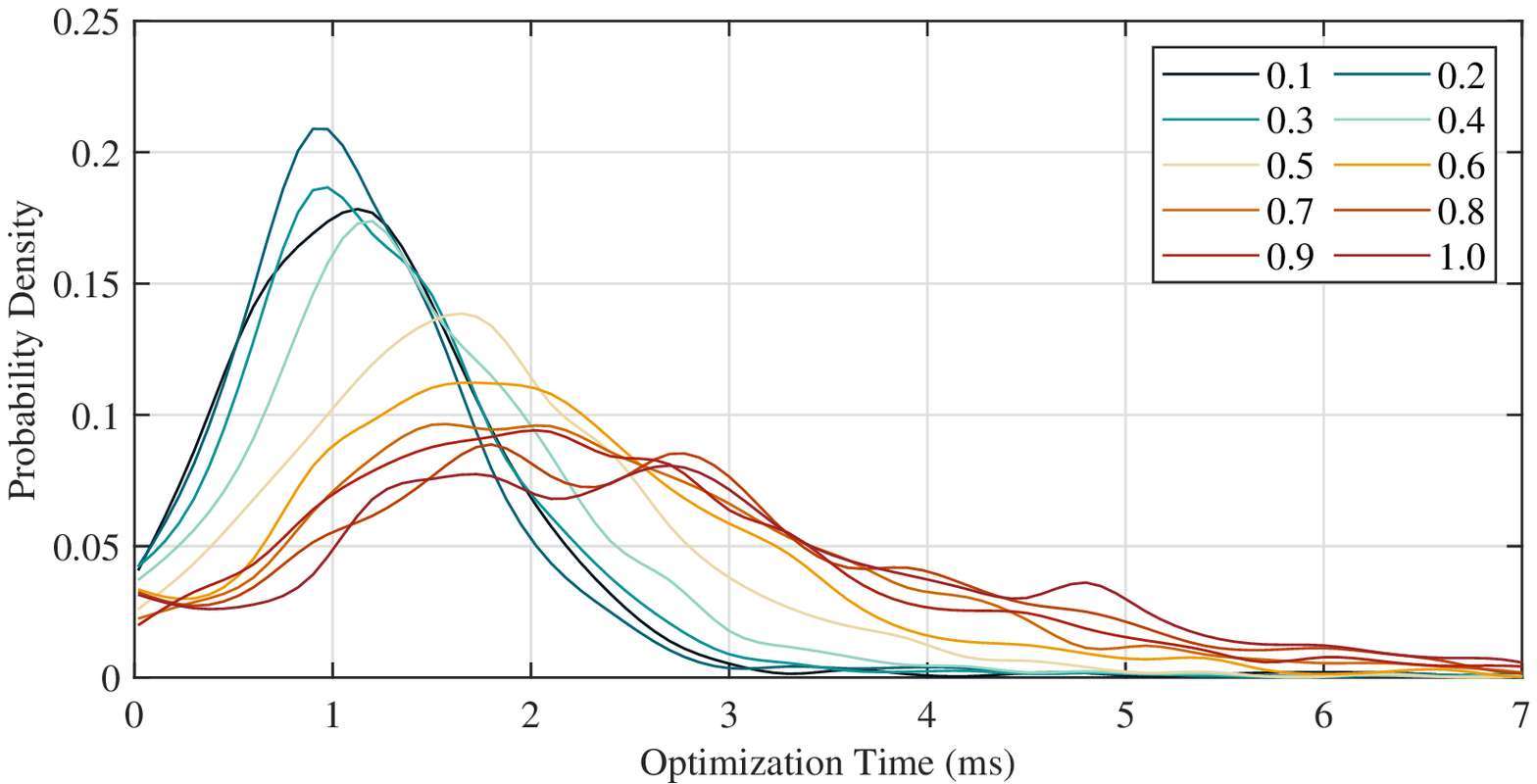}
    \label{fig_ABeta_1}}
  \hfil
  \subfloat[]{\includegraphics[width=3.0in]{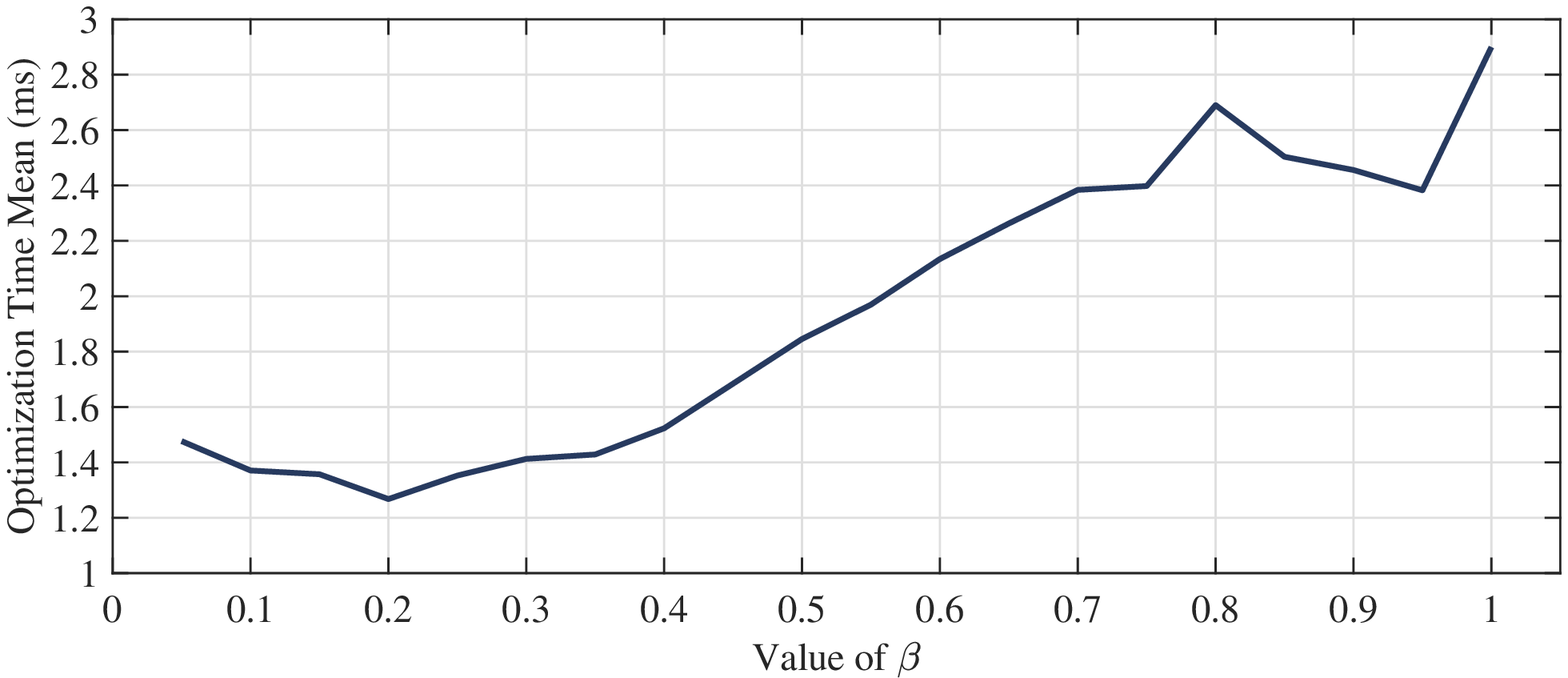}
    \label{fig_ABeta_2}}
  \caption{(a) Probability density plot of CDT-RRT* optimization time for different values of $\beta$. (b) The mean of the optimization time when $\beta$ takes different values. }
  \label{fig_AnalyzeBeta}
\end{figure}
\begin{table*}
  \begin{center}
    \caption{Average Value of $t_{init}$ for Each Task obtained by the CDT-RRT* when the $\alpha 0^{\eta_i}$ term is Enabled and Disabled.\label{tab:table5}}
    \centering
    \begin{tabular}{|c|c|c|c|c|c|c|c|c|}
      \hline
      \multirow{3.0}*{Environment} & \multicolumn{4}{c|}{\makecell*[c]{$t_{init}$ when $\alpha 0^{\eta_i}$ is disabled ($us$)}} & \multicolumn{4}{c|}{\makecell*[c]{$t_{init}$ when $\alpha 0^{\eta_i}$ is enabled ($us$)}}  \\
      \cline{2-9}
      & \makecell*[c]{Task 1} & \makecell*[c]{Task 2} & \makecell*[c]{Task 3} & \makecell*[c]{Task 4}
      & \makecell*[c]{Task 1} & \makecell*[c]{Task 2} & \makecell*[c]{Task 3} & \makecell*[c]{Task 4}\\
      \hline
      \makecell*[c]{Cluttered}  & $97.5$ & $103.1$ & $142.5$  & $156.8$  & $36.4$ & $40.1$ & $48.2$  & $55.9$  \\
      \hline
      \makecell*[c]{Trap}   & $29.5$ & $908.5$ & $459.6$  & $1163.7$  & $6.7$ & $34.5$ & $25.3$  & $36.6$  \\
      \hline
      \makecell*[c]{Maze1}   & $7.1$ & $27.2$ & $56.3$  & $58.4$  & $6.3$ & $7.2$ & $14.6$  & $14.3$  \\
      \hline
      \makecell*[c]{Maze2}   & $617.6$ & $1411.6$ & $1166.3$  & $952.3$  & $115.0$ & $209.8$ & $173.6$  & $169.2$  \\
      \hline
      \makecell*[c]{Floormap}   & $1015.4$ & $1472.8$ & $1860.1$  & $1794.5$  & $434.8$ & $511.7$ & $578.4$  & $621.4$  \\
      \hline
      \makecell*[c]{StarCraft}   & $4190.8$ & $965.9$ & $4814.6$  & $651.6$  & $1141.1$ & $391.6$ & $1319.2$  & $297.4$  \\
      \hline
    \end{tabular}
  \end{center}
\end{table*}

\subsection{Path Planning Efficiency of CDT-RRT*}
In this subsection, we compare CDT-RRT* with four state-of-the-art algorithms, RRT* algorithm, PRM* algorithm \cite{karaman2011sampling}, Informed-RRT* \cite{gammell2018informed}, and advanced batch informed trees algorithm (ABIT*) \cite{strub2020advanced}. This experiment continues to use the six environments in Fig.~\ref{fig_map}. These algorithms are run 200 times for each planning task and the planning time and cost are recorded for the evaluation.

The mean values of $t_{init}$ and $t_{2\%}$ for these algorithms for each task are shown in Fig.~\ref{fig_map} (m)-(r), and the legends in the figures correspond to the algorithms listed in \textbf{Table~\ref{tab:table6}}. We can find that with the proposed algorithm, the path planning time is obviously decreased. In every environment, $t_{init}$ and $t_{2\%}$ of CDT-RRT* are generally 2-4 orders of magnitude lower than other algorithms. Moreover, in the complex Fig.~\ref{fig_map} (e)(f), on average, CDT-RRT* successfully determined the initial solution to the task within 0.5 ms and the optimal path was planned within 6 ms. 

\begin{table}
  \caption{The Abbreviation for each criteria type for each algorithm\label{tab:table6}}
  \centering
  \begin{tabular}{|c|c|c|c|c|c|}
    \hline
    Criteria   & RRT*           & PRM*           & Informed-RRT*  & ABIT*          & CDT-RRT*          \\
    \hline
    $t_{init}$ & $R_\mathrm{f}$ & $P_\mathrm{f}$ & $I_\mathrm{f}$ & $A_\mathrm{f}$ & $C_\mathrm{f}$ \\
    \hline
    $t_{2\%}$  & $R_\mathrm{o}$ & $P_\mathrm{o}$ & $I_\mathrm{o}$ & $A_\mathrm{o}$ & $C_\mathrm{o}$ \\
    \hline
  \end{tabular}
\end{table}

\subsection{Performance in Real-World}

\begin{figure}[!t]
  \centering
  \subfloat[]{\includegraphics[width=3.0in]{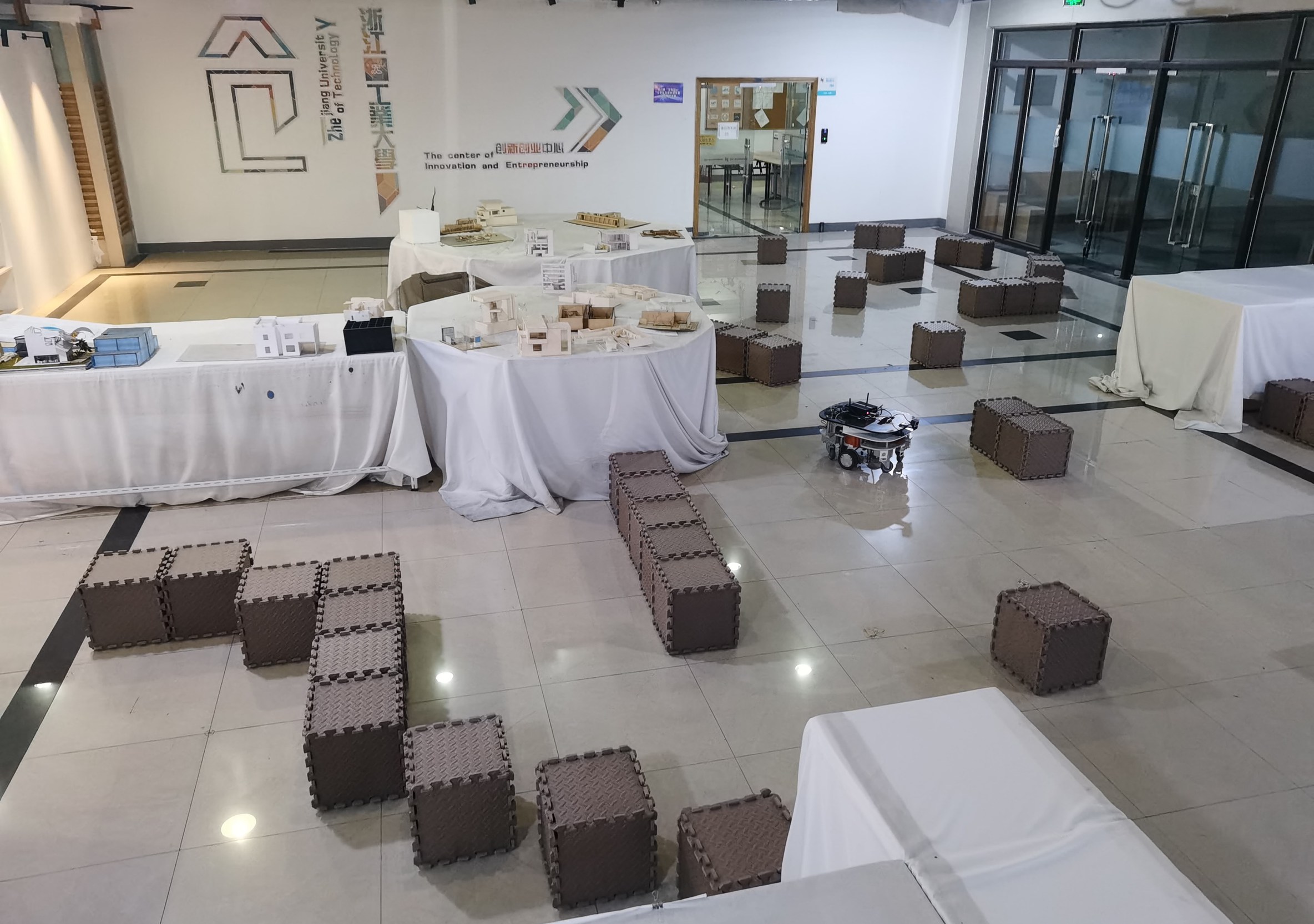}
    \label{fig_RealWorld_1}}
  \\
  \subfloat[]{\includegraphics[width=1.5in]{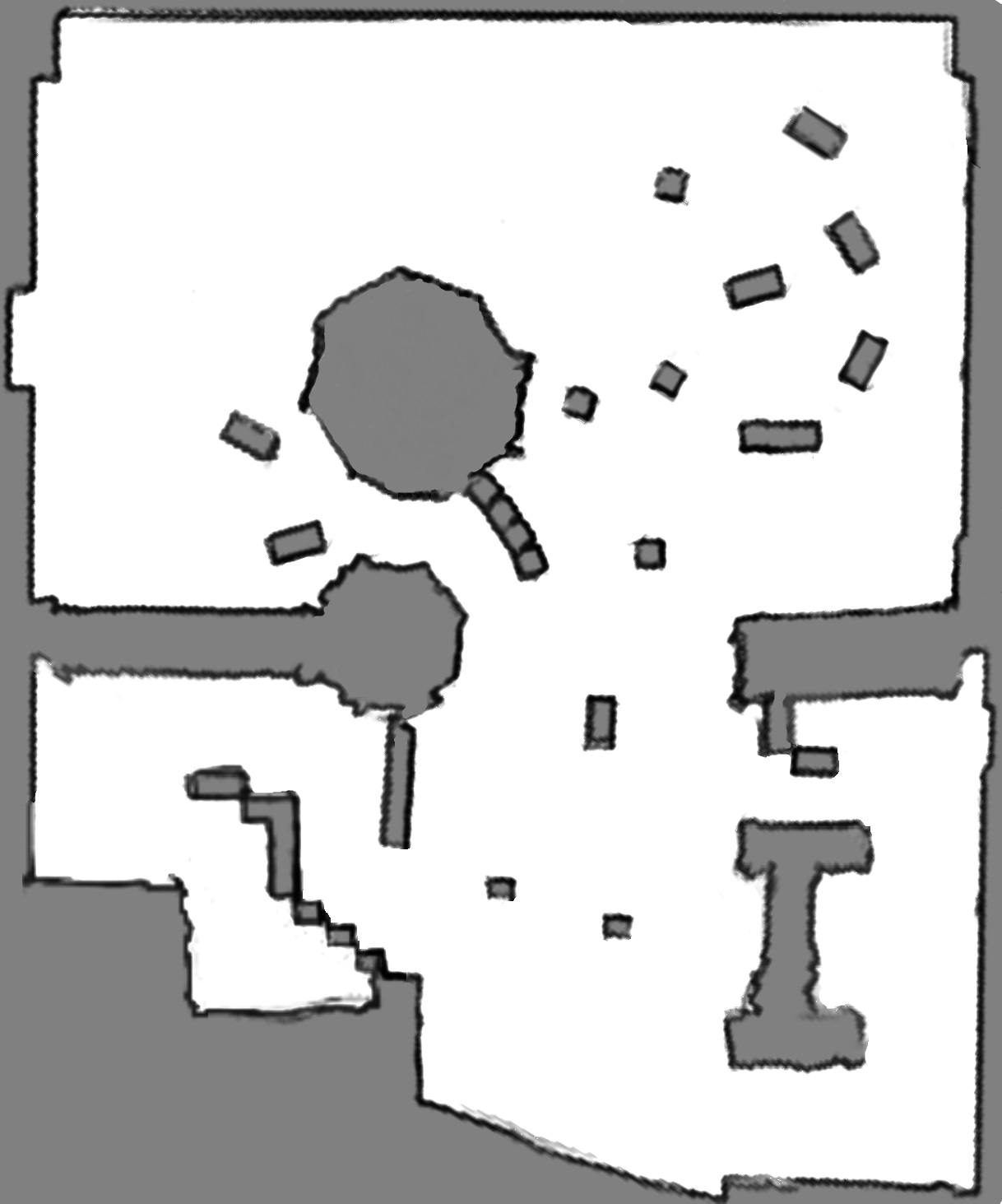}
    \label{fig_RealWorld_2}}
  \hfil
  \subfloat[]{\includegraphics[width=1.5in]{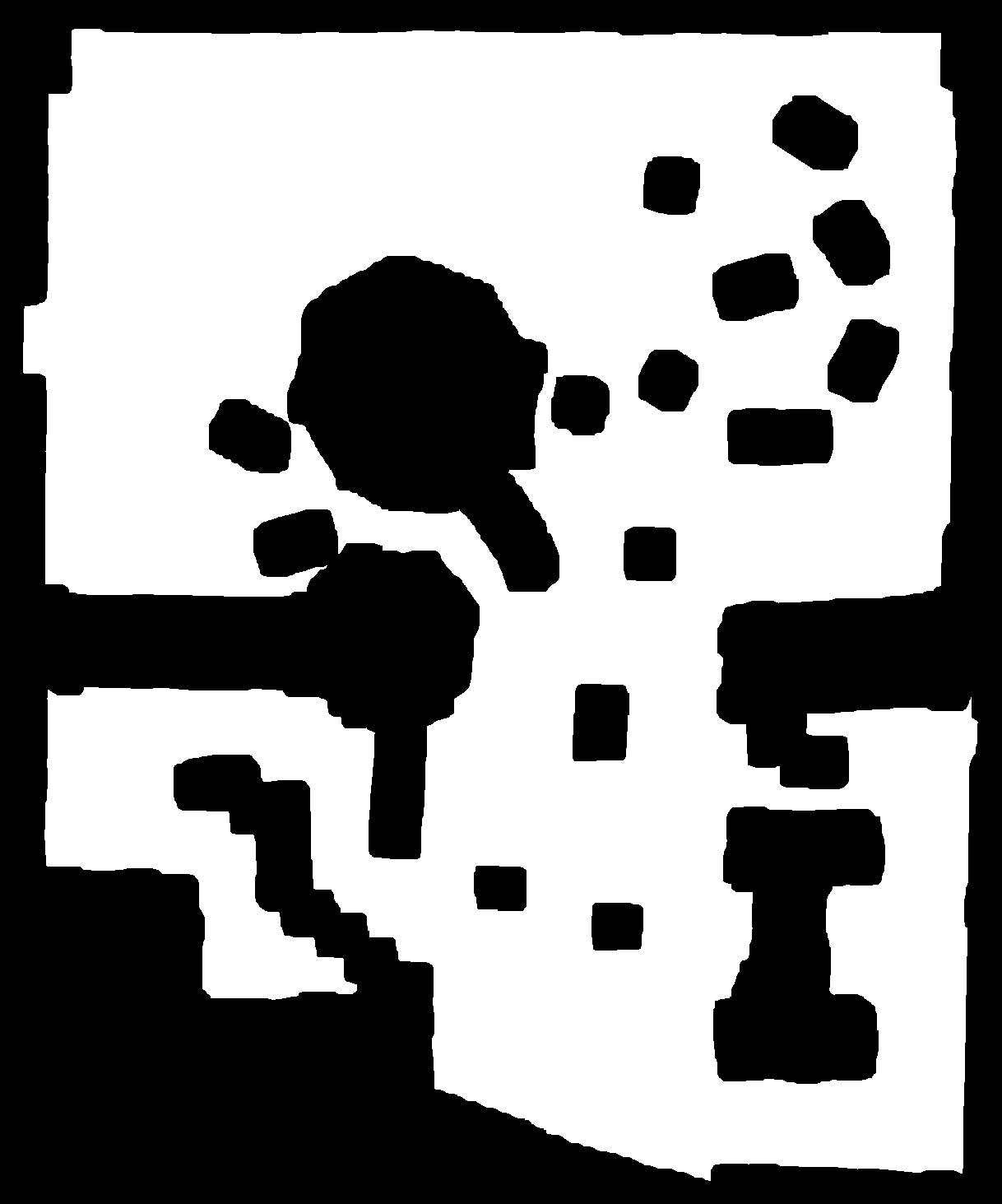}
    \label{fig_RealWorld_3}}
  \caption{Illustration of a real-world experimental site. (a) Photos of the exhibition hall. (b) 2D map of the exhibition hall created with cartographer. (c) Map inflated with robot radius.}
  \label{fig_RealWorld}
\end{figure}

To demonstrate the effectiveness of the proposed algorithms in practical applications, we present, hereunder, the results of experiments conducted in realistic scenarios. The experimental environment is shown in Fig.~\ref{fig_RealWorld}, where the test site was a $12 \times 14\ m^2$ exhibition hall and the resolution of the map was $1\ cm$. The robot system used in the experiment is shown in Fig.~\ref{fig_RobotSystem}. The system hardware is a self-designed wheeled mobile robot with a minimum external circle radius $r = 25\ cm$, maximum linear speed $v_{max} = 0.6\ m/s$, maximum angular speed $\omega_{max} = 1.57\ rad/s$, maximum acceleration $\dot{v} _{max} = 6.0\ m/s^2$, and maximum angular acceleration $\dot{\omega} _{max} = 15.7\ rad/s^2$. The robot used a fusion of laser, IMU and wheel tachometer for positioning. The movement control of the robot adopted a global and local motion planning strategy. The global path planning algorithms adopted RRT*, PRM* and CDT-RRT* algorithms, and the local path planning uniformly adopted the Dynamic Window Approach (DWA) method \cite{fox1997dynamic}. 

Considering that the global planning should ensure that the robot cannot touch the obstacles, we inflated the obstacles on the original map with radius $r$, as shown in Fig.~\ref{fig_RealWorld} (c).

\begin{figure*}[!t]
  \centering
  \includegraphics[width=6.5in]{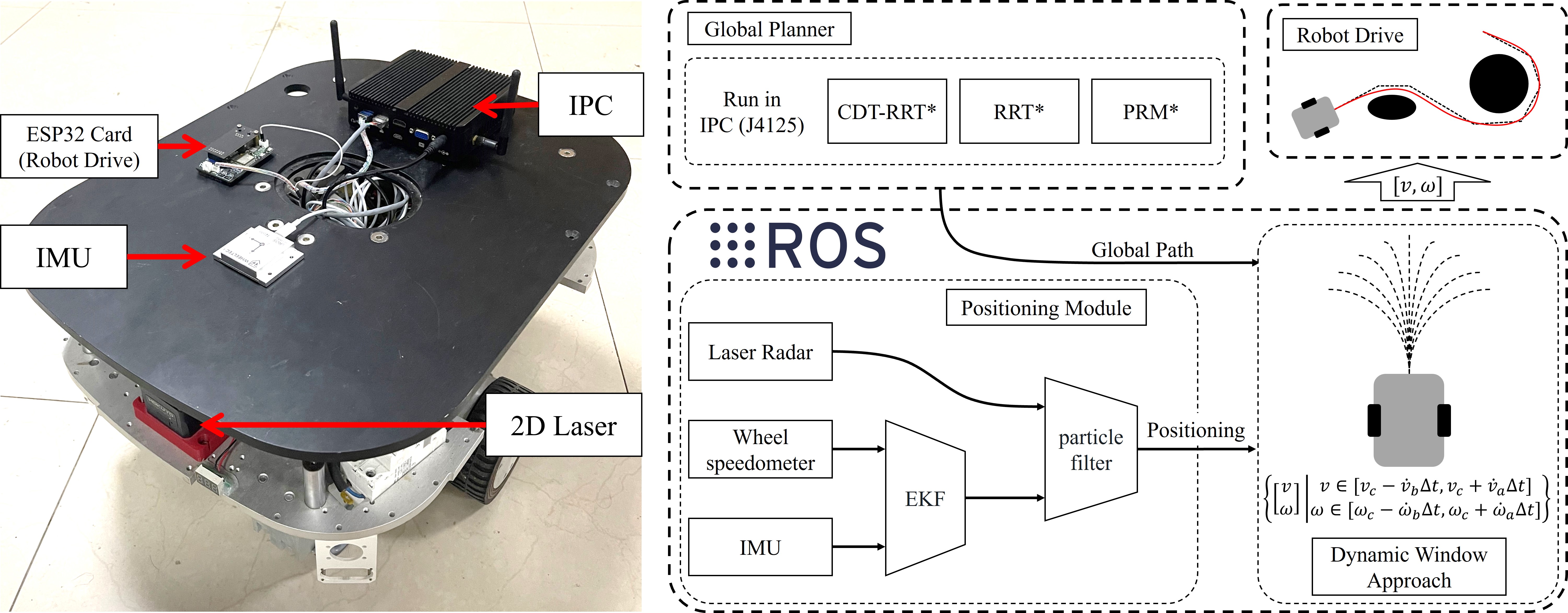}
  \caption{Robotic systems used in real-world experiments.}
  \label{fig_RobotSystem}
\end{figure*}

In the map we choose two representative planning tasks (approximate shortest non-homotopic paths and narrow passage) to test the three global planning algorithms. Because all three algorithms are optimal path planning algorithms, if the planning time is infinite, their final planning results must be equal. To comply with realistic demands of practical robotics applications, we set the upper time duration of global path planning to $0.5\ s$. \textbf{Table~\ref{tab:table7}} lists the data that we obtained from 10 repeated trials, and the results of two representative experiments are shown in Fig.~\ref{fig_RWtask}. Robots using the proposed algorithm for global path planning have shorter trajectory lengths and moving times for navigation compared to other algorithms. Furthermore, since the proposed algorithm was used to generate global paths for further motion planning, the real trajectories were generated by the DWA algorithm control. A key concern is to determine if the robot's trajectory and the optimal path belong to the same homotopy path class. From the experimental results, it is evident that when there are multiple approximate shortest non-homotopic paths, the trajectories planned by the RRT* and PRM* algorithms in the specified time had a higher probability of non-homotopy with the optimal path. Furthermore, using extreme narrow passage, optimal path planning could not be achieved. The robot trajectory using the proposed algorithm was always homotopic to the optimal path.
\begin{table}
  \begin{center}
    \caption{Real-World Experimental Data\label{tab:table7}}
    \centering
    \begin{tabular}{|c|c|c|c|c|c|}
      \hline
      \makecell*[c]{Experiment} & \makecell*[c]{Planning\\time ($ms$)} & \makecell*[c]{Moving\\time ($s$)}  & \makecell*[c]{Trajectory\\length ($m$)} & \makecell*[c]{Homotopy} \\
      \hline
      \multicolumn{5}{|c|}{\makecell*[c]{Approximate Shortest Non-Homotopic Paths Task}}\\
      \hline
      \makecell*[c]{RRT*}  & $475.5$  & $52.77$ & $19.76$ & $70\%$ \\
      \hline
      \makecell*[c]{PRM*}  & $461.6$  & $51.65$ & $19.66$ & $60\%$ \\
      \hline
      \makecell*[c]{CDT-RRT*}  & $5.32$  & $49.92$ & $18.65$ & $100\%$ \\
      \hline
      \multicolumn{5}{|c|}{\makecell*[c]{Narrow Passages Task}}\\
      \hline
      \makecell*[c]{RRT*}  & $500.0$  & $35.35$ & $15.21$ & $0\%$ \\
      \hline
      \makecell*[c]{PRM*}  & $500.0$  & $33.71$ & $15.13$ & $0\%$ \\
      \hline
      \makecell*[c]{CDT-RRT*}  & $4.03$  & $28.23$ & $13.37$ & $100\%$ \\
      \hline
    \end{tabular}
  \end{center}
\end{table}

\begin{figure}[!t]
  \centering
  \subfloat[]{\includegraphics[width=1.5in]{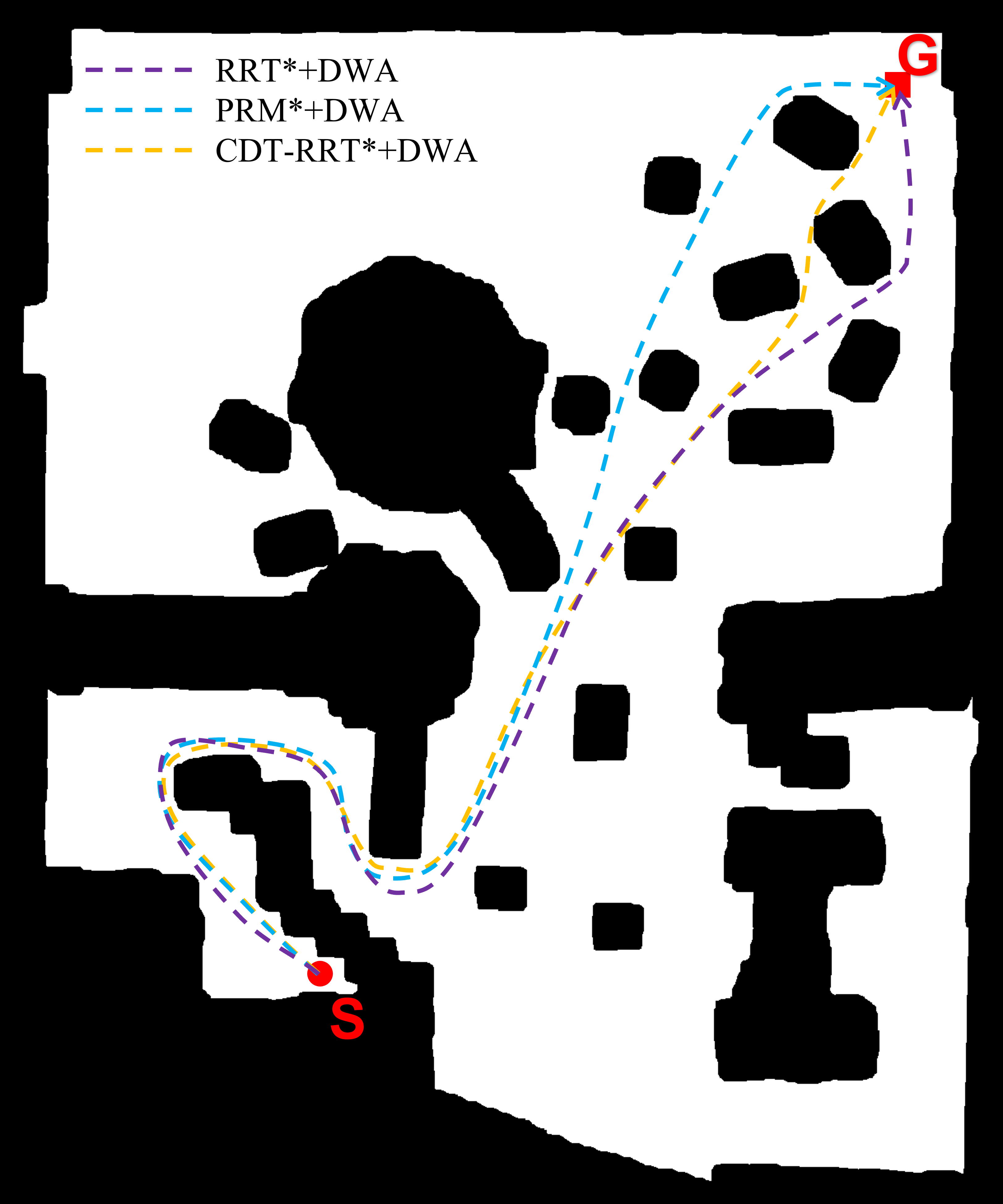}
    \label{fig_RWtask_1}}
  \subfloat[]{\includegraphics[width=1.5in]{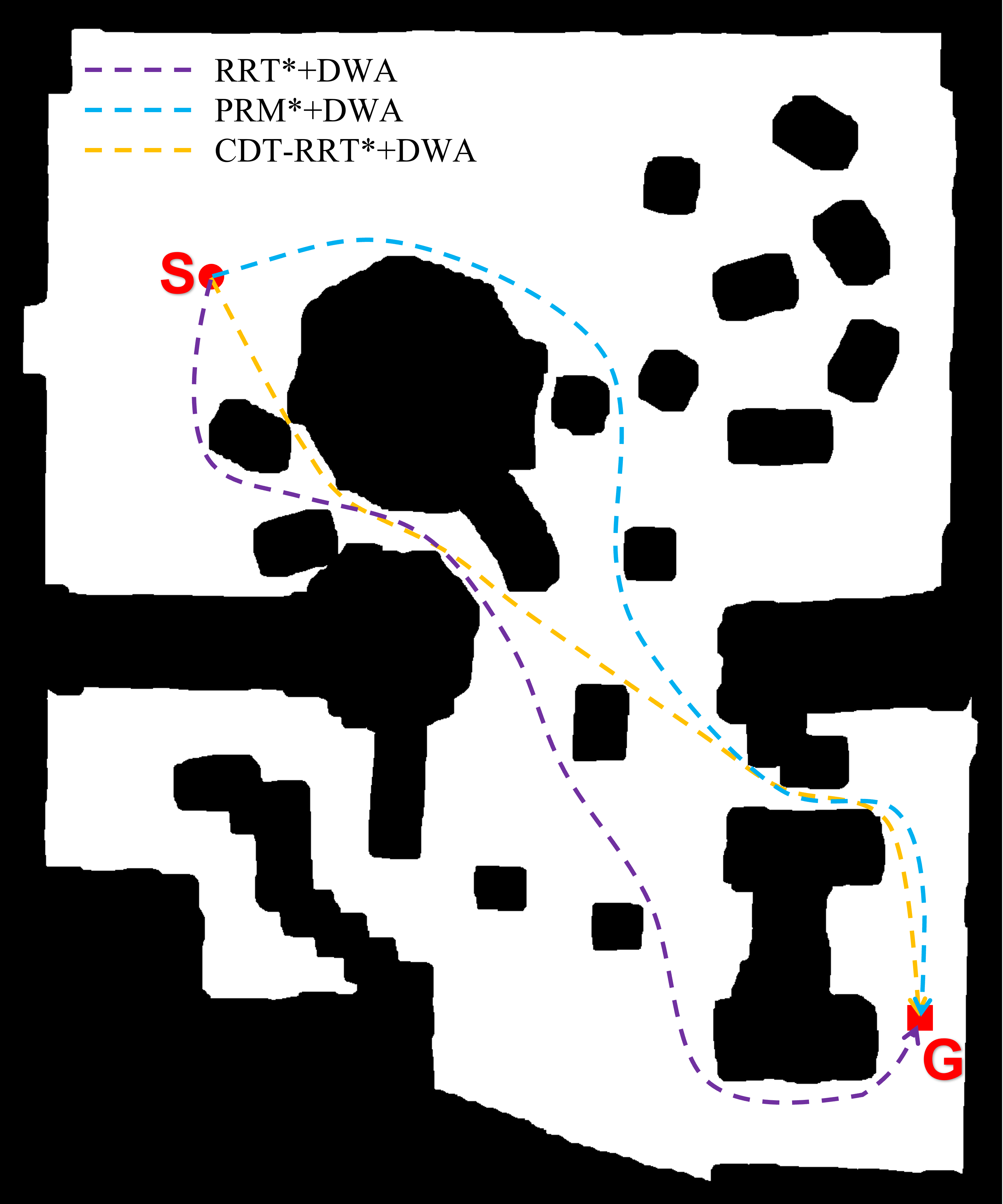}
    \label{fig_RWtask_2}}
  \caption{Robot trajectories using three different global path planning algorithms in two planning tasks. (a) Tasks where approximate shortest non-homotopic paths exist. (b) Tasks where narrow passages exist.}
  \label{fig_RWtask}
\end{figure}

\subsection{Discussions}
The results in the previous subsections demonstrate the efficiency of the proposed algorithms in a 2D connected space. However, most free spaces contain multiple connected components, as shown in Fig.~\ref{fig_8} (a). This can result in errors in the algorithm if the start and end points of the path planning task are not within the same connected component. This type of task is impractical; however, for some complex maps, the person assigning the task is often unable to determine in advance whether two points can be connected to each other.

This problem can be overcome by adding some logic to the algorithms. According to (\ref{eq7}), any 2D bounded free space can be expressed as a concatenation of multiple connected branches, such that we can fit such free space with multiple simple polygons, as shown in Fig.~\ref{fig_8} (b). For each simple polygon we can construct an independent topology graph based on convex division. The algorithm first determines whether the start and end points are in the same simple polygon each time the planning task is received, if not it implies that the two points cannot be connected, and the algorithms will not execute the subsequent sections and return a hint message.
\begin{figure}[!t]
  \centering
  \subfloat[]{\includegraphics[width=1.0in]{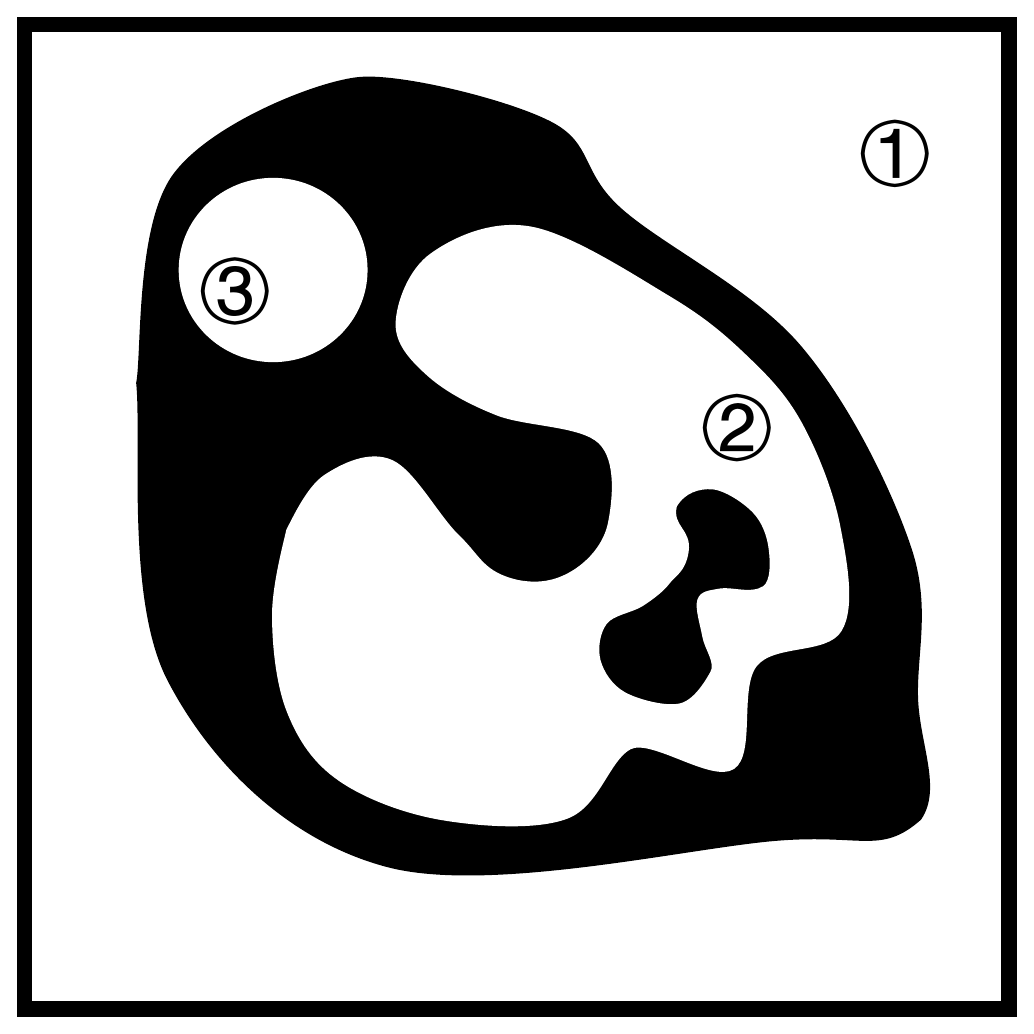}
    \label{fig_8_1}}
  \hfil
  \subfloat[]{\includegraphics[width=1.0in]{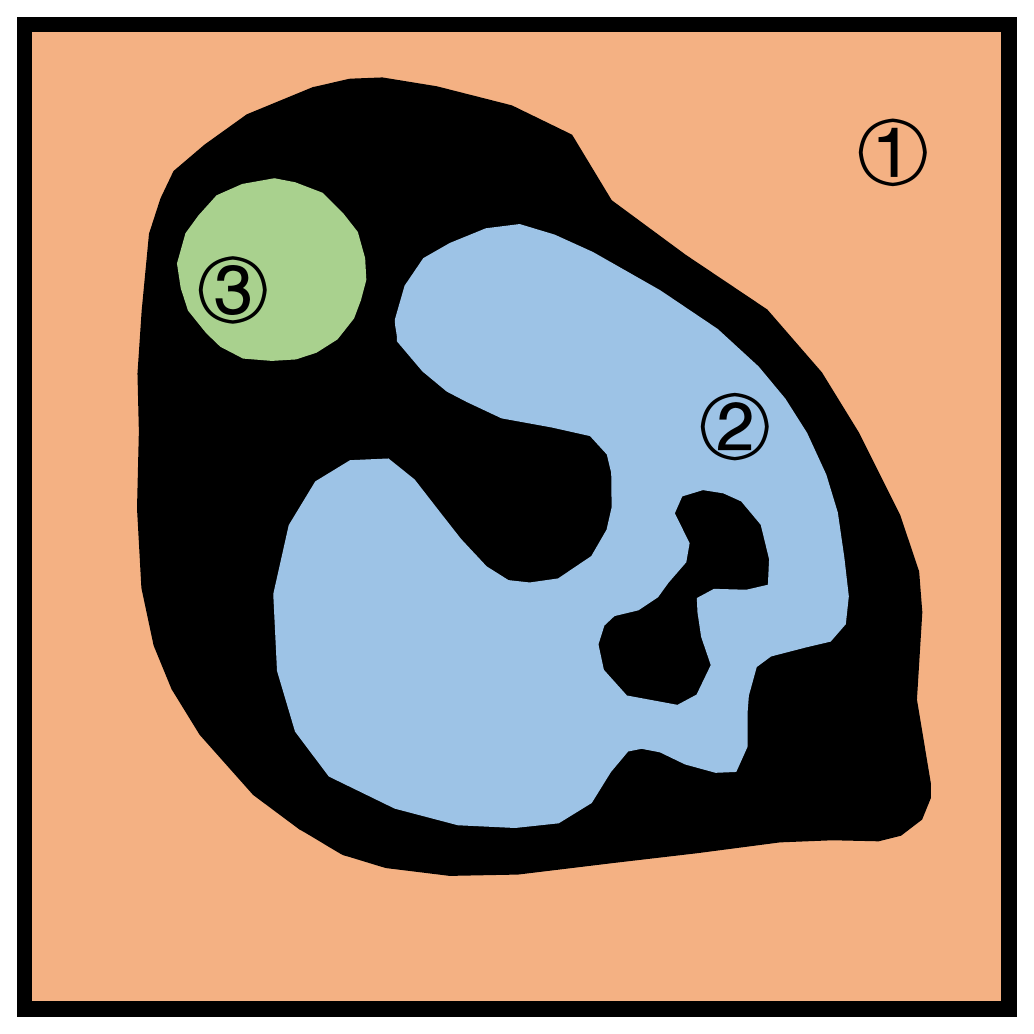}
    \label{fig_8_2}}
  \hfil
  \subfloat[]{\includegraphics[width=1.0in]{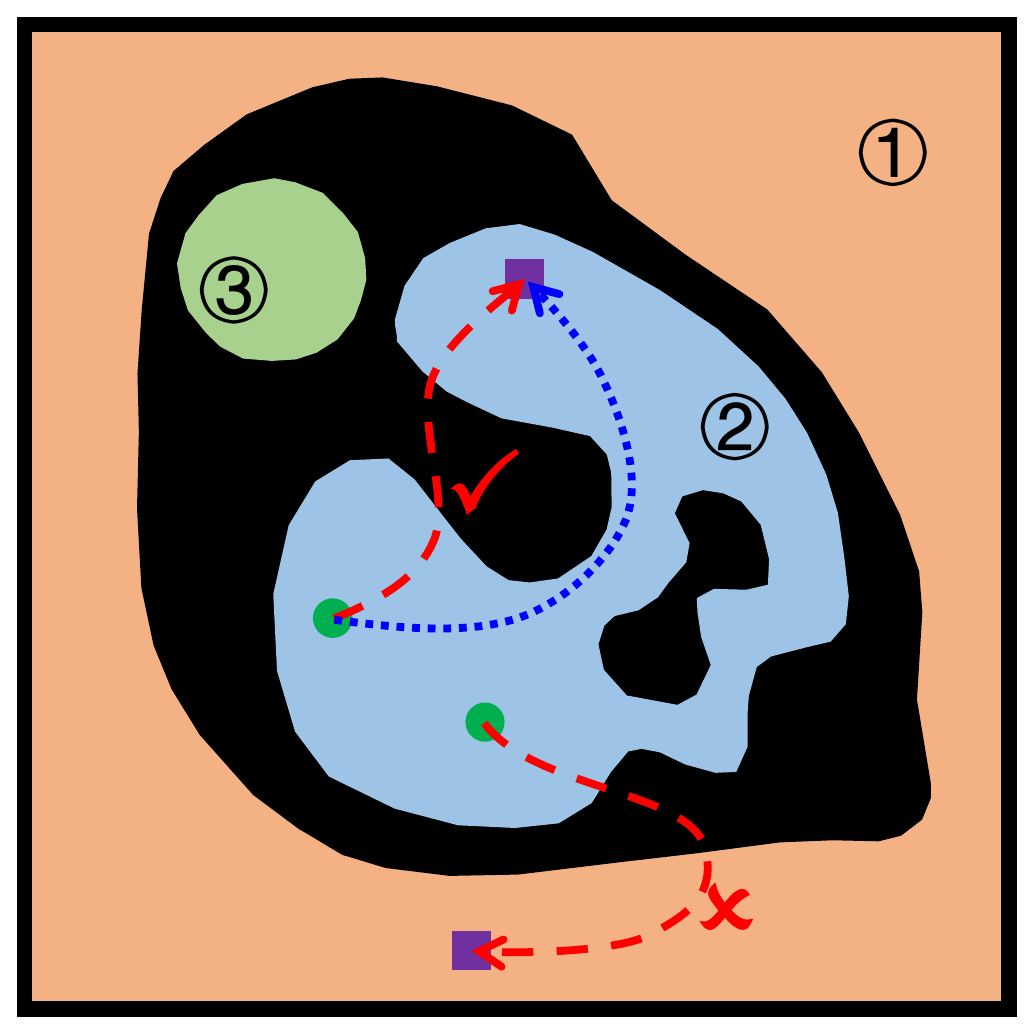}
    \label{fig_8_3}}
  \caption{(a) A free space with three connected branches. (b) Fitting to free space using three simple polygons. (c) Path planning is performed only when the start and end points are in the same polygon.}
  \label{fig_8}
\end{figure}
\section{Conclusion}
In this paper, an efficient optimal path planning framework has been proposed for mobile robot motion planning in 2D bounded environments. Firstly, based on the convex decomposition of the map, a special topological graph construction method is proposed to give a concise representation of map connectivity. On the basis of this topology graph, an encoder for the homotopy path class is proposed, proving its existence and uniqueness for the encoded values of all homotopy path classes between any two points and providing a feasible decoding method. Based on the convex decomposition of the map proposed in this article, a planning principle that considers only the points on the cutline is proposed, thus reducing the state space from 2D to 1D. Based on the above conclusions, an efficient optimal path planning algorithm CDT-RRT* are proposed, and we designed an efficient sampling formula for CDT-RRT*, which gives it a tendency to actively explore unknown homotopy classes. We conducted experiments and analyses on the topology construction, path planning efficiency, and performance of the algorithm in both simulation and real-world environments. The results verify that our proposed method can achieve satisfactory performance in 2D bounded environments. The application value of homotopy path class encoder in the field of path planning is proved.

As a simple application example, CDT-RRT* has achieved good results in experiments; however the encoder proposed in this paper has scope for improved application. The scope for future work can follow two directions. One is to design search algorithms for homotopy path classes with better performance based on the encoder in this paper, e.g., In learning-based algorithms, the encoder is used to record the collected homotopy classes of paths, and then to learn high-level features of the agent's motion in the environment using a learning-based method. The other is to improve the convex dissection method of the map, e.g., improving the speed of the dissection algorithm to make the present framework suitable for real-time path planning in dynamic environments \cite{naderi2015rt}. Alternatively, the framework can be applied to 2D manifold surfaces in high-dimensional spaces \cite{li2016novel} to broaden the applicability of the algorithm to provide efficient path planning services for more types of mobile robots \cite{hooks2020alphred}.

\bibliographystyle{IEEEtran}
\bibliography{IEEEabrv,uref}






\vfill

\end{document}